\documentclass[letterpaper, 10 pt, conference]{ieeeconf}  % Comment this line out
\IEEEoverridecommandlockouts                              % This command is only
\title{\LARGE \bf
Learning from Local Walks on Dynamic Graphs with Bandit Feedback
}

\author{Sourav Chakraborty$^*$, Amit Kiran Rege$^*$, Claire Monteleoni, Lijun Chen% <-this % stops a space
\thanks{$^*$Equal Contribution}% <-this % stops a space
\thanks{All authors are with the Department of Computer Science,
        University of Colorado, Boulder, CO 80309, USA
        {\tt\footnotesize \{sourav.chakraborty, amit.rege, cmontel, lijun.chen\}@colorado.edu}}%
\thanks{Claire Monteleoni is also with INRIA, Paris, France.}
}

 \usepackage[
backend=biber,
style=numeric-verb,
 ]{biblatex}
 \usepackage[
colorlinks=true,
  citecolor=red,
  urlcolor=black,
  linkcolor=blue
]{hyperref} 

\bibliography{refs}
\usepackage{subcaption}
\usepackage{svg}
\usepackage{caption}
\let\labelindent\relax
\usepackage{url}            % simple URL typesetting
\usepackage{booktabs}       % professional-quality tables
\usepackage{amsfonts}       % blackboard math symbols
\usepackage{nicefrac}       % compact symbols for 1/2, etc.
\usepackage{microtype}      % microtypography
\usepackage{xcolor}         % colors

\usepackage{algorithm}

\usepackage{amssymb}		% to get all AMS symbols
\usepackage{graphicx}		% to insert figures
\usepackage{amsmath}
\usepackage{amsthm}

\usepackage{algorithm}
\usepackage{algpseudocode}
\usepackage{amsmath}

\usepackage{comment}
\usepackage{multirow}
\usepackage{enumitem}
\usepackage{subcaption}
\usepackage{tikz}
\usepackage[dvipsnames,table,xcdraw]{xcolor}

\newtheorem{theorem}{Theorem}
\newtheorem*{theorem*}{Theorem}

\newtheorem{lemma}{Lemma}
\newtheorem*{lemma*}{Lemma}

\newtheorem{definition}{Definition}
\newtheorem*{definition*}{Definition}

\newtheorem{corollary}{Corollary}
\newtheorem*{corollary*}{Corollary}

\newtheorem*{example*}{Example}

\newtheorem{proposition}{Proposition}
\newtheorem*{proposition*}{Proposition}

\newtheorem*{claim*}{Claim}

\newtheorem*{remark*}{Remark}

\newtheorem*{conjecture*}{Conjecture}

\newtheorem*{assumption*}{Assumption}

\usepackage{amsmath}  % for \frac
\usepackage{xspace}   % for smart spacing

\usepackage{pifont}
\usepackage{float}

\usepackage{tikz}
\usetikzlibrary{calc,positioning,fadings}

\definecolor{darkblue}{RGB}{0, 30, 90}
\definecolor{framegrey}{RGB}{200, 200, 200}
\definecolor{fadeblue}{RGB}{230, 240, 255}
\definecolor{graphgrey}{RGB}{140, 140, 140}

\usepackage{amsmath,amssymb,amsthm,mathtools}

\usepackage[nameinlink]{cleveref}  % nameinlink ensures entire "Theorem 1.1" is linked
\crefname{theorem}{Theorem}{Theorems}
\Crefname{theorem}{Theorem}{Theorems}

\crefname{lemma}{Lemma}{Lemmas}
\Crefname{lemma}{Lemma}{Lemmas}

\crefname{proposition}{Proposition}{Propositions}
\Crefname{proposition}{Proposition}{Propositions}

\crefname{assumption}{Assumption}{Assumptions}
\Crefname{assumption}{Assumption}{Assumptions}

\crefname{algorithm}{Algorithm}{Algorithms}
\Crefname{algorithm}{Algorithm}{Algorithms}

\crefname{figure}{Figure}{Figures}
\Crefname{figure}{Figure}{Figures}

\crefname{corollary}{Corollary}{Corollaries}
\Crefname{corollary}{Corollary}{Corollaries}

\crefname{remark}{Remark}{Remarks}
\Crefname{remark}{Remark}{Remarks}

\crefname{definition}{Definition}{Definitions}
\Crefname{definition}{Definition}{Definitions}

\crefname{section}{Section}{Sections}
\Crefname{section}{Section}{Sections}

\crefname{subsection}{Section}{Sections}
\Crefname{subsection}{Section}{Sections}

\defbibheading{centered}[\refname]{%
  \centering\section*{#1}%
  \addcontentsline{toc}{section}{#1}%
  \markboth{#1}{#1}%
}

\begin{document}

\maketitle
\thispagestyle{empty}
\pagestyle{empty}

\begin{refsection}

% \begin{abstract}
% We study stochastic multi-armed bandits on dynamic graphs, where a learner can play \sourav{Need better narrative flow. haven't defined the nodes/arm equivalence etc.} only its current node or a current neighbor. In this setting, finding the best arm is not enough: after exploration ends, the learner may still need many rounds to reach it through a changing graph. We therefore ask when the graph's own local walk is enough for both exploration and post-exploration navigation, and we give a condition that keeps this walk stable over time and rules out long stretches of poor connectivity. Under this condition, we analyze three explore-then-commit algorithms and prove sublinear expected regret, including a reward-aware variant with a separate gain theorem.
% \end{abstract}

\begin{abstract}
We study stochastic multi-armed bandits on dynamic graphs, where arms correspond to the vertices of a network with time-varying edges. In this setting, the learner is restricted to local movement, selecting only its current node or an immediate neighbor at each round. This constraint decouples best-arm identification from exploitation: even after the optimal arm is identified, the learner may remain unable to reach it through the evolving topology. We identify a process-agnostic structural condition, based on sliding-window mixing, that ensures the graph's intrinsic walk remains stable for both exploration and navigation. Under this regime, we analyze a family of local explore-then-commit algorithms and establish sublinear expected regret. Our framework includes a reward-aware strategy, for which we prove a worst-case safety theorem and a separate performance gain theorem.
\end{abstract}

\section{Introduction}
\label{sec:introduction}

% In the stochastic multi-armed bandit problem, a learner repeatedly chooses among several actions, observes only the reward of the chosen action \sourav{might need to show arm equivalence as well for future.}, and seeks to keep its cumulative regret small \sourav{not defined regret, so we can stick with maximize reward here and then later we can show the equivalence.}\cite{auer2002finite,lattimore-szepesvari2020,slivkins2019introduction}. Classical bandit models assume that every arm can be chosen at every round. This is too strong in many networked settings. A mobile collector in a sensor network, a search routine on a peer-to-peer overlay, or a patrol agent moving through a changing environment cannot jump to an arbitrary location in one step \cite{lima-barros2007sensorwalk,gkantsidis-mihail-saberi2004p2p,basilico2022patrolling} \sourav{its slightly ambiguous here. need better explainations of the examples}. Its next action is limited by where it currently is and by which edges are presently available.

In the stochastic multi-armed bandit (MAB) problem, a learner repeatedly chooses among several actions (interchangeably referred to as \textit{arms}) observes only the reward of the selected action, and seeks to maximize its cumulative reward (\cite{auer2002finite,lattimore-szepesvari2020,slivkins2019introduction}). Classical bandit models typically assume that every arm is available for selection at every round. This assumption, however, is too strong for many networked settings where movement is restricted and inherently local. For instance, a mobile data collector in a sensor network is constrained by physical proximity, requiring it to be near a specific node to harvest its data \cite{lima-barros2007sensorwalk}; a search routine on a peer-to-peer overlay is confined to traversing logical links between neighboring peers \cite{gkantsidis-mihail-saberi2004p2p}; and a patrol agent is restricted to navigable pathways that may open or close as the environment changes \cite{basilico2022patrolling}. In these scenarios, the learner cannot jump to an arbitrary location; instead, its next action is strictly determined by its current position and the connectivity available at that moment.

This motivates the dynamic graph bandit problem, where arms correspond to vertices in a network with time-varying edges. The learner observes only its local neighborhood, moves along locally available edges, and receives feedback solely from visited nodes. Consequently, the evolving topology governs both immediate reward availability and future reachability. This fundamentally differs from feedback-graph bandits, where graphs encode side-observations rather than movement constraints (\cite{mannor-shamir2011,alon2017feedback}), and from static-graph bandits, where fixed, globally known topologies permit pre-planned navigation (\cite{zhang-johansson-li2023graphbandit,paschalidis-zhang-li2024multigraphbandit}).

A core challenge in this setting is that best-arm identification is decoupled from exploitation. In classical bandits, identifying the optimal arm immediately enables perpetual exploitation. Here, a learner might identify the best arm but remain temporarily unable to reach it. Learnability thus requires the environment to consistently provide navigable pathways; otherwise, the learner could be structurally isolated indefinitely. Importantly, standard full-horizon
  conditions, such as the connectivity of the union of all graphs, are insufficient, as a network might be connected in aggregate but disconnected at almost every individual step. A valid structural condition must therefore be
  \textit{shift-invariant}, holding from any arbitrary round. This guarantees navigability during the post-exploration phase, regardless of the random stopping time at which exploration concludes. We define a condition that meets these criteria (see Section  \ref{subsec:walk-centric-condition}), avoiding the pitfalls of whole-horizon summaries.

To isolate the distinct challenges of information gathering and subsequent movement, we adopt the explore-then-commit framework. Our analysis centers on the canonical walk (see \eqref{eq:walk_matrix}), where the learner either remains at its current node or moves uniformly to an available neighbor. We prioritize this simple local kernel because it is entirely decentralized and allows us to address a fundamental question:\textit{ when is a graph’s intrinsic walk sufficient for learning?} While more complex kernels can be useful, the canonical walk serves as a transparent analytical probe that exposes the structural constraints of the environment before asking whether reward-biased movement yields further gains. This choice makes the dichotomy between statistical identification and navigation explicit in both algorithms and proofs.

% Our answer is based on a condition on the above walk. Roughly speaking, the walk should have the same long-run behavior at every round (mixing), and every short time window should contain enough rounds that do not trap the walk in one small part of the graph. It also matches the positive regime identified in the dynamic random-walk literature, where walks with a common stationary distribution behave much more like walks on a fixed graph \cite{avin-koucky-lotker2008,olshevsky-tsitsiklis2013,sauerwald-zanetti2019,shimizu-shiraga2022}. The condition naturally covers several fixed-degree dynamic graph families, such as regular overlay schedules and degree-preserving rewiring.

To formalize shift-invariant navigability, we impose a condition directly on the canonical walk. Intuitively, we require the walk to maintain a consistent long-run stationary distribution, and we require that every short time window contains enough ``well-connected" rounds to prevent the learner from becoming trapped in a local subgraph. We formalize this as a \textit{common-stationary sliding-window mixing} property. By ensuring a stable stationary distribution, we align with positive regimes in the dynamic random-walk literature, where such walks mimic their static counterparts (\cite{avin-koucky-lotker2008,olshevsky-tsitsiklis2013,sauerwald-zanetti2019,shimizu-shiraga2022}). This encompasses natural fixed-degree processes, such as rotating regular overlays and degree-preserving rewirings.

Within this regime, we analyze a family of local explore-then-commit algorithms that utilize walk-based exploration with both fixed and adaptive stopping rules (see Section \ref{sec:problem_and_principle}). Our analysis extends to reward-aware movement, where we formally decouple worst-case safety from provable performance gains. Unlike model-specific analyses \cite{fmab2026}, we establish process-agnostic structural conditions (see Section \ref{sec:algorithms_and_analysis}) that ensure learnability for any such walk-based approach. 
Due to space constraints, extended related work, technical derivations, and complete proofs are deferred to the appendix.

% Due to space constraints, extended related work, technical derivations, and complete proofs are deferred to the full version \cite{anonymous_cdc_graph_2026}.

%\input{Related Work/r1}

\section{Model and Problem Formulation}
\label{sec:problem_and_principle}

This section defines the learning problem, fixes the information pattern, and introduces the structural condition used throughout the paper.

\subsection{Bandit learning on dynamic graphs}
\label{subsec:dynamic-graph-bandits}

Let $A=\{1,\dots,n\}$ be a finite set of arms, where each $a\in A$ is associated with a reward distribution $\mathcal D(a)$ supported on $[0,1]$ with mean $\mu(a)$. Rewards are i.i.d. across time and independent of the graph process. We assume a unique optimal arm $a^\star \triangleq \arg\max_{a\in A}\mu(a)$ and define the suboptimality gaps as $\Delta(a)\triangleq\mu(a^\star)-\mu(a)$ for $a\neq a^\star$. At each discrete round $t\ge 1$, the environment presents the learner an undirected graph $G_t=(A,E_t)$. The learner begins at node $a_{t-1}$ and observes only the local neighborhood $L_t(a_{t-1}) \triangleq N_t(a_{t-1})\cup\{a_{t-1}\}$, where $N_t(u)\triangleq\{v\in A:(u,v)\in E_t\}$. The graph process is oblivious: the sequence $\{G_t\}_{t\ge 1}$ is independent of the learner's past actions and randomization. The learner chooses $a_t\in L_t(a_{t-1})$, moves to that node, and receives reward $r_t \sim \mathcal D(a_t)$. We measure performance by the regret $R(T)\triangleq\sum_{t=1}^T (\mu(a^\star)-r_t)$, whose expectation matches the standard pseudo-regret $\mathbb E[R(T)] = \sum_{t=1}^T \mathbb E[\mu(a^\star)-\mu(a_t)]$. Our objective is to design a local policy that is learnable, meaning it achieves sublinear expected regret: $\lim_{T\to\infty} \mathbb E[R(T)]/T = 0$. While the full graphs $\{G_t\}$ are used for analysis, the policy remains strictly local, seeing only the current neighborhood $L_t(a_{t-1})$.

\subsection{Why the condition must be stated on the walk}
\label{subsec:walk-centric-condition}

\begin{comment}
Local movement makes learnability sensitive to the temporal arrangement of edges. A connected union over the whole horizon is not enough: the learner can still be trapped away from $a^\star$ for a linear number of rounds. Appendix~\ref{app:weaker-conditions-fail} gives a simple construction. A second obstruction is that a whole-horizon count of good blocks is also too weak. Even if a fixed fraction of blocks are individually well connected, all of those blocks may occur at the very end of the horizon. In that case the learner again suffers linear regret before the graph ever becomes useful. Appendix~\ref{app:weaker-conditions-fail} gives this construction as well.

These counterexamples point to two requirements. The first is that the condition must control the actual time-inhomogeneous random walk, not only static union graphs. The second is that the condition must hold uniformly over time, so that it remains available after an arbitrary stopping time. This is particularly important for the navigation phase, which begins after exploration stops at a random time. 
\end{comment}

Local movement constraints make learnability highly sensitive to the temporal arrangement of edges. As discussed, connectivity of the horizon-wide union graph (all edges ever active) is insufficient, as a learner can be structurally isolated from $a^\star$ for a linear number of rounds despite global connectivity (see Appendix \ref{app:horizon-long}). Similarly, a high density of time windows (blocks) with good connectivity across the full horizon is inadequate if those blocks appear only at the end (see Appendix \ref{app:global-fraction-insufficient}). These failures necessitate two requirements: (i) the condition must directly control the induced time-inhomogeneous random walk rather than static graph summaries, and (ii) it must hold uniformly over time to remain valid after an
arbitrary stopping time. This shift-invariance is essential for the navigation phase, which initiates at a random time when exploration terminates.
%\sourav{not clear how those counterexamples lead to the abovementioned requirements. It could have led to having better conditions on the graph sequence as well.}

% The canonical walk associated with the current graph $G_t$ is
% \begin{equation}
% \label{eq:walk_matrix}
% U_t(u,v)
% =
% \begin{cases}
% \dfrac{1}{1+\deg_t(u)} & \text{if } v\in N_t(u)\cup\{u\},\\[6pt]
% 0 & \text{otherwise,}
% \end{cases}
% \end{equation}
% where $\deg_t(u)\triangleq|N_t(u)|$. This is the exact kernel used by the baseline algorithm studied below.

% For a reversible Markov kernel $P$ with stationary distribution $\pi$, let
% $1=\lambda_1(P)\ge \lambda_2(P)\ge \cdots \ge \lambda_n(P)\ge -1$
% denote its eigenvalues in nonincreasing order. We write $\operatorname{gap}_{\mathrm{abs}}(P)
% \triangleq
% 1-\max\{|\lambda_2(P)|,|\lambda_n(P)|\}$
% for its absolute spectral gap. For reversible chains this is the correct one-step contraction parameter on the subspace of $\pi$-mean-zero functions.

The canonical walk for graph $G_t$ is defined by the kernel:
\begin{equation}
\label{eq:walk_matrix}
U_t(u,v) = \begin{cases} (1+\deg_t(u))^{-1} & \text{if } v \in N_t(u) \cup \{u\}, \\ 0 & \text{otherwise,} \end{cases}
\end{equation}

where $\deg_t(u)\triangleq|N_t(u)|$. A large absolute spectral gap means the walk converges quickly to its stationary distribution $\pi$ from any starting point — informally, it means the learner cannot get stuck in
   one region of the graph for long.
   Since $G_t$ is undirected, $U_t$ is reversible with stationary distribution $\pi$ determined by the degree sequence: each arm $a$ is visited with long-run frequency $\pi(a)$. For a reversible kernel $P$ with eigenvalues $\lambda_1(P)\ge\cdots\ge\lambda_n(P)$, the absolute spectral gap $\operatorname{gap}_{\mathrm{abs}}(P)\triangleq 1-\max{|\lambda_2(P)|,|\lambda_n(P)|}$ controls how quickly the walk approaches this visit distribution from any starting node — a larger gap guarantees faster coverage of all arms. 

\begin{figure}[t]
    \centering
    \resizebox{\columnwidth}{!}{
        \begin{tikzpicture}

            % --- Main Diagram Title ---
            \node[font=\bfseries] (title) at (0, 3.9) {Time Window $W=5$};

            % --- Circles and Content Logic ---
            \foreach \i in {1, ..., 5} {
                % Spacing at 1.8cm is the "sweet spot" for a single column
                \pgfmathsetmacro{\xpos}{(\i - 3) * 1.8}
                
                \begin{scope}[xshift=\xpos cm, yshift=2.5cm] 
                    
                    % Draw the frame circle
                    \draw[framegrey, thin] (0,0) circle (0.8cm);
                    \node[anchor=north] at (0, -0.9) {$t_\i$};
                    
                    % Define node coordinates
                    \foreach \n in {1,...,12}{ \coordinate (outer\n) at (\n*30:0.75cm); }
                    \foreach \n in {1,...,8}{ \coordinate (middle\n) at (\n*45:0.45cm); }
                    \foreach \n in {1,...,4}{ \coordinate (inner\n) at (\n*90:0.25cm); }
                    \coordinate (center) at (0,0);

                    % Mixing Logic (T1, T3 are blue)
                    \def\isdense{0}
                    \ifnum\i=1 \def\isdense{1} \fi
                    \ifnum\i=3 \def\isdense{1} \fi

                    \ifnum\isdense=1
                        % ==========================================
                        % DENSE/MIXING ROUNDS (T1, T3)
                        % ==========================================
                        \begin{scope}
                            \clip (0,0) circle (0.8cm); 
                            \node[circle, inner sep=0pt, minimum size=1.6cm, fill=fadeblue, path fading=west, opacity=0.8] at (0,0) {};
                            
                            % Draw thick blue edges for visibility
                            \draw[darkblue, thin] (inner1) -- (outer1) (inner1) -- (outer3);
                            \draw[darkblue, thin] (inner2) -- (outer4) (inner2) -- (outer6);
                            \foreach \n in {1,...,8}{\draw[darkblue, thin] (middle\n) -- (center);}
                            \foreach \n in {1,...,12}{
                                \pgfmathtruncatemacro{\nextn}{mod(\n,12)+1}
                                \draw[darkblue, thin] (outer\n) -- (outer\nextn);
                            }
                            
                            % BOLDER NODES: Increased radius for visibility
                            \foreach \n in {1,...,12}{\draw[fill=darkblue, darkblue] (outer\n) circle (0.05cm);}
                            \foreach \n in {1,...,8}{\draw[fill=darkblue, darkblue] (middle\n) circle (0.05cm);}
                            \draw[fill=darkblue, darkblue] (center) circle (0.03cm);
                        \end{scope}
                    \else
                        % ==========================================
                        % SPARSE ROUNDS (T2, T4, T5)
                        % ==========================================
                        \begin{scope}
                            \clip (0,0) circle (0.8cm); 
                            
                            % Draw a few sparse edges
                            \ifnum\i=2 \draw[graphgrey, thin] (outer1) -- (outer4) (middle2) -- (outer3); \fi
                            \ifnum\i=4 \draw[graphgrey, thin] (outer2) -- (outer5) (inner4) -- (center); \fi
                            \ifnum\i=5 \draw[graphgrey, thin] (outer10) -- (outer1) (inner3) -- (center); \fi

                            % BOLDER GREY NODES: Increased radius and solid color
                            \foreach \n in {1,...,12}{\draw[fill=graphgrey, graphgrey] (outer\n) circle (0.04cm);}
                            \foreach \n in {1,...,8}{\draw[fill=graphgrey, graphgrey] (middle\n) circle (0.03cm);}
                            \draw[fill=graphgrey, graphgrey] (center) circle (0.02cm);
                        \end{scope}
                    \fi
                \end{scope}
            }
        \end{tikzpicture}
    }
    \caption{Common-stationary sliding-window mixing ($W=5$, $\rho=0.4$). Mixing rounds (blue) provide recurring spectral gap contraction, ensuring learnability even after random stopping times.}
    \label{fig:mixing_window}
\end{figure}

% \begin{definition}[Common-stationary sliding-window mixing]
% \label{def:mixing-sequence}
% Fix $W\in\mathbb N$, $\rho\in(0,1]$, $\gamma\in(0,1]$, and a probability distribution $\pi$ on $A$. An infinite graph sequence $\{G_t\}_{t\ge 1}$ is called $(W,\rho,\gamma,\pi)$-mixing for the canonical walk if the following two properties hold.

% First, every kernel $U_t$ in \eqref{eq:walk_matrix} is reversible with stationary distribution $\pi$.

% Second, every interval of $W$ consecutive times contains at least $\lceil \rho W\rceil$ indices $t$ such that
% \[
% \operatorname{gap}_{\mathrm{abs}}(U_t)\ge \gamma.
% \]
% \end{definition}

\begin{definition}[Common-stationary sliding-window mixing]
\label{def:mixing-sequence}
Fix $W\in\mathbb N$, $\rho, \gamma\in(0,1]$, and a distribution $\pi$ on $A$. A graph sequence $\{G_t\}_{t\ge 1}$ is $(W,\rho,\gamma,\pi)$-mixing for the canonical walk if: (i) each kernel $U_t$ in \eqref{eq:walk_matrix} is reversible with stationary distribution $\pi$, and (ii) every interval of $W$ consecutive rounds contains at least $\lceil \rho W\rceil$ indices $t$ such that $\operatorname{gap}_{\mathrm{abs}}(U_t)\ge \gamma$.
\end{definition}

Intuitively, this condition ensures that within any time window of length $W$, there is at least a fraction $\rho$ of ``well-connected'' rounds (where the spectral gap is at least $\gamma$). This guarantees the walk continuously mixes
toward a stable stationary distribution $\pi$ (see Figure \ref{fig:mixing_window}). For the canonical walk, a common stationary distribution arises naturally whenever all snapshots share the same degree sequence; in particular, $d$-regular snapshots yield a uniform $\pi$ (formalized in Appendix \ref{app:mixing-condition}). An oblivious graph process is $(W,\rho,\gamma,\pi)$-mixing if its realized sequence satisfies Definition \ref{def:mixing-sequence} almost surely. Rather than a universal requirement for all dynamic graphs, this identifies a stable regime where the natural local walk is robust enough to support a process-agnostic learnability theorem.

\subsection{Scope and model instantiations}
\label{subsec:scope-instantiations}

% Definition~\ref{def:mixing-sequence} can be understood as a characterization of when the canonical walk itself is a sound exploration primitive. In dynamic random-walk theory one can obtain broader positive results by changing the walk, for example by using Metropolis-type kernels that enforce a time-homogeneous stationary distribution by design~\cite{shimizu-shiraga2022}. That is a different question. The present paper asks when the simplest local rule already attached to the graph, namely the canonical walk in \eqref{eq:walk_matrix}, is enough for sublinear regret. We study this regime first because it isolates the structural content of the problem before introducing more engineered kernels.
Definition \ref{def:mixing-sequence} characterizes when the canonical walk serves as a robust exploration primitive. While dynamic random-walk theory offers broader results via engineered Metropolis-type kernels that enforce stationarity by design \cite{shimizu-shiraga2022}, we prioritize the simplest local rule embedded in the graph: the canonical walk \eqref{eq:walk_matrix}. This focus isolates the fundamental structural constraints of the problem before considering more complex, task-specific kernels.

% Furthermore, a whole-horizon count of good blocks is not preserved under truncation and says nothing about the random suffix that remains after exploration stops. Our condition is formulated directly on the walk and is shift-invariant by construction. The same assumption therefore remains available after a stopping time, which is exactly what the navigation phase needs.
Furthermore, unlike whole-horizon block counts, which are not preserved under truncation and fail to characterize the random suffix following exploration, our walk-centric condition is shift-invariant by construction. This ensures that the necessary structural assumptions remain available throughout the navigation phase, regardless of the random stopping time generated by the learner’s exploration.

% Several natural classes of graph processes fit this regime. The simplest is the regular case, in which every snapshot is $d$-regular and the common stationary distribution is uniform. This covers time-varying regular communication overlays, rotating regular patrol schedules, and periodic sequences of bounded-degree expander snapshots. A second class is fixed-degree rewiring, where all snapshots share the same degree sequence but not necessarily the same edges. Degree-preserving switch dynamics and related rewiring models generate this type of evolution and are standard tools for sampling or randomizing graphs with prescribed degrees~\cite{feder2006localswitch,stauffer-barbosa2005,fosdick2018configuring}. A third class is deterministic or stochastic schedules built from a finite family of fixed-degree graphs, such as alternating matchings or overlay rotations, provided every sliding window contains a positive density of snapshots with a uniform absolute spectral gap. Appendix~\ref{app:mixing-condition} makes the common-stationary implication of a fixed degree sequence precise.
Several natural graph processes fit this regime. The simplest is the regular case, where each snapshot is $d$-regular and $\pi$ is uniform, covering time-varying communication overlays, rotating patrol schedules, and periodic expanders. A second class is fixed-degree rewiring, where snapshots share a degree sequence but differ in edge topology. This includes degree-preserving switch dynamics, which are standard for sampling graphs with prescribed degrees (\cite{feder2006localswitch,stauffer-barbosa2005,fosdick2018configuring}). A third class consists of deterministic or stochastic schedules built from finite families of fixed-degree graphs, such as alternating matchings or overlay rotations, provided every sliding window maintains a positive density of snapshots with a uniform spectral gap. Appendix \ref{app:mixing-condition} details how fixed degree sequences imply a common stationary distribution.

Recent process-specific works analyze models such as i.i.d.\ Erdős–Rényi or edge-Markovian processes (where the stationary distribution of walk-kernel shifts with realized degrees) by exploiting their particular distributional structure (\cite{fmab2026}). This paper is complementary to that line; it trades model specificity for a process-agnostic structural theorem in the regime where the canonical walk remains analyzable. This scope determines the performance benchmark. The following explore-then-commit procedures prove structural learnability by decomposing the problem into coverage, estimation, and navigation under dynamic local movement. They are not presented as gap-optimal regret-minimization procedures for easy static graphs; on a complete graph, where movement constraints disappear, classical UCB-style methods and static graph-bandit algorithms attain sharper regret guarantees (\cite{auer2002finite,lattimore-szepesvari2020,zhang-johansson-li2023graphbandit}). The contribution here is a clean characterization of a dynamic regime in which the canonical walk supports sublinear regret via algorithms whose analysis separates the roles of learning and navigation.

\section{Exploration, Commitment, and Navigation}
\label{sec:algorithms_and_analysis}

% The central difficulty in this problem is the separation between identification and exploitation. In an ordinary multi-armed bandit, once the learner identifies the best arm it can play that arm immediately. Here, identification alone is not enough. The learner may know which arm is best and still be unable to reach it quickly because movement is local and the graph is changing. This creates a second task, navigation, that does not appear in classical bandits.

% Our algorithms all follow the same two-phase template. During exploration the learner moves locally according to an exploration kernel and collects samples. Once it stops exploring, it commits to the empirically best arm and begins a navigation phase. If the target arm is currently available, the learner moves to it immediately. Otherwise it uses the canonical walk as a fallback navigation rule until the target becomes available or is reached by the walk. Since staying put is always allowed, the learner can then remain at the target arm for the rest of the horizon.

The fundamental challenge in this setting is decoupling identification from exploitation. In classical bandits, identifying the optimal arm grants immediate and perpetual access to it. On a dynamic graph, however, local movement necessitates a distinct navigation phase: even with perfect knowledge of the best arm, the learner might be temporarily unable to reach it. This introduces a structural delay cost absent in unconstrained environments.

To address this, our algorithms adopt a phased explore-then-commit framework. During exploration, the learner moves according to a specific kernel to gather reward samples. Upon termination at a stopping time, the learner commits to the empirical leader and initiates navigation. If the target arm is not immediately reachable, the learner employs the canonical walk as a fallback navigation rule until the target is attained. Because self-loops are always available, the learner remains at the target arm for the remainder of the horizon.

% The present section develops the fixed-budget baseline LEX. The later confidence-based variant keeps the same exploration kernel and changes only the stopping rule. Reward-biased variants change the kernel itself and require additional arguments, so we defer them to later sections. 

We first develop LEX, a fixed-budget baseline that establishes the core analysis for this problem. While the later confidence-based variant retains the same exploration kernel and modifies only the stopping rule, reward-aware versions necessitate the modified kernels discussed in subsequent sections. The LEX framework comprises three elements: an exploration kernel $K_t$ for local movement, a stopping criterion, and a navigation rule toward the empirical leader. During navigation, the learner moves directly to the target arm whenever it is available, otherwise falling back to the canonical walk $U_t$. This direct-move rule improves upon pure random-walk navigation and ensures the policy simplifies correctly to the unconstrained case on a complete graph.

% \begin{algorithm}
% \caption{Exploration, commitment, and local navigation}
% \label{alg:template}
% \begin{algorithmic}[1]
% \Require horizon $T$, initial node $a_0$, exploration kernels $\{K_t\}_{t\ge 1}$, stopping rule $\mathrm{Stop}$
% \State For each $a\in A$, set $\varphi_0(a)\gets 0$ and $\hat\mu_0(a)\gets 0$
% \State $t\gets 1$
% \While{$t\le T$ and $\mathrm{Stop}(t-1,\{\varphi_{t-1}(a),\hat\mu_{t-1}(a)\}_{a\in A})=0$}
%     \State Observe the current local action set $L_t(a_{t-1})$
%     \State Sample $a_t\sim K_t(a_{t-1},\cdot)$ supported on $L_t(a_{t-1})$
%     \State Observe reward $r_t\sim \mathcal D(a_t)$
%     \State Update $\varphi_t(\cdot)$ and $\hat\mu_t(\cdot)$ using $(a_t,r_t)$
%     \State $t\gets t+1$
% \EndWhile
% \State Set $\hat a^\star \in \arg\max_{a\in A}\hat\mu_{t-1}(a)$
% \While{$t\le T$}
%     \State Observe the current local action set $L_t(a_{t-1})$
%     \If{$\hat a^\star\in L_t(a_{t-1})$}
%         \State $a_t \gets \hat a^\star$
%     \Else
%         \State Sample $a_t\sim U_t(a_{t-1},\cdot)$
%     \EndIf
%     \State Observe reward $r_t\sim \mathcal D(a_t)$
%     \State $t\gets t+1$
% \EndWhile
% \end{algorithmic}
% \end{algorithm}

\begin{algorithm}[t]
\caption{Exploration, commitment, and local navigation}
\label{alg:template}
\begin{algorithmic}[1]
\Require horizon $T$, initial node $a_0$, exploration kernels $\{K_t\}_{t\ge 1}$, stopping rule $\mathrm{Stop}$
\State $\forall a \in A: \varphi_0(a) \gets 0, \hat\mu_0(a) \gets 0$; $t \gets 1$
\While{$t \le T$ and $\mathrm{Stop}(t-1, \{\varphi_{t-1}, \hat\mu_{t-1}\}) = 0$}
    \State Sample $a_t \sim K_t(a_{t-1}, \cdot)$ on $L_t(a_{t-1})$;
    \State Observe $r_t \sim \mathcal{D}(a_t)$
    \State Update $(\varphi_t, \hat\mu_t)$ via $(a_t, r_t)$; $t \gets t+1$
\EndWhile
\State Commit to $\hat a^\star \in \arg\max_{a\in A} \hat\mu_{t-1}(a)$
\While{$t \le T$}
    \State $a_t \gets \hat a^\star$ if $\hat a^\star \in L_t(a_{t-1})$, else $a_t \sim U_t(a_{t-1}, \cdot)$
    \State Observe $r_t \sim \mathcal{D}(a_t)$; $t \gets t+1$
\EndWhile
\end{algorithmic}
\end{algorithm}

Our proofs follow a three-step structure: coverage, estimation, and navigation. First, under Definition~\ref{def:mixing-sequence}, the walk's distribution over arms approaches $\pi$ within $\tau_0$ steps from any starting node or time  
  --- no arm is systematically avoided regardless of where exploration begins. Second, rather than analyzing every step (which are time-correlated), we examine positions spaced $\tau_0$ steps apart; at each such checkpoint arm $a$ is     
  visited with probability at least $\pi(a)/2$, and a Chernoff bound gives a high-probability lower bound on visits to every arm. Third, once each arm has enough visits, Hoeffding's inequality identifies $a^\star$. The navigation analysis
   re-applies the same argument to the random graph suffix after exploration ends. Letting $\pi_* \triangleq \min_{a \in A}\pi(a)$, Appendix~\ref{app:technical-lemmas} shows that \begin{equation}\label{eq:tau-mix-main}\tau_{\mathrm{mix}}(\varepsilon)\triangleq W+\left\lceil \frac{1}{\rho\gamma}\Bigl[\log\frac{1}{2\varepsilon\sqrt{\pi_*}}\Bigr]_+\right\rceil\end{equation} is a valid total-variation mixing time, uniform over all initial states and times.

\subsection{LEX: fixed-budget lazy exploration}
\label{subsec:lex_cblex}

LEX employs the canonical walk for exploration, setting $K_t = U_t$ for all $t$. Following a fixed budget $T_{\exp}$, the learner commits to the empirical leader. Because $T_{\exp}$ depends on both reward gaps and the structural parameters of Definition \ref{def:mixing-sequence}, LEX serves as a tuned baseline rather than an adaptive policy. Its primary utility is conceptual, as it isolates the statistical cost of learning from the structural cost of reaching the target arm. A key property of LEX is that its trajectory remains independent of observed rewards. The sequence of visited arms, and consequently the sample counts, is governed solely by the graph sequence and walk randomness. This independence ensures that the subsequent estimation step is mathematically transparent. The fixed-budget nature of LEX implies that identification guarantees are informative only when the horizon $T$ exceeds $T_{\exp}$. If $T < T_{\exp}$, the algorithm explores until the horizon concludes, where the trivial bound $R(T) \le T$ applies. This short-horizon case is incorporated into the expected-regret corollary, while the main theorem is stated for the regime $T \ge T_{\exp}$.

Theorem~\ref{thm:lex_high_prob} shows that with a budget of $\tau_0$ mixing-time blocks, each arm accumulates enough visits to concentrate its empirical mean and the optimal arm is identified correctly with high probability. The budget is governed by two quantities: the mixing overhead $\tau_0$ and the per-arm statistical difficulty $\Psi_{\mathrm{LEX}}$, both defined in the theorem.

\begin{theorem}[\textsc{Lex} under common-stationary mixing]
\label{thm:lex_high_prob}
Assume the graph sequence is $(W,\rho,\gamma,\pi)$-mixing for the canonical walk. Let $\tau_0 \triangleq W+\lceil \frac{2}{\rho\gamma}\log\frac{1}{\pi_*}\rceil$ and define the complexity:
\[
\Psi_{\mathrm{LEX}}(\delta) \triangleq \max\left\{ \frac{\log(2n/\delta)}{\pi(a^\star)\Delta_{\min}^2}, \max_{a\neq a^\star}\frac{\log(2n/\delta)}{\pi(a)\Delta(a)^2} \right\}.
\]
For a universal constant $C>0$, if \textsc{Lex} uses an exploration budget $T_{\exp} \triangleq \tau_0 \lceil C\Psi_{\mathrm{LEX}}(\delta) \rceil \le T$, then with probability at least $1-\delta$, the algorithm correctly identifies the optimal arm ($\hat a^\star=a^\star$) and the learning regret satisfies $R_{\mathrm{learn}}(T) \le T_{\exp}$.
\end{theorem}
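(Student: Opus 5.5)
The proof follows the three-step outline stated before the theorem: coverage, then estimation, then identification. Throughout we condition on the oblivious graph sequence, so that $\{G_t\}$ is a fixed deterministic sequence. We also use that under \textsc{Lex} the trajectory $(a_t)$ is independent of the rewards.

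\textbf{Step 1: coverage at checkpoints.} First, $\tau_0$ is a valid mixing time at accuracy $\varepsilon=\pi_*/2$. With this $\varepsilon$, the bracket in \eqref{eq:tau-mix-main} equals $\log\frac{1}{\pi_*^{3/2}}=\frac32\log\frac1{\pi_*}\le 2\log\frac1{\pi_*}$. Hence $\tau_0\ge\tau_{\mathrm{mix}}(\pi_*/2)$, and for every start time $s$ and state $u$,
\[
\bigl\|\delta_u U_{s+1}\cdots U_{s+\tau_0}-\pi\bigr\|_{\mathrm{TV}}\le \pi_*/2 .
\]
Now define checkpoints $t_k=k\tau_0$ for $k=1,\dots,m$, where $m=\lceil C\Psi_{\mathrm{LEX}}(\delta)\rceil$. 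Given the past up to $t_{k-1}$, the Markov property and the bound above give, for every arm $a$,
\[
\Pr(a_{t_k}=a\mid\mathcal F_{t_{k-1}})\ge\pi(a)-\pi_*/2\ge\pi(a)/2 .
\]
So the indicators $X_k=\mathbf 1\{a_{t_k}=a\}$ stochastically dominate independent Bernoulli$(\pi(a)/2)$ variables, by a standard coupling or martingale argument.

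\textbf{Step 2: visit counts.} A multiplicative Chernoff (Freedman) bound for these adapted indicators gives
\[
\Pr\bigl(\textstyle\sum_k X_k<m\pi(a)/4\bigr)\le\exp(-m\pi(a)/16).
\]
We have $m\pi(a)\ge C\log(2n/\delta)/\Delta^2\ge C\log(2n/\delta)$ for the relevant gap $\Delta\le1$. Choosing $C\ge16$ makes this failure probability at most $\delta/(2n)$. Visits at non-checkpoint times only increase the counts, so $\varphi(a)\ge m\pi(a)/4$ for all $a$ except on an event of probability $\le\delta/2$ (union bound over $n$ arms).

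\textbf{Step 3: estimation and identification.} Because the visit sequence is reward-independent, conditional on the trajectory the samples of each arm are i.i.d. Hoeffding therefore gives
\[
|\hat\mu(a)-\mu(a)|\le\sqrt{\log(4n/\delta)/(2\varphi(a))}
\]
with failure probability $\le\delta/(2n)$ per arm. We then plug in $\varphi(a)\ge m\pi(a)/4$ and the definition of $\Psi_{\mathrm{LEX}}$. The first term of the max controls $a^\star$ with $\Delta_{\min}$, and the second controls each $a\ne a^\star$ with $\Delta(a)$. For $C$ large, this gives deviation $<\Delta(a)/2$ for $a\ne a^\star$ and $<\Delta_{\min}/2$ for $a^\star$. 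The $\log(4n/\delta)$ versus $\log(2n/\delta)$ discrepancy is absorbed into $C$. On this event $\hat\mu(a^\star)>\mu(a^\star)-\Delta_{\min}/2\ge\mu(a)+\Delta(a)/2>\hat\mu(a)$, so $\hat a^\star=a^\star$. A final union bound gives total failure probability $\le\delta$.

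The claim $R_{\mathrm{learn}}(T)\le T_{\exp}$ is immediate because each exploration round contributes regret at most $1$, since rewards lie in $[0,1]$.

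\textbf{Main obstacle.} The delicate step is Step 1's dependence handling. The checkpoint indicators are not independent, and the chain is time-inhomogeneous. I would handle this by using the mixing bound conditionally from an arbitrary state and time, which is exactly what the uniform-in-time statement of \eqref{eq:tau-mix-main} provides. Then I would apply a martingale Chernoff bound (Azuma or Freedman) to $\sum_k\bigl(X_k-\Pr(X_k=1\mid\mathcal F_{t_{k-1}})\bigr)$, rather than claiming independence.
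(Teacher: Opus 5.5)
Your proposal is correct and follows essentially the paper's route. The paper likewise thins the walk at multiples of $\tau_0$ and uses uniform mixing at accuracy $\pi_*/2$ to get a conditional hit probability of at least $\pi(a)/2$. It then applies an adapted lower-tail Chernoff bound and finishes with Hoeffding, using the reward-independence of the LEX trajectory. Your per-arm use of $\Psi_{\mathrm{LEX}}$ (instead of reducing to $\pi_*$) and your data-dependent Hoeffding width are only bookkeeping differences.

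One caution: Azuma alone would give a tail of order $\exp(-c\,m\pi(a)^2)$, which the budget $\Psi_{\mathrm{LEX}}$ does not control when $\pi(a)$ is small. So you genuinely need the multiplicative route you state in Step 2. That means either stochastic domination by i.i.d. Bernoulli$(\pi(a)/2)$ variables, or the exponential-supermartingale argument behind the paper's adapted Chernoff lemma.
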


% The proof is short once the mixing lemma is available. We look only at the positions of the walk every $\tau_0$ steps. By construction, each thinned position is within total variation distance $\pi_*/2$ of $\pi$, no matter where the previous block started. Therefore every arm $a$ is visited at each thinned time with conditional probability at least $\pi(a)/2$. A one-sided Chernoff bound for adapted Bernoulli variables then shows that the number of such visits is linear in $m\pi(a)$ with high probability. Since the actual number of samples of arm $a$ is at least as large as the thinned \sourav{need to explain this term briefly, because it seems abrupt.} count, Hoeffding's inequality then concentrates the empirical mean of every arm, including the optimal arm. The resulting sample requirement is captured exactly by $\Psi_{\mathrm{LEX}}(\delta)$. The full proof is given in Appendix~\ref{app:lex-proof}.
The proof follows directly from the mixing lemma by analyzing the walk at fixed intervals of length $\tau_0$. This thinning procedure, which considers only every $\tau_0$-th step to decouple temporal dependencies, ensures that each sampled position is within total variation distance $\pi_*/2$ of $\pi$, regardless of the preceding block's state. Consequently, every arm $a$ is visited at these thinned steps with conditional probability at least $\pi(a)/2$. A one-sided Chernoff bound for adapted Bernoulli variables confirms that the number of such visits is proportional to $m\pi(a)$ with high probability. Because the total samples for arm $a$ necessarily exceed this thinned count, Hoeffding's inequality ensures the concentration of the empirical means for all arms. This sample requirement is captured exactly by $\Psi_{\mathrm{LEX}}(\delta)$, with the full proof provided in Appendix \ref{app:lex-proof}.

% The next lemma controls the cost of moving to the chosen arm after exploration stops.

% \begin{lemma}[Navigation under the canonical walk]
% \label{lem:navigation-regret}
% Assume the same conditions as in Theorem~\ref{thm:lex_high_prob}. If $\hat a^\star=a^\star$, then the navigation phase of Algorithm~\ref{alg:template} satisfies
% \[
% \mathbb E[R_{\mathrm{nav}}]
% \le
% \Delta_{\max}\,\mathbb E[\tau_{\mathrm{hit}}(a^\star)]
% \le
% \frac{2\Delta_{\max}\tau_0}{\pi(a^\star)},
% \]
% where $\Delta_{\max}\triangleq\max_{a\in A}\Delta(a)$ and $\tau_{\mathrm{hit}}(a^\star)$ is the first time the navigation phase reaches $a^\star$.
% \end{lemma}

% The navigation bound is independent of the horizon. Its proof compares the actual navigation phase, which freezes after reaching the target, to a pure canonical walk run over the same time block. This is the structural part of the regret decomposition: once the best arm has been identified correctly, the remaining cost is only the time needed to reach it under local movement. Combining Theorem~\ref{thm:lex_high_prob} with Lemma~\ref{lem:navigation-regret} gives the expected regret bound.
The following lemma bounds the structural cost of navigating to the identified arm once exploration concludes.\begin{lemma}[Navigation under the canonical walk]\label{lem:navigation-regret}Assume the conditions of Theorem \ref{thm:lex_high_prob}. If $\hat a^\star=a^\star$, the navigation phase of Algorithm \ref{alg:template} satisfies$$\mathbb E[R_{\mathrm{nav}}]
\le
\Delta_{\max}\,\mathbb E[\tau_{\mathrm{hit}}(a^\star)]
\le
\frac{2\Delta_{\max}\tau_0}{\pi(a^\star)},$$where $\Delta_{\max}\triangleq\max_{a\in A}\Delta(a)$ and $\tau_{\mathrm{hit}}(a^\star)$ is the first time the navigation phase reaches $a^\star$.\end{lemma}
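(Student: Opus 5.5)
The plan has two parts, one for each inequality. The first inequality holds pathwise. The second reduces to a geometric tail bound on the hitting time, which we obtain by comparing the navigation rule with a pure canonical walk and checking it at intervals of length $\tau_0$.

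\textbf{Step 1: the first inequality.} Let $S$ be the stopping time that ends exploration. Condition on $\hat a^\star=a^\star$.

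\begin{itemize}
\item Once the navigation phase reaches $a^\star$, the target is always in $L_t(a_t)$ because self-loops are always available. So the rule keeps choosing $a^\star$, and every later round has zero pseudo-regret.
\item Each of the at most $\tau_{\mathrm{hit}}(a^\star)$ rounds before that costs at most $\Delta_{\max}$.
\end{itemize}

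Taking expectations and using $\mathbb E[R]=\sum_t\mathbb E[\mu(a^\star)-\mu(a_t)]$ gives $\mathbb E[R_{\mathrm{nav}}]\le \Delta_{\max}\mathbb E[\tau_{\mathrm{hit}}(a^\star)]$.

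\textbf{Step 2: comparison with the pure walk.} I couple the navigation process with a pure canonical walk $X$ started at $a_S$ at time $S$ on the same graph suffix. The coupling is as follows.

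\begin{itemize}
\item If $a^\star\notin L_t(X_{t-1})$, both processes take the same $U_t$ step.
\item If $a^\star\in L_t(X_{t-1})$, the navigation rule jumps to $a^\star$ immediately.
\end{itemize}

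So navigation reaches $a^\star$ no later than the first time $X$ is at $a^\star$, and it suffices to bound $\mathbb E[\tau_{\mathrm{hit}}]$ for $X$.

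\textbf{Step 3: a per-block hitting probability.} The graph process is oblivious and independent of the learner's randomness, so $S$ is independent of the graph sequence beyond its own history. Definition~\ref{def:mixing-sequence} is shift-invariant, so the mixing-time bound \eqref{eq:tau-mix-main} applies from the random start $S$ and from any state.

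\begin{itemize}
\item The choice $\tau_0\ge W+\lceil\frac{1}{\rho\gamma}\log\frac{1}{\pi_*}\rceil$ dominates $\tau_{\mathrm{mix}}(\pi_*/2)$. For this, check that $\log\frac{1}{\pi_*\sqrt{\pi_*}}=\tfrac32\log\frac1{\pi_*}\le 2\log\frac1{\pi_*}$.
\item Hence for every $k$ and every history up to time $S+k\tau_0$, $\Pr(X_{S+(k+1)\tau_0}=a^\star\mid\mathcal F_{S+k\tau_0})\ge \pi(a^\star)-\pi_*/2\ge \pi(a^\star)/2$.
\end{itemize}

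\textbf{Step 4: geometric tail and summation.} Let $p=\pi(a^\star)/2$. Chaining the conditional bounds over blocks gives $\Pr(\tau_{\mathrm{hit}}>k\tau_0)\le (1-p)^k$. Then
\[
\mathbb E[\tau_{\mathrm{hit}}]\le \tau_0\sum_{k\ge0}\Pr(\tau_{\mathrm{hit}}>k\tau_0)\le \tau_0/p=2\tau_0/\pi(a^\star).
\]

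\textbf{Main obstacle.} The delicate step is justifying that the uniform mixing bound holds after the random time $S$, which depends on walk randomness and possibly on rewards. I would condition on $\mathcal F_S$ together with the entire graph sequence. Under obliviousness, the walk after $S$ is a time-inhomogeneous Markov chain driven by deterministic kernels satisfying Definition~\ref{def:mixing-sequence} almost surely. The lemma in Appendix~\ref{app:technical-lemmas} then applies pointwise, with no further argument needed beyond the strong Markov property.
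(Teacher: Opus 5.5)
Your proposal is correct and follows essentially the paper's route (Lemma~\ref{lem:navigation-time}): you use the same coupling of the navigation rule with a pure canonical walk started from the same state and time, the same per-block endpoint bound $\pi(a^\star)-\pi_*/2\ge\pi(a^\star)/2$ from Lemma~\ref{lem:uniform-mixing} over blocks of length $\tau_0$, and the same geometric domination giving $2\tau_0/\pi(a^\star)$. You are more explicit than the paper in two places: you check $\tau_{\mathrm{mix}}(\pi_*/2)=W+\lceil\frac{3}{2\rho\gamma}\log\frac{1}{\pi_*}\rceil\le\tau_0$, and you handle the random start by conditioning on $\mathcal F_S$ and the whole graph sequence, where the paper only states the bound for an arbitrary start time and node (one slip: the inequality you first wrote with $\frac{1}{\rho\gamma}$ should use $\tau_0$'s factor $\frac{2}{\rho\gamma}$, as your $\frac32\le 2$ check shows you intended).
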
The navigation bound is independent of the horizon $T$, illustrating the core regret decomposition. The proof works by coupling the navigation phase to a pure canonical walk started from the same state and time. This isolates the structural cost as the expected time required to reach $a^\star$ under local movement constraints. The structural part of the argument is then a block-wise endpoint bound for the canonical walk obtained from Lemma~\ref{lem:uniform-mixing}. Once the best arm has been identified correctly, the remaining cost is therefore just the time needed for a canonical walk, up to this domination, to reach the target under local movement.

Combining Theorem \ref{thm:lex_high_prob} with Lemma \ref{lem:navigation-regret} yields the total expected regret guarantee.

\begin{corollary}[\textsc{Lex} Expected Regret]
\label{cor:lex_expected}
Under Theorem \ref{thm:lex_high_prob} with $\delta = T^{-2}$, \textsc{Lex} achieves $\mathbb{E}[R(T)] = O(\tau_0 \Psi_{\mathrm{LEX}}(T^{-2}) + \tau_0/\pi(a^\star))$. For the $d$-regular case, let $\Gamma = W + \frac{\log n}{\rho\gamma}$; the regret simplifies to:
\begin{equation}
\mathbb{E}[R(T)] = O\left( \Gamma n \left[ \Delta_{\min}^{-2} \log(nT) + 1 \right] \right).
\end{equation}
\end{corollary}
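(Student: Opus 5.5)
The plan is to decompose the expected regret on the good event $\mathcal E\triangleq\{\hat a^\star=a^\star\}$ and its complement, and then combine Theorem~\ref{thm:lex_high_prob}, Lemma~\ref{lem:navigation-regret} and the trivial bound $R(T)\le T$. I first treat the short-horizon case. If $T<T_{\exp}$, then $\mathbb E[R(T)]\le T< T_{\exp}=O(\tau_0\Psi_{\mathrm{LEX}}(T^{-2}))$, so the claim holds immediately. I therefore assume $T\ge T_{\exp}$ with $\delta=T^{-2}$.

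In that regime, I split the pseudo-regret $\sum_t \mathbb E[\mu(a^\star)-\mu(a_t)]$ into two parts: the exploration rounds $t\le T_{\exp}$ and the navigation rounds $t>T_{\exp}$. Each exploration round costs at most $\Delta_{\max}\le 1$, so the exploration part contributes at most $T_{\exp}$ deterministically. For the navigation part, I write
\[
\mathbb E[R_{\mathrm{nav}}]\le \mathbb E[R_{\mathrm{nav}}\mathbf 1_{\mathcal E}]+T\,\Pr(\mathcal E^c).
\]
By Theorem~\ref{thm:lex_high_prob}, $\Pr(\mathcal E^c)\le T^{-2}$, so the second term is at most $1/T\le 1$.

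For the first term I use Lemma~\ref{lem:navigation-regret}. The subtle point is that the lemma is stated conditionally on $\hat a^\star=a^\star$, and this event is correlated with the trajectory through the rewards. The fix uses the fact that LEX's trajectory is reward-independent. Because $T_{\exp}$ is deterministic, the navigation start time is fixed, and the start node $a_{T_{\exp}}$ is determined by the graph and the walk randomness alone. Moreover, on $\mathcal E$ the navigation dynamics depend only on the graph suffix, the start node and fresh walk randomness, never on rewards. So I couple navigation on $\mathcal E$ to a process that always targets $a^\star$. That process does not depend on $\mathcal E$, which gives $\mathbb E[R_{\mathrm{nav}}\mathbf 1_{\mathcal E}]\le \Delta_{\max}\sup_{s,u}\mathbb E[\tau_{\mathrm{hit}}]\le 2\tau_0/\pi(a^\star)$. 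The lemma's bound holds uniformly over the start state and time, thanks to the shift invariance of Definition~\ref{def:mixing-sequence} and the obliviousness of the graph. I expect this decoupling to be the main obstacle to state cleanly. Summing the three contributions gives $O(\tau_0\Psi_{\mathrm{LEX}}(T^{-2})+\tau_0/\pi(a^\star))$.

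For the $d$-regular specialization I take $\pi$ uniform, so $\pi_*=\pi(a)=1/n$ (Appendix~\ref{app:mixing-condition}). Then
\[
\tau_0=W+\lceil 2\log n/(\rho\gamma)\rceil=O(\Gamma).
\]
Since $\Delta(a)\ge\Delta_{\min}$, the complexity satisfies
\[
\Psi_{\mathrm{LEX}}(T^{-2})\le n\log(2nT^2)/\Delta_{\min}^2=O(n\Delta_{\min}^{-2}\log(nT)),
\]
and the navigation term is $\tau_0 n=O(\Gamma n)$. Finally, the ceiling in $T_{\exp}$ adds at most $\tau_0=O(\Gamma)$. Adding these pieces yields $O(\Gamma n[\Delta_{\min}^{-2}\log(nT)+1])$.
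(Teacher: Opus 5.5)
Your proof is correct and follows the paper's argument. Both handle the short-horizon case $T<T_{\exp}$ by $R(T)\le T$, then apply Theorem~\ref{thm:lex_high_prob} with $\delta=T^{-2}$, use Lemma~\ref{lem:navigation-regret} on $\{\hat a^\star=a^\star\}$, charge $T\cdot T^{-2}$ on the complement, and substitute $\pi\equiv 1/n$ for the regular case.

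Your coupling to an always-target-$a^\star$ navigation process makes explicit a step the paper skips: it simply writes $\mathbb E[R(T)\mid\mathcal E]$ and cites the lemma. What actually lets you drop $\mathbf 1_{\mathcal E}$ is that the per-round pseudo-regret is nonnegative and that the lemma holds uniformly over start node and time for the realized graph sequence. It is not that this process is independent of $\mathcal E$, because both depend on the exploration trajectory.
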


In the regular case where $\pi$ is uniform, let $\Gamma = W+\frac{\log n}{\rho\gamma}$. The bound simplifies to:
\begin{equation}
\mathbb{E}[R(T)] = O\left( \Gamma \frac{n\log(nT)}{\Delta_{\min}^2} \right) + O\left( \Gamma n \right).
\end{equation}
% This bound decouples the two primary sources of regret: the statistical complexity of identification and the structural overhead of navigation. The learning term scales with the inverse suboptimality gaps and the inverse stationary masses of all arms, reflecting that even $a^\star$ requires sufficient visitation for empirical mean concentration. The navigation term is purely structural and independent of the horizon $T$. In the regular case, where $\pi$ is uniform, the bound simplifies to:$$\mathbb E[R(T)]
% =
% O\!\left(
% \left[
% W+\frac{1}{\rho\gamma}\log n
% \right]
% \frac{n\log(nT)}{\Delta_{\min}^2}
% \right)
% +
% O\!\left(
% \left[
% W+\frac{1}{\rho\gamma}\log n
% \right]n
% \right).$$We next replace the fixed budget of \textsc{Lex} with a data-dependent stopping rule while maintaining the same exploration kernel to develop a confidence-based variant.
The bound decouples the statistical complexity of identification from the structural overhead of navigation. The former scales with inverse gaps and stationary masses (ensuring $a^\star$ is sufficiently sampled), while the latter remains a $T$-independent constant. In the $d$-regular case, letting $\Gamma \triangleq W + \frac{\log n}{\rho\gamma}$ represent the mixing overhead, the regret simplifies to:$$\mathbb{E}[R(T)] = O\left( \Gamma n \left[ \Delta_{\min}^{-2} \log(nT) + 1 \right] \right).$$We now replace the fixed budget with a data-dependent stopping rule to develop a confidence-based variant.

\subsection{CB-LEX: confidence-based stopping}
\label{subsec:cblex}

% LEX isolates the statistical content of the problem, but it still requires the exploration budget to be chosen from the unknown reward gaps. CB-LEX removes that oracle dependence. The walk itself does not change: during exploration the learner continues to follow the canonical kernel $U_t$. The only change is the rule that decides when enough evidence has been collected. Relative to prior work, the stopping certificate is also corrected \sourav{right word?} here: the leader must dominate all competing upper confidence bounds, not only the empirical runner-up.

% At a high level, the algorithm keeps a confidence interval for every arm and stops as soon as one arm is separated from all competitors by confidence bounds. In the present setting, this comparison has to be made against every other arm, not only against the empirical runner-up. The reason is that the sample counts of the arms can be highly uneven. An arm that has been visited only a few times can have a low empirical mean and still carry a very wide confidence interval. A stopping rule that compares the leader only with the empirical second-best arm can therefore stop too early. The correct certificate is that the lower confidence bound of the current leader dominates the largest upper confidence bound among all other arms. 
While LEX requires an oracle exploration budget, CB-LEX introduces a data-dependent stopping rule while retaining the canonical kernel $U_t$. A key distinction is the stopping certificate: the empirical leader's lower confidence bound must dominate the upper confidence bounds (UCBs) of all competitors, rather than just the empirical runner-up. 
%\sourav{this gives us the same condition as the one in the prev work.}
This refinement is essential because sample counts can be highly uneven; a poorly sampled arm might have a low empirical mean but a wide confidence interval that still overlaps with the leader. Comparing only against the runner-up would risk premature commitment before these wide intervals are sufficiently narrowed.

For each arm $a \in A$ at time $t$, let $\hat\mu_t(a)$ be the empirical mean and $\varphi_t(a)$ the visitation count. We define the confidence width $w_t(a) \triangleq \sqrt{\ln(4nT/\delta) / (2\varphi_t(a))}$ if $\varphi_t(a) \ge 1$, and $w_t(a) \triangleq 1$ otherwise. The confidence bounds are $\operatorname{LCB}_t(a) \triangleq \hat\mu_t(a) - w_t(a)$ and $\operatorname{UCB}_t(a) \triangleq \hat\mu_t(a) + w_t(a)$. Let $b_t \in \arg\max_{a \in A} \hat\mu_t(a)$ be the current empirical leader. CB-LEX terminates exploration at the stopping time:
\begin{equation}
\label{eq:cblex-stop}
\sigma \triangleq \min \{ t \le T \mid \operatorname{LCB}_t(b_t) \ge \max_{a \neq b_t} \operatorname{UCB}_t(a) \}
\end{equation}
with $\sigma \triangleq T$ if the condition is never met. If $\sigma < T$, the algorithm commits to $\hat a^\star \triangleq b\sigma$ and initiates the same navigation phase used in LEX. This stopping rule ensures the leader is separated from all competitors, which is essential because sample counts can be highly uneven; a poorly sampled arm might have a low empirical mean but a wide confidence interval that still overlaps with the leader.

% The exploration phase of CB-LEX is summarized in Algorithm~\ref{alg:cblex}. The navigation phase is exactly the same as in Algorithm~\ref{alg:template}, so we do not repeat it here. The widths in \eqref{eq:cblex-width} depend on the horizon only through the logarithm. A fully horizon-free variant can be obtained by the standard replacement $T\mapsto (t+1)^2$, but the horizon-dependent form keeps the notation lighter and is enough for the present paper.

The exploration phase of CB-LEX is summarized in Algorithm \ref{alg:cblex}. Since the navigation logic remains identical to Algorithm \ref{alg:template}, it is omitted for brevity. The confidence widths $w_t(a)$ depend on the horizon $T$ only through the logarithm. While a fully horizon-free variant can be obtained by the standard replacement $T \mapsto (t+1)^2$, we retain the horizon-dependent form to keep the notation streamlined for the current analysis. As established in Theorem \ref{thm:cblex}, the adaptive stopping rule preserves the regret order of the LEX baseline. Although the logarithmic terms differ slightly to ensure a uniform confidence event over the horizon, both algorithms achieve the same asymptotic rate under the standard choice $\delta=T^{-2}$.

% \begin{algorithm}
% \caption{CB-LEX exploration phase}
% \label{alg:cblex}
% \begin{algorithmic}[1]
% \Require horizon $T$, confidence parameter $\delta$, initial node $a_0$
% \State For each $a\in A$, set $\varphi_0(a)\gets 0$ and $\hat\mu_0(a)\gets 0$
% \For{$t=1$ {\bf to} $T$}
%     \State Observe the current local action set $L_t(a_{t-1})$
%     \State Sample $a_t\sim U_t(a_{t-1},\cdot)$ supported on $L_t(a_{t-1})$
%     \State Observe reward $r_t\sim \mathcal D(a_t)$
%     \State Update $\varphi_t(\cdot)$ and $\hat\mu_t(\cdot)$ using $(a_t,r_t)$
%     \State For each $a\in A$, compute $w_t(a)$ from \eqref{eq:cblex-width}
%     \State Let $b_t\in \arg\max_{a\in A}\hat\mu_t(a)$
%     \If{$\hat\mu_t(b_t)-w_t(b_t)\ge \max_{a\neq b_t}\bigl(\hat\mu_t(a)+w_t(a)\bigr)$}
%         \State Set $\sigma\gets t$ and $\hat a^\star\gets b_t$
%         \State Switch to the navigation phase of Algorithm~\ref{alg:template}
%     \EndIf
% \EndFor
% \State If the stopping condition never triggered, set $\sigma\gets T$ and terminate
% \end{algorithmic}
% \end{algorithm}

\begin{algorithm}[t]
\caption{CB-LEX exploration phase}
\label{alg:cblex}
\begin{algorithmic}[1]
\Require horizon $T$, confidence parameter $\delta$, initial node $a_0$.
\State $\forall a \in A: \varphi_0(a) \gets 0, \hat\mu_0(a) \gets 0$
\For{$t = 1$ \textbf{to} $T$}
    \State Sample $a_t \sim U_t(a_{t-1}, \cdot)$ on $L_t(a_{t-1})$
    \State Observe $r_t \sim \mathcal{D}(a_t)$
    \State Update $(\varphi_t, \hat\mu_t)$ via $(a_t, r_t)$; $b_t \in \arg\max_{a \in A} \hat\mu_t(a)$
    \If{$\operatorname{LCB}_t(b_t) \ge \max_{a \neq b_t} \operatorname{UCB}_t(a)$}
        \State $\sigma \gets t, \hat{a}^\star \gets b_t$; \textbf{Break} to navigation phase~(\ref{alg:template})
    \EndIf
\EndFor
\State \textbf{If} $t > T$: $\sigma \gets T$ and \textbf{Terminate}
\end{algorithmic}
\end{algorithm}

\begin{theorem}[CB-LEX under common-stationary mixing]
\label{thm:cblex}
Assume the graph sequence is $(W,\rho,\gamma,\pi)$-mixing and let $\tau_0$ be as in Theorem 1. Define the complexity:
\begin{equation}
\label{eq:psi_cb}
\Psi_{\mathrm{CB}}(\delta) \triangleq \max \left\{ \frac{\log(4nT/\delta)}{\pi(a^\star)\Delta_{\min}^2}, \max_{a \neq a^\star} \frac{\log(4nT/\delta)}{\pi(a)\Delta(a)^2} \right\}.
\end{equation}
For a constant $C>0$, if $T \ge \tau_0 \lceil C\Psi_{\mathrm{CB}}(\delta) \rceil \triangleq T_{\mathrm{nom}}$, then with probability at least $1-\delta$: $\hat a^\star = a^\star$, the algorithm stops at $\sigma \le T_{\mathrm{nom}}$, and $R_{\mathrm{learn}}(T) \le T_{\mathrm{nom}}$.
\end{theorem}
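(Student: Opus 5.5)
The proof has two parts: a confidence event $\mathcal E_{\mathrm{conf}}$ that makes the stopping rule \eqref{eq:cblex-stop} correct whenever it fires, and a coverage event $\mathcal E_{\mathrm{cov}}$ that makes it fire by time $T_{\mathrm{nom}}$. Each will get probability at least $1-\delta/2$, and a union bound finishes.

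\emph{Confidence event.} We use the standard stack-of-rewards construction: for each arm $a$, let $X_{a,1},X_{a,2},\dots$ be the i.i.d.\ rewards revealed on its successive visits, so that $\hat\mu_t(a)$ is the average of the first $\varphi_t(a)$ of them. This construction is needed because the stopping time $\sigma$ depends on the rewards, whereas in \textsc{Lex} the trajectory did not. We define $\mathcal E_{\mathrm{conf}}$ as the event that, for every $a\in A$ and every $k\in\{1,\dots,T\}$, the average of the first $k$ draws is within $\sqrt{\ln(4nT/\delta)/(2k)}$ of $\mu(a)$. Hoeffding's inequality, a union bound over $nT$ pairs, and the factor $2$ from the two-sided bound give $\Pr(\mathcal E_{\mathrm{conf}}^c)\le \delta/2$. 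On $\mathcal E_{\mathrm{conf}}$, every $\mu(a)\in[\operatorname{LCB}_t(a),\operatorname{UCB}_t(a)]$ for all $t\le T$; this also holds when $\varphi_t(a)=0$, since then $w_t(a)=1$ and $\mu(a)\in[0,1]$. Hence, if the rule fires at $\sigma$ with leader $b_\sigma\neq a^\star$, we get $\mu(b_\sigma)\ge \operatorname{LCB}_\sigma(b_\sigma)\ge \operatorname{UCB}_\sigma(a^\star)\ge \mu(a^\star)$, contradicting uniqueness of $a^\star$. So $\hat a^\star=a^\star$ whenever $\sigma<T$ on $\mathcal E_{\mathrm{conf}}$.

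\emph{Coverage event.} This step reuses the \textsc{Lex} coverage argument from the proof of Theorem~\ref{thm:lex_high_prob}. Since the exploration kernel is $U_t$ regardless of rewards, we consider the walk as if it ran all the way to $T_{\mathrm{nom}}$; the realized trajectory coincides with it up to $\sigma$. Thin this walk at multiples of $\tau_0$. By the mixing lemma, each thinned position hits $a$ with conditional probability at least $\pi(a)/2$, so the adapted-Chernoff bound gives, with probability at least $1-\delta/2$, that every arm satisfies $\varphi_{T_{\mathrm{nom}}}(a)\ge m\pi(a)/4$, where $m=\lceil C\Psi_{\mathrm{CB}}(\delta)\rceil$. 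This requires $m\pi(a)\gtrsim\log(2n/\delta)$, which is implied by $\Psi_{\mathrm{CB}}$ once $C$ is large enough.

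\emph{Stopping by $T_{\mathrm{nom}}$ (main obstacle).} It remains to check that on $\mathcal E_{\mathrm{conf}}\cap\mathcal E_{\mathrm{cov}}$ the certificate holds at $t=T_{\mathrm{nom}}$ if it has not already fired. The count bound and the definition of $\Psi_{\mathrm{CB}}$ give
\[
w(a^\star)\le \Delta_{\min}/4 \quad\text{and}\quad w(a)\le \Delta(a)/4 \text{ for } a\ne a^\star,
\]
again for $C$ large, using $\ln(4nT/\delta)$ in both the width and $\Psi_{\mathrm{CB}}$. Then $\hat\mu(a^\star)\ge \mu(a^\star)-\Delta_{\min}/4$ and $\hat\mu(a)\le \mu(a)+\Delta(a)/4<\hat\mu(a^\star)$, so the leader is $a^\star$. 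Moreover,
\[
\operatorname{LCB}(a^\star)\ge \mu(a^\star)-2w(a^\star)\ge \mu(a^\star)-\Delta_{\min}/2
\]
and
\[
\operatorname{UCB}(a)\le \mu(a)+2w(a)\le \mu(a^\star)-\Delta(a)/2\le \mu(a^\star)-\Delta_{\min}/2 .
\]
These two bounds give exactly the required separation. The delicate point is the role of the per-arm gap $\Delta(a)$ versus $\Delta_{\min}$: $a^\star$'s width must beat the smallest gap, while each suboptimal arm needs only its own gap. This is exactly the two-term max in $\Psi_{\mathrm{CB}}$. The other delicate point is keeping the coverage argument valid despite the reward-dependent stopping, which is handled by the coupling to the untruncated walk.

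Hence $\sigma\le T_{\mathrm{nom}}\le T$ and $\hat a^\star=a^\star$. Since every exploration round costs at most $1$ in regret, $R_{\mathrm{learn}}(T)\le\sigma\le T_{\mathrm{nom}}$, all on an event of probability at least $1-\delta$.
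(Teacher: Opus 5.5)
Your proposal is correct and follows the paper's proof essentially step for step. The shared steps are: couple to the perpetual canonical walk; get a $1-\delta/2$ visitation event at the deterministic time $T_{\mathrm{nom}}=m\tau_0$ from the thinned-visitation lemma; get a $1-\delta/2$ uniform confidence event; show the widths drop below $\Delta_{\min}/4$ and $\Delta(a)/4$, so that $a^\star$ is the unique leader and the certificate fires; and note that any firing on the confidence event returns $a^\star$. The only difference is that you obtain the confidence event from the reward-table construction, which the paper reserves for RALEX. The paper instead conditions on the counts of the perpetual walk, which are reward-independent under the canonical kernel, so here the table is a convenient device rather than a necessary one.
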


% The proof uses exactly the same structural ingredients as the proof of LEX. The new step is to couple CB-LEX to a perpetual exploration walk that continues to follow the canonical kernel even after the stopping condition has become true. Up to the stopping time, the real algorithm and the perpetual walk are identical. It is therefore enough to show that the stopping condition must hold by a deterministic time on the perpetual walk. The mixing lemma and the visitation lemma from Appendix~\ref{app:technical-lemmas} provide the necessary sample counts at that deterministic time, and a uniform confidence event converts those counts into separation of the confidence intervals. The full argument is given in Appendix~\ref{app:cblex-proof}.

The proof shares the structural ingredients of the LEX analysis, with the addition of a coupling argument. We couple CB-LEX to a perpetual exploration walk that continues to follow the canonical kernel regardless of the stopping condition. Up to the time $\sigma$, the actual trajectory and the perpetual walk are identical. Consequently, it suffices to show that the stopping condition is satisfied by a deterministic time on the perpetual walk. The mixing and visitation lemmas (Appendix \ref{app:technical-lemmas}) provide the necessary sample counts at that time, and a uniform confidence event ensures the separation of intervals. The full argument is provided in Appendix \ref{app:cblex-proof}.

% \begin{corollary}[Expected regret of CB-LEX]
% \label{cor:cblex_expected}
% Under the conditions of Theorem~\ref{thm:cblex}, choose $\delta=T^{-2}$ and define
% \[
% T_{\mathrm{CB}}^{\mathrm{nom}}
% \triangleq
% \tau_0\left\lceil C\,\Psi_{\mathrm{CB}}(T^{-2})\right\rceil.
% \]
% Then running CB-LEX on horizon $T$ gives
% \[
% \mathbb E[R(T)]
% \le
% T_{\mathrm{CB}}^{\mathrm{nom}}
% +
% \frac{2\Delta_{\max}\tau_0}{\pi(a^\star)}
% +
% 1.
% \]
% Equivalently,
% \[
% \mathbb E[R(T)]
% =
% O\!\left(
% \tau_0
% \max\left\{
% \frac{\log(4nT^3)}{\pi(a^\star)\Delta_{\min}^2},
% \max_{a\neq a^\star}\frac{\log(4nT^3)}{\pi(a)\Delta(a)^2}
% \right\}
% \right)
% +
% O\!\left(\frac{\tau_0}{\pi(a^\star)}\right),
% \]
% and in particular $\mathbb E[R(T)]=o(T)$.
% \end{corollary}
% \begin{corollary}[Expected regret of \textsc{CB-LEX}]
% \label{cor:cblex_expected}
% Under the conditions of Theorem~\ref{thm:cblex}, setting $\delta=T^{-2}$ and defining $T_{\mathrm{CB}}^{\mathrm{nom}} \triangleq \tau_0\lceil C\Psi_{\mathrm{CB}}(T^{-2})\rceil$ yields $\mathbb E[R(T)] \le T_{\mathrm{CB}}^{\mathrm{nom}} + \frac{2\Delta_{\max}\tau_0}{\pi(a^\star)} + 1$. Equivalently, we achieve sublinear regret $\mathbb E[R(T)]=o(T)$ bounded asymptotically by:
% \[
% \mathbb E[R(T)] = O\!\left( \tau_0 \max\left\{ \frac{\log(nT)}{\pi(a^\star)\Delta_{\min}^2}, \max_{a\neq a^\star}\frac{\log(nT)}{\pi(a)\Delta(a)^2} \right\} + \frac{\tau_0}{\pi(a^\star)} \right).
% \]
% \end{corollary}

\begin{corollary}[Expected regret of CB-LEX]
\label{cor:cblex_expected}
Under the conditions of Theorem \ref{thm:cblex}, setting $\delta=T^{-2}$ yields sublinear regret $\mathbb E[R(T)]=o(T)$. The expected regret is bounded by:
\begin{equation}
\mathbb E[R(T)] = O \left( \tau_0 \Psi_{\mathrm{CB}}(T^{-2}) + \frac{\tau_0}{\pi(a^\star)} \right).
\end{equation}
In the $d$-regular case, letting $\Gamma \triangleq W + \frac{\log n}{\rho\gamma}$, this simplifies to:
\begin{equation}
\mathbb{E}[R(T)] = O \left( \Gamma n \left[ \Delta_{\min}^{-2} \log(nT) + 1 \right] \right).
\end{equation}
\end{corollary}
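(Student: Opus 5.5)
The plan is to decompose the expected regret on the good event of Theorem~\ref{thm:cblex} and its complement, exactly as for Corollary~\ref{cor:lex_expected}. Write $R(T)=R_{\mathrm{learn}}(T)+R_{\mathrm{nav}}$, where $R_{\mathrm{learn}}$ collects rounds $t\le\sigma$ and $R_{\mathrm{nav}}$ collects rounds after $\sigma$. Take $\delta=T^{-2}$ and let $\mathcal E$ be the event of Theorem~\ref{thm:cblex}: $\hat a^\star=a^\star$ and $\sigma\le T_{\mathrm{nom}}$. Two horizon regimes need separate treatment.

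\emph{Short horizon.} If $T<T_{\mathrm{nom}}=\tau_0\lceil C\Psi_{\mathrm{CB}}(T^{-2})\rceil$, the trivial bound $\mathbb E[R(T)]\le T< T_{\mathrm{nom}}$ already gives the claimed form.

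\emph{Long horizon.} Otherwise, on $\mathcal E^c$ I use $R(T)\le T$ in expectation, which contributes at most $T\cdot T^{-2}\le 1$. On $\mathcal E$ the learning regret is at most $T_{\mathrm{nom}}$ by Theorem~\ref{thm:cblex}. For navigation, I invoke Lemma~\ref{lem:navigation-regret} conditionally on the history up to $\sigma$. This is legitimate for three reasons. The graph process is oblivious, so it is independent of $\sigma$. The mixing condition of Definition~\ref{def:mixing-sequence} is shift-invariant, so it holds on the suffix $\{G_t\}_{t>\sigma}$. The bound $\tau_{\mathrm{mix}}$ in \eqref{eq:tau-mix-main} is uniform over starting state and time. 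Together these give $\mathbb E[R_{\mathrm{nav}}\mathbf 1_{\mathcal E}]\le 2\Delta_{\max}\tau_0/\pi(a^\star)\le 2\tau_0/\pi(a^\star)$. Summing,
\[
\mathbb E[R(T)]\le T_{\mathrm{nom}}+\frac{2\tau_0}{\pi(a^\star)}+1=O\Bigl(\tau_0\Psi_{\mathrm{CB}}(T^{-2})+\frac{\tau_0}{\pi(a^\star)}\Bigr).
\]

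\emph{Main obstacle.} The delicate step is applying Lemma~\ref{lem:navigation-regret} at the random time $\sigma$, which depends on rewards and walk randomness. I would handle it by conditioning on $\mathcal F_\sigma$. Given $\mathcal F_\sigma$, the future graphs are still the fixed oblivious sequence, and the navigation kernel depends only on the current node and $\hat a^\star$. Hence the hitting-time bound from the lemma applies pathwise for each realization of $(\sigma,a_\sigma)$. Taking the outer expectation over $\mathcal F_\sigma$ then completes this step.

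\emph{Sublinearity.} Since $\log(4nT/\delta)=\log(4nT^3)=O(\log(nT))$, the bound is $O(\tau_0\log(nT)\cdot\mathrm{const}+\tau_0)$ with constants free of $T$, so $\mathbb E[R(T)]/T\to 0$.

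\emph{Regular case.} Here $\pi\equiv 1/n$, so $\pi_*=\pi(a^\star)=1/n$ and $\tau_0=W+\lceil\frac{2\log n}{\rho\gamma}\rceil=O(\Gamma)$. Also $\Psi_{\mathrm{CB}}(T^{-2})\le n\log(4nT^3)/\Delta_{\min}^2=O(n\Delta_{\min}^{-2}\log(nT))$, and $\tau_0/\pi(a^\star)=O(\Gamma n)$. Combining these gives $O(\Gamma n[\Delta_{\min}^{-2}\log(nT)+1])$.
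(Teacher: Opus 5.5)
Your proposal is correct and follows essentially the same route as the paper's proof: the same short/long-horizon split, the good event of Theorem~\ref{thm:cblex} with $\delta=T^{-2}$, Lemma~\ref{lem:navigation-regret} on that event, and the bound $T\cdot T^{-2}\le 1$ on its complement, giving $\mathbb E[R(T)]\le T_{\mathrm{nom}}+2\Delta_{\max}\tau_0/\pi(a^\star)+1$. You are more careful than the paper in two places. First, you justify applying the navigation lemma at the random time $\sigma$ by conditioning on $\mathcal F_\sigma$; this is best phrased as working pathwise given the realized, almost surely mixing graph sequence, rather than claiming the graphs are independent of $\sigma$, which fails for random graph processes. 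Second, you write out the $d$-regular simplification, which the paper leaves implicit.
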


% The main point of the corollary is that the confidence-based rule removes the need to specify the exploration length in advance without creating a new asymptotic penalty. Under the walk-centric mixing condition of Definition~\ref{def:mixing-sequence}, the same visitation theorem that justifies the fixed budget of LEX also implies that the confidence intervals become disjoint after the same order of time. What changes is the way that fact is detected. In LEX the budget is supplied externally; in CB-LEX it is certified from the data. The only genuinely new ingredient in the proof is the coupling to the perpetual exploration walk, which lets us analyze a random stopping rule using the deterministic-time bounds already established for the canonical walk.
Crucially, the confidence-based rule eliminates the requirement for a pre-specified exploration budget without incurring an asymptotic penalty. Under the mixing condition of Definition~\ref{def:mixing-sequence}, the same visitation guarantees that justify the LEX budget ensure that confidence intervals become disjoint within the same order of time. The fundamental distinction lies in the termination mechanism: LEX utilizes an external budget, whereas CB-LEX certifies stopping through observed data. The primary technical novelty in the analysis is the coupling to a perpetual exploration walk, which enables the evaluation of a random stopping rule using the deterministic-time bounds previously established for the canonical walk.

\subsection{RALEX: reward-aware exploration with a canonical floor}
\label{subsec:ralex}

While CB-LEX uses data only for its stopping rule, its movement remains reward-indifferent. RALEX introduces reward-awareness by biasing transitions toward promising arms. The difficulty is that a purely reward-driven kernel often fails to preserve the stationary distribution $\pi$ required by Definition 1. RALEX instead employs a mixture that retains a fixed floor of canonical movement.At round $t$, the learner uses the empirical means $\hat\mu_{t-1}$ and widths $w_{t-1}$ to form the local optimistic score $\xi_{t-1}(a) \triangleq \hat\mu_{t-1}(a) + w_{t-1}(a)$. We define a softmax proposal on the neighborhood $L_t(a_{t-1})$:\begin{equation}\label{eq:ralex-softmax}S_t(a_{t-1}, v) \triangleq \frac{e^{\lambda \xi_{t-1}(v)}}{\sum_{z \in L_t(a_{t-1})} e^{\lambda \xi_{t-1}(z)}}, \quad v \in L_t(a_{t-1}),\end{equation}where $\lambda \ge 0$ controls the bias. To ensure mixing, RALEX defines the weight $\epsilon_t \triangleq \max\{\epsilon_0, 2\bar w_{t-1}/(1 + 2\bar w_{t-1})\}$, where $\bar w_{t-1} \triangleq \max_{a \in A} w_{t-1}(a)$ and $\epsilon_0 \in (0, 1]$ is a fixed floor. The exploration kernel is:\begin{equation}\label{eq:ralex-kernel}K_t(a_{t-1}, v) \triangleq \epsilon_t U_t(a_{t-1}, v) + (1 - \epsilon_t) S_t(a_{t-1}, v).\end{equation}

RALEX adopts the same stopping rule as CB-LEX, terminating exploration when the leader’s lower bound separates from all competing upper bounds.

% In Algorithm~\ref{alg:ralex}, the first part of each round computes the optimistic proposal from the previous round's estimates and widths, and then chooses how much of the step must remain on the canonical walk. After the move and the reward update, the algorithm recomputes the widths using the new sample counts and checks the same confidence certificate used by CB-LEX. The exploration kernel is therefore the only algorithmic change, but the proof has to account for the fact that the visit counts are now reward-dependent.
Algorithm \ref{alg:ralex} details the RALEX exploration phase. Each round begins by calculating the optimistic proposal $\xi_{t-1}$ from the preceding round's estimates and determining the canonical mixing weight $\epsilon_t$. Following the state transition and subsequent reward update, the algorithm refreshes the confidence widths and evaluates the stopping certificate established for CB-LEX. Although the exploration kernel is the primary algorithmic departure, the theoretical analysis must rigorously address the reward-dependence of the resulting visitation counts.

% \begin{algorithm}
% \caption{RALEX exploration phase}
% \label{alg:ralex}
% \begin{algorithmic}[1]
% \Require horizon $T$, confidence parameter $\delta$, bias parameter $\lambda\ge 0$, floor $\varepsilon_0\in(0,1]$, initial node $a_0$
% \State For each $a\in A$, set $\varphi_0(a)\gets 0$ and $\hat\mu_0(a)\gets 0$
% \For{$t=1$ {\bf to} $T$}
%     \State Observe the current local action set $L_t(a_{t-1})$
%     \State For each $a\in A$, compute $w_{t-1}(a)$ from \eqref{eq:cblex-width} and set $\xi_{t-1}(a)=\hat\mu_{t-1}(a)+w_{t-1}(a)$
%     \State Form the local softmax proposal $S_t(a_{t-1},\cdot)$ on $L_t(a_{t-1})$ according to \eqref{eq:ralex-softmax}
%     \State Compute $\bar w_{t-1}=\max_{a\in A}w_{t-1}(a)$ and set $\varepsilon_t$ according to \eqref{eq:ralex-eps-data}--\eqref{eq:ralex-eps}
%     \State Define $K_t(a_{t-1},\cdot)=\varepsilon_t U_t(a_{t-1},\cdot)+(1-\varepsilon_t)S_t(a_{t-1},\cdot)$
%     \State Sample $a_t\sim K_t(a_{t-1},\cdot)$
%     \State Observe reward $r_t\sim \mathcal D(a_t)$
%     \State Update $\varphi_t(\cdot)$ and $\hat\mu_t(\cdot)$ using $(a_t,r_t)$
%     \State For each $a\in A$, recompute $w_t(a)$ from \eqref{eq:cblex-width}
%     \State Let $b_t\in \arg\max_{a\in A}\hat\mu_t(a)$
%     \If{$\hat\mu_t(b_t)-w_t(b_t)\ge \max_{a\neq b_t}\bigl(\hat\mu_t(a)+w_t(a)\bigr)$}
%         \State Set $\sigma\gets t$ and $\hat a^\star\gets b_t$
%         \State Switch to the navigation phase of Algorithm~\ref{alg:template}
%     \EndIf
% \EndFor
% \State If the stopping condition never triggered, set $\sigma\gets T$ and terminate
% \end{algorithmic}
% \end{algorithm}

\begin{algorithm}[t]
\caption{RALEX exploration phase}
\label{alg:ralex}
\begin{algorithmic}[1]
\Require $T, \delta, \lambda, \varepsilon_0, a_0$
\State $\forall a \in A: \varphi_0(a) \gets 0, \hat\mu_0(a) \gets 0$
\For{$t = 1$ \textbf{to} $T$}
    \State Compute widths $w_{t-1}$ and scores $\xi_{t-1}$
    \State Form $S_t$ via \eqref{eq:ralex-softmax}
    \State Set weight $\epsilon_t$ and define $K_t$ according to \eqref{eq:ralex-kernel}
    \State Sample $a_t \sim K_t(a_{t-1}, \cdot)$
    \State Observe reward $r_t \sim \mathcal{D}(a_t)$.
    \State Update $(\varphi_t, \hat\mu_t)$ via $(a_t, r_t)$; $b_t \in \arg\max_{a \in A} \hat\mu_t(a)$
    \If{$\operatorname{LCB}_t(b_t) \ge \max_{a \neq b_t} \operatorname{UCB}_t(a)$}
        \State $\sigma \gets t, \hat{a}^\star \gets b_t$; \textbf{Break} to navigation phase (\ref{alg:template})
    \EndIf
\EndFor
\State \textbf{If} $t > T$: $\sigma \gets T$ and \textbf{Terminate}
\end{algorithmic}
\end{algorithm}

% The fixed floor in \eqref{eq:ralex-eps} is the key structural ingredient. It implies the pointwise lower bound
% \begin{equation}
% \label{eq:ralex-floor}
% K_t(u,v)\ge \varepsilon_0 U_t(u,v)
% \qquad\text{for every }t\ge 1\text{ and every }u,v\in A.
% \end{equation}
% The worst-case analysis of RALEX uses only this inequality. In particular, the theorem below does not need any cap on $\lambda$. The price of this generality is that the resulting bound is conservative: the proof treats the biased part of the kernel as completely arbitrary except for the guaranteed canonical floor.
The deterministic floor $\epsilon_0$ in \eqref{eq:ralex-kernel} provides the key structural guarantee for RALEX. It implies the pointwise lower bound:
\begin{equation}
\label{eq:ralex-floor}
K_t(u,v) \ge \epsilon_0 U_t(u,v), \quad \forall t \ge 1, \ u,v \in A.
\end{equation}
The worst-case analysis relies exclusively on this inequality, ensuring the subsequent theorem holds for any bias parameter $\lambda \ge 0$. While this generality leads to a conservative bound; treating the biased component as an arbitrary perturbation; it preserves the rigorous structural properties established for the canonical walk.

% \begin{theorem}[RALEX under common-stationary sliding-window mixing]
% \label{thm:ralex}
% Assume that the realized graph sequence is $(W,\rho,\gamma,\pi)$-mixing for the canonical walk. Fix $\lambda\ge 0$ and $\varepsilon_0\in(0,1]$, let $\tau_0$ be the quantity defined in Theorem~\ref{thm:lex_high_prob}, and set $\vartheta_0\triangleq\frac{1}{2}\,\varepsilon_0^{\tau_0}.$
% Define
% \[
% \Psi_{\mathrm{RA}}(\delta)
% \triangleq
% \max\left\{
% \frac{\log(4nT/\delta)}{\vartheta_0\pi(a^\star)\Delta_{\min}^2},
% \max_{a\neq a^\star}\frac{\log(4nT/\delta)}{\vartheta_0\pi(a)\Delta(a)^2}
% \right\}.
% \]
% There exists a universal constant $C>0$ such that the following holds. If
% \[
% T_{\mathrm{RA}}^{\mathrm{nom}}
% \triangleq
% \tau_0\left\lceil C\,\Psi_{\mathrm{RA}}(\delta)\right\rceil
% \le T,
% \]
% then $\Pr\!\left(
% \sigma\le T_{\mathrm{RA}}^{\mathrm{nom}}
% \text{ and }
% \hat a^\star=a^\star
% \right)
% \ge 1-\delta.$
% Consequently, $ R_{\mathrm{learn}}(T)\le T_{\mathrm{RA}}^{\mathrm{nom}}$ with probability at least $1-\delta$.
% \end{theorem}

\begin{theorem}[RALEX under common-stationary mixing]
\label{thm:ralex}
Assume the graph sequence is $(W,\rho,\gamma,\pi)$-mixing. For bias $\lambda \ge 0$ and floor $\varepsilon_0 \in (0,1]$, let $\tau_0$ be the mixing time from Theorem 1 and $\vartheta_0 \triangleq \frac{1}{2} \varepsilon_0^{\tau_0}$. Define the complexity:
\begin{equation}
\label{eq:psi_ra}
\Psi_{\mathrm{RA}}(\delta) \triangleq \max \left\{ \frac{\log(4nT/\delta)}{\vartheta_0 \pi(a^\star) \Delta_{\min}^2}, \max_{a \neq a^\star} \frac{\log(4nT/\delta)}{\vartheta_0 \pi(a) \Delta(a)^2} \right\}.
\end{equation}
For a constant $C>0$, if $T \ge T_{\mathrm{nom}} \triangleq \tau_0 \lceil C \Psi_{\mathrm{RA}}(\delta) \rceil$, then with probability at least $1-\delta$: $\hat a^\star = a^\star$, the algorithm stops at $\sigma \le T_{\mathrm{nom}}$, and $R_{\mathrm{learn}}(T) \le T_{\mathrm{nom}}$.
\end{theorem}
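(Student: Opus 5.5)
The plan mirrors the CB-LEX argument (Theorem~\ref{thm:cblex}), replacing the unbiased visitation step by one that uses only the floor inequality \eqref{eq:ralex-floor}.

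\emph{Step 1 (perpetual walk and good event).} We first couple RALEX to a perpetual process that keeps sampling from $K_t$ even after the certificate fires. Rewards are generated from a tape: an i.i.d.\ sequence $\{X_{a,k}\}_{k\ge1}$ per arm, with the $k$-th visit to $a$ revealing $X_{a,k}$. On this tape we define the uniform confidence event
\[
\mathcal E\triangleq\bigl\{|\hat\mu(a,k)-\mu(a)|\le \sqrt{\ln(4nT/\delta)/(2k)}\ \ \forall a\in A,\ 1\le k\le T\bigr\}.
\]
Hoeffding's inequality and a union bound over $nT$ pairs give $\Pr(\mathcal E)\ge 1-\delta/2$. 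Because the event is stated on the tape, it holds regardless of the reward-dependent visit order. On $\mathcal E$ the certificate can only fire for $b_t=a^\star$: if $\operatorname{LCB}_t(b_t)\ge\operatorname{UCB}_t(a^\star)$ with $b_t\ne a^\star$, then $\mu(b_t)\ge\mu(a^\star)$, contradicting uniqueness. Hence $\hat a^\star=a^\star$ whenever $\sigma\le T$.

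\emph{Step 2 (visitation under an adversarial kernel).} This is the main obstacle, since $K_t$ depends on past rewards and need not preserve $\pi$. I would use only \eqref{eq:ralex-floor}. Iterating it over a block of $\tau_0$ steps gives, for any start state $u$, time $s$, and past history,
\[
\Pr(a_{s+\tau_0}=a\mid\mathcal F_s)\ \ge\ \varepsilon_0^{\tau_0}\,(U_{s+1}\cdots U_{s+\tau_0})(u,a)\ \ge\ \varepsilon_0^{\tau_0}\,\pi(a)/2 \;=\;\vartheta_0\pi(a).
\]
The first inequality holds because each product of kernel entries along a path dominates $\varepsilon_0^{\tau_0}$ times the corresponding canonical path product, and this bound holds conditionally on everything, since \eqref{eq:ralex-floor} holds for every realization of $\epsilon_t$ and $S_t$. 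The second inequality is the uniform mixing lemma (Lemma~\ref{lem:uniform-mixing}) with $\tau_0\ge\tau_{\mathrm{mix}}(\pi_*/2)$, exactly as in Theorem~\ref{thm:lex_high_prob}. The thinned indicators $\mathbb 1\{a_{j\tau_0}=a\}$, for $j=1,\dots,m$, are therefore adapted Bernoulli variables with conditional mean at least $\vartheta_0\pi(a)$. The adapted Chernoff/Freedman bound then yields $\varphi_{m\tau_0}(a)\ge \tfrac12 m\vartheta_0\pi(a)$ for all $a$, with probability at least $1-\delta/2$, once $m\vartheta_0\pi_{*}\gtrsim\log(2n/\delta)$. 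This is implied by $m=\lceil C\Psi_{\mathrm{RA}}(\delta)\rceil$, because $\Delta\le1$.

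\emph{Step 3 (certificate by $T_{\mathrm{nom}}$).} With $m=\lceil C\Psi_{\mathrm{RA}}(\delta)\rceil$, every arm $a\ne a^\star$ has $w(a)\le\Delta(a)/4$ at time $T_{\mathrm{nom}}$, and $a^\star$ has $w(a^\star)\le\Delta_{\min}/4$, for $C$ large. On $\mathcal E$, the empirical leader is then $a^\star$, and
\[
\operatorname{LCB}(a^\star)\ge\mu(a^\star)-2w(a^\star)\ge \mu(a)+\Delta(a)-\Delta(a)/2\ge \mu(a)+2w(a)\ge\operatorname{UCB}(a).
\]
So the certificate holds on the perpetual walk at time $T_{\mathrm{nom}}$, and therefore the real algorithm has $\sigma\le T_{\mathrm{nom}}\le T$, since the two processes agree up to $\sigma$. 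A union bound over the two failure events gives probability at least $1-\delta$. Finally, $R_{\mathrm{learn}}(T)\le\sigma\le T_{\mathrm{nom}}$ because each per-round regret is at most $1$.

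The delicate points are the conditioning in Step 2 and the coupling in Step 3. In Step 2 we must make sure the block bound holds given arbitrary reward-dependent history; it does, because the floor is pathwise. In Step 3 the stopping time is reward-dependent, which is why Step 1 uses the tape formulation of $\mathcal E$ rather than a per-time confidence event.
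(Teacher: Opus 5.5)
Your proposal is correct and follows the paper's proof essentially step for step. The shared steps are the perpetual-process coupling, the reward-table (tape) confidence event of Lemma~\ref{lem:adaptive-confidence}, the block-wise visitation bound obtained from the canonical floor together with Lemma~\ref{lem:uniform-mixing} and the adapted Chernoff bound, and the width/separation argument at $T_{\mathrm{nom}}$. The only cosmetic difference is in Step 2: you derive $\Pr(a_{s+\tau_0}=a\mid\mathcal F_s)\ge\varepsilon_0^{\tau_0}(U_{s+1}\cdots U_{s+\tau_0})(u,a)$ directly, by iterated conditioning on path products, whereas the paper gets the same inequality through a Bernoulli splitting $K=\varepsilon_0U+(1-\varepsilon_0)R$ and the all-canonical block event $\mathcal B_j$.
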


The analysis of RALEX introduces two primary technical refinements beyond the CB-LEX framework. The first is statistical: because the exploration kernel is reward-dependent, sample counts are no longer independent of the reward history. We address this in Appendix \ref{app:ralex-proof} using a uniform confidence lemma for adaptive sampling derived from a reward-table construction. The second is structural: since RALEX deviates from the canonical walk, the standard visitation lemma for $U_t$ does not apply directly. Instead, we show that each block of $\tau_0$ rounds behaves as a canonical walk with probability at least $\varepsilon_0^{\tau_0}$, yielding a thinned visitation guarantee with success rate $\vartheta_0\pi(a)$. With these ingredients, the stopping-time analysis follows the CB-LEX pattern.The resulting bound is conservative compared to CB-LEX, incurring a factor $\vartheta_0^{-1} = 2\varepsilon_0^{-\tau_0}$ by relying strictly on the canonical component from \eqref{eq:ralex-floor}. This constitutes the theoretical price for permitting the biased kernel component to depend arbitrarily on observations. At $\varepsilon_0=1$, RALEX simplifies to CB-LEX and the theorem recovers the CB-LEX rate. Notably, this result generalizes to any local proposal kernel mixed with the same canonical floor.\begin{corollary}[Expected regret of RALEX]\label{cor:ralex_expected}Under the conditions of Theorem \ref{thm:ralex}, let $\delta=T^{-2}$ and $T_{\mathrm{nom}} \triangleq \tau_0\lceil C\Psi_{\mathrm{RA}}(T^{-2})\rceil$. The expected regret of RALEX satisfies:\begin{equation}\mathbb E[R(T)] \le T_{\mathrm{nom}} + \frac{2\Delta_{\max}\tau_0}{\pi(a^\star)} + 1.\end{equation}Asymptotically, RALEX achieves sublinear regret $\mathbb E[R(T)] = o(T)$ bounded by:\begin{equation}\mathbb E[R(T)] = O \left( \tau_0 \Psi_{\mathrm{RA}}(T^{-2}) + \frac{\tau_0}{\pi(a^\star)} \right).\end{equation}\end{corollary}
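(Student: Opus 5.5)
Our plan is to decompose the expected regret on the good event of Theorem~\ref{thm:ralex} and its complement, with $\delta=T^{-2}$. Write $R(T)=R_{\mathrm{learn}}(T)+R_{\mathrm{nav}}$, where $R_{\mathrm{learn}}$ collects the rounds $t\le\sigma$ and $R_{\mathrm{nav}}$ the rounds after commitment. Let $\mathcal E$ be the event that $\sigma\le T_{\mathrm{nom}}$ and $\hat a^\star=a^\star$; by Theorem~\ref{thm:ralex}, $\Pr(\mathcal E^c)\le T^{-2}$. If $T<T_{\mathrm{nom}}$, the trivial bound $R(T)\le T\le T_{\mathrm{nom}}$ already gives the claim, so we assume $T\ge T_{\mathrm{nom}}$. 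Then
\[
\mathbb E[R(T)]\le \mathbb E[R_{\mathrm{learn}}\mathbf 1_{\mathcal E}]+\mathbb E[R_{\mathrm{nav}}\mathbf 1_{\mathcal E}]+T\Pr(\mathcal E^c).
\]
The first term is at most $T_{\mathrm{nom}}$, since each round costs at most $1$ and on $\mathcal E$ there are at most $T_{\mathrm{nom}}$ exploration rounds. The last term is at most $T\cdot T^{-2}\le 1$.

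For the navigation term we condition on the history $\mathcal F_\sigma$ at the stopping time. On $\mathcal E$ the committed target is $a^\star$, and the navigation phase of Algorithm~\ref{alg:template} is identical to the one analyzed in Lemma~\ref{lem:navigation-regret}. That lemma is stated for a navigation phase started at an arbitrary node and time, because the mixing condition of Definition~\ref{def:mixing-sequence} is shift-invariant. The graph process is oblivious, so the suffix $\{G_t\}_{t>\sigma}$ is still $(W,\rho,\gamma,\pi)$-mixing almost surely, even though $\sigma$ now depends on RALEX's reward-biased trajectory. Applying Lemma~\ref{lem:navigation-regret} conditionally gives $\mathbb E[R_{\mathrm{nav}}\mathbf 1_{\mathcal E}\mid\mathcal F_\sigma]\le 2\Delta_{\max}\tau_0/\pi(a^\star)$. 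Taking expectations yields the displayed bound $T_{\mathrm{nom}}+2\Delta_{\max}\tau_0/\pi(a^\star)+1$.

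The step that needs care is this conditioning. The exploration kernel of RALEX depends on the rewards, so $\sigma$ and $a_\sigma$ are correlated with the reward history. We must check that the navigation bound depends only on the future graphs and the fresh walk randomness. Both are independent of $\mathcal F_\sigma$ given the realized graph sequence, because the graph process is oblivious and the navigation rule does not use rewards. The bound is uniform over starting state and time, so this independence justifies the conditional application.

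For the asymptotic form, note that $T_{\mathrm{nom}}\le\tau_0(C\Psi_{\mathrm{RA}}(T^{-2})+1)$ and $\Delta_{\max}\le 1$, which gives $O(\tau_0\Psi_{\mathrm{RA}}(T^{-2})+\tau_0/\pi(a^\star))$. Sublinearity then holds because $\Psi_{\mathrm{RA}}(T^{-2})$ grows only like $\log(4nT^3)$ times quantities that do not depend on $T$ ($\vartheta_0$, $\pi$, the gaps and $\tau_0$). Hence the bound is $O(\log T)=o(T)$.
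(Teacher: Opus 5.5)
Your proposal is correct and takes essentially the same route as the paper's proof. You handle $T<T_{\mathrm{nom}}$ trivially; otherwise you split on the event $\{\sigma\le T_{\mathrm{nom}},\ \hat a^\star=a^\star\}$ from Theorem~\ref{thm:ralex} with $\delta=T^{-2}$, bound the learning part by $T_{\mathrm{nom}}$, the navigation part via Lemma~\ref{lem:navigation-regret}, and the failure event by $T\cdot T^{-2}\le 1$.

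You also explicitly justify applying the navigation lemma at the reward-dependent stopping time, by conditioning on $\mathcal F_\sigma$ and using obliviousness and uniformity over start node and time. The paper leaves that step implicit, and your justification is sound.
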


% RALEX therefore occupies a different point in the design space than LEX and CB-LEX. The earlier algorithms keep the exploration walk exactly equal to the canonical walk and obtain the sharpest worst-case guarantee available under Definition~\ref{def:mixing-sequence}. RALEX trades some of that worst-case sharpness for a movement rule that actively favors locally promising arms. Theorem~\ref{thm:ralex} is a worst-case statement: it shows that the reward-aware bias can be added without losing sublinear regret, but it does not by itself certify that the bias helps. To obtain a genuine improvement one needs an additional condition saying that the optimistic proposal moves extra probability mass toward the arms that actually determine the stopping time. The next result isolates exactly that mechanism.
RALEX represents a distinct trade-off within the algorithmic design space. While LEX and CB-LEX achieve the sharpest worst-case guarantees by adhering strictly to the canonical walk, RALEX sacrifices a portion of this theoretical sharpness to prioritize transitions toward locally promising arms. Theorem \ref{thm:ralex} serves as a worst-case baseline; it establishes that reward-aware bias is compatible with sublinear regret, though it does not formally certify a performance gain over the baseline. A genuine improvement requires an additional condition: the optimistic proposal must successfully allocate probability mass toward the specific arms that determine the stopping time. The following result isolates this mechanism.

\subsection{When reward awareness yields a provable gain}
\label{subsec:ralex-gain}

% Theorem~\ref{thm:ralex} shows that the optimistic bias can be added without destroying the canonical exploration guarantee, but it does not yet identify a regime in which the bias provably helps.

% The relevant object is the perpetual exploration process. Let $\{\widetilde X_t\}_{t\ge 0}$ be the process that keeps running the RALEX exploration kernel forever, without ever switching to navigation. Fix a deterministic burn-in time $t_{\mathrm{gain}}\in\{0,\dots,T\}$ and define
% \[
% \tau_1
% \triangleq
% 1+\tau_{\mathrm{mix}}\!\left(\frac{\pi_*}{4}\right)
% =
% 1+W+\left\lceil \frac{1}{\rho\gamma}\log\frac{2}{\pi_*^{3/2}}\right\rceil,
% \qquad
% \vartheta_1\triangleq\frac{1}{2}\,\varepsilon_0^{\tau_1}.
% \]
% For every block index $j\ge 1$, write $s_j\triangleqt_{\mathrm{gain}}+(j-1)\tau_1,
% \qquad
% \widetilde{\mathcal G}_{j-1}\triangleq\sigma\!\left(\widetilde{\mathcal F}_{s_j}\right).$ The quantity $\Pr\!\left(\widetilde X_{s_j+\tau_1}=a\mid \widetilde{\mathcal G}_{j-1}\right)$
% is the actual probability that the $j$-th post-burn-in block ends at arm $a$. This is the quantity that enters the visitation argument. Because the optimistic scores are recomputed after every reward observation inside the block, a one-step statement about the proposal kernel alone is not enough for the online algorithm. The theorem therefore uses these block-end probabilities directly.
Theorem \ref{thm:ralex} establishes that optimistic bias is compatible with canonical exploration, but it does not specify a regime where this bias provably enhances performance. To characterize these gains, we analyze the perpetual exploration process $\{\widetilde X_t\}_{t \ge 0}$, which follows the RALEX kernel indefinitely without transitioning to navigation.Fix a deterministic burn-in time $t_{\mathrm{gain}} \in \{0, \dots, T\}$ and define the mixing parameters:
\begin{equation}
\label{eq:tau_1}
\tau_1 \triangleq 1+W+\left\lceil \frac{1}{\rho\gamma}\log \frac{2}{(\pi_*)^{3/2}} \right\rceil; \quad \vartheta_1 \triangleq \frac{1}{2} \varepsilon_0^{\tau_1},
\end{equation}
where $\pi_* \triangleq \min_{a \in A} \pi(a)$. For each block index $j \ge 1$, let $s_j \triangleq t_{\mathrm{gain}} + (j-1)\tau_1$ and define the filtration $\widetilde{\mathcal G}_{j-1} \triangleq \sigma(\widetilde{\mathcal F}_{s_j})$. The quantity $\Pr(\widetilde X_{s_j+\tau_1} = a \mid \widetilde{\mathcal G}_{j-1})$ represents the probability that the $j$-th post-burn-in block terminates at arm $a$. Because optimistic scores are recomputed after every reward observation within a block, a one-step bound on the proposal kernel is insufficient for the online algorithm. Consequently, our analysis utilizes these block-end probabilities directly to quantify the visitation benefits of the reward-aware bias.
\begin{theorem}[RALEX performance gain]
\label{thm:ralex_gain}
Assume the conditions of Theorem \ref{thm:ralex} and let $\{\widetilde{X}_t\}$ be the perpetual exploration process. Fix a subset $B \subseteq A$ containing $a^\star$ and constants $\kappa_{\mathrm{gain}}(a) \ge \vartheta_1 \pi(a)$ for $a \in B$. Suppose that for every block index $j \ge 1$ with $s_j + \tau_1 \le T$:
\begin{equation}
\label{eq:gain_condition}
\Pr(\widetilde{X}_{s_j+\tau_1} = a \mid \widetilde{\mathcal{G}}_{j-1}) \ge \kappa_{\mathrm{gain}}(a), \quad \forall a \in B.
\end{equation}
Define $\kappa(a) \triangleq \kappa_{\mathrm{gain}}(a)$ for $a \in B$ and $\kappa(a) \triangleq \vartheta_1 \pi(a)$ otherwise. Let the complexity be:
\begin{equation}
\label{eq:psi_ra_gain}
\Psi_{\mathrm{RA}}^{\mathrm{gain}}(\delta) \triangleq \max \left\{ \frac{\log(4nT/\delta)}{\kappa(a^\star)\Delta_{\min}^2}, \max_{a \neq a^\star} \frac{\log(4nT/\delta)}{\kappa(a)\Delta(a)^2} \right\}.
\end{equation}
For a constant $C>0$, if $T \ge T_{\mathrm{nom}} \triangleq t_{\mathrm{gain}} + \tau_1 \lceil C \Psi_{\mathrm{RA}}^{\mathrm{gain}}(\delta) \rceil$, then with probability at least $1-\delta$: $\hat a^\star = a^\star$, $\sigma \le T_{\mathrm{nom}}$, and $R_{\mathrm{learn}}(T) \le T_{\mathrm{nom}}$.
\end{theorem}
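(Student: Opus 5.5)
We follow the CB-LEX/RALEX template: couple to the perpetual exploration process $\{\widetilde X_t\}$, establish a block-wise visitation lower bound with success rates $\kappa(a)$, and convert visit counts into confidence-interval separation at a deterministic time. Because the real algorithm and $\widetilde X$ coincide up to $\sigma$, it suffices to show that the stopping certificate \eqref{eq:cblex-stop} holds on the perpetual process by the deterministic time $T_{\mathrm{nom}}$, and that whenever it holds the leader is $a^\star$.

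\textbf{Step 1 (block-end probabilities for every arm).} For $a\in B$ the hypothesis \eqref{eq:gain_condition} already gives $\Pr(\widetilde X_{s_j+\tau_1}=a\mid\widetilde{\mathcal G}_{j-1})\ge\kappa_{\mathrm{gain}}(a)$. For $a\notin B$ we reuse the argument of Theorem~\ref{thm:ralex} with $\tau_1$ in place of $\tau_0$:
\begin{itemize}
\item by \eqref{eq:ralex-floor}, with conditional probability at least $\varepsilon_0^{\tau_1}$ every step of the block is drawn from the canonical component;
\item on that event the block endpoint has the law of a $\tau_1$-step canonical walk from $\widetilde X_{s_j}$, and $\tau_1-1=\tau_{\mathrm{mix}}(\pi_*/4)$ by \eqref{eq:tau-mix-main};
\item by the uniform mixing lemma (Lemma~\ref{lem:uniform-mixing}) that endpoint puts mass at least $\pi(a)-\pi_*/4\ge\pi(a)/2$ on $a$.
\end{itemize}
Multiplying gives $\Pr(\widetilde X_{s_j+\tau_1}=a\mid\widetilde{\mathcal G}_{j-1})\ge\tfrac12\varepsilon_0^{\tau_1}\pi(a)=\vartheta_1\pi(a)=\kappa(a)$. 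To decompose the block into the selection of a canonical or proposal step at each round, we realize the mixture kernel with an independent coin per step, so the all-canonical event is a conditional product. The extra $+1$ in $\tau_1$ only gives slack; a one-step shift is harmless since mixing is monotone in TV.

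\textbf{Step 2 (visitation counts).} Let $m\triangleq\lceil C\Psi^{\mathrm{gain}}_{\mathrm{RA}}(\delta)\rceil$ and $Z_j(a)\triangleq\mathbf 1\{\widetilde X_{s_j+\tau_1}=a\}$, which is $\widetilde{\mathcal G}_j$-measurable with conditional mean at least $\kappa(a)$. Note $s_j+\tau_1=s_{j+1}$, so these indicators are adapted to $\{\widetilde{\mathcal G}_j\}$. The one-sided Chernoff bound for adapted Bernoulli variables (as in the LEX visitation lemma) gives $\sum_{j\le m}Z_j(a)\ge m\kappa(a)/2$ for all $a$, with failure probability at most $n\exp(-m\kappa_{\min}/8)\le\delta/2$. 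Here we use $m\kappa(a)\gtrsim C\log(4nT/\delta)$, which follows from the definition \eqref{eq:psi_ra_gain} since every arm, including $a^\star$ through the $\Delta_{\min}$ term, appears there. The true counts satisfy $\varphi_{T_{\mathrm{nom}}}(a)\ge\sum_j Z_j(a)$, so at $t=t_{\mathrm{gain}}+m\tau_1=T_{\mathrm{nom}}$ we have $\varphi_t(a)\ge m\kappa(a)/2$.

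\textbf{Step 3 (uniform confidence and separation).} We invoke the adaptive-sampling uniform confidence lemma from the RALEX analysis (reward-table construction). With probability at least $1-\delta/2$, $|\hat\mu_t(a)-\mu(a)|\le w_t(a)$ for all $a$ and all $t\le T$; this is needed because the counts are now reward-dependent. On this event, if the certificate holds at any time, the leader's LCB exceeds every competitor's UCB, so the leader must be $a^\star$. This gives correctness at whatever time $\sigma$ occurs. At $T_{\mathrm{nom}}$, the count bounds give $w(a)\le\Delta(a)/4$ for $a\neq a^\star$ and $w(a^\star)\le\Delta_{\min}/4$ once $C$ is large. Hence $\mathrm{LCB}(a^\star)\ge\mu(a^\star)-2w(a^\star)>\mu(a)+2w(a)\ge\mathrm{UCB}(a)$, so $a^\star$ is the empirical leader and the certificate fires, i.e. $\sigma\le T_{\mathrm{nom}}$. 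Finally, $R_{\mathrm{learn}}\le\sigma\le T_{\mathrm{nom}}$.

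\textbf{Main obstacle.} The delicate point is Step 1 for $a\notin B$ together with the filtration bookkeeping. The bias component depends on rewards observed inside the block, so the all-canonical event must be defined via auxiliary coins that are independent of rewards and graph. Mixing must also be applied from an arbitrary random start $\widetilde X_{s_j}$ at a deterministic time $s_j$. I would handle this by conditioning on $\widetilde{\mathcal G}_{j-1}$ and the oblivious graph sequence, then using the shift-invariance of Definition~\ref{def:mixing-sequence}, which holds almost surely from every start time.
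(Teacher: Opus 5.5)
Your proposal is correct and follows essentially the same route as the paper's proof. It reduces to the perpetual RALEX process, pairs the reward-table adaptive confidence event with block-end indicators whose conditional success rates are $\kappa_{\mathrm{gain}}(a)$ on $B$ and $\vartheta_1\pi(a)$ off $B$ (via the independent-coin all-canonical event and Lemma~\ref{lem:uniform-mixing}), applies the adapted Chernoff bound, and then gets separation and correctness at $t_{\mathrm{gain}}+m\tau_1$. One small detail: your strict inequality in the separation step needs $C>16$; alternatively, as the paper does, use $\ge$ together with strictly positive widths to make $a^\star$ the unique empirical leader.
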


The theorem is formulated using block-end probabilities, as these are the fundamental objects in our analysis. Appendix \ref{app:ralex-gain-proof} provides a sufficient condition for verifying this gain. Let $\mathcal C_j$ be the event that the first $\tau_1-1$ rounds of block $j$ use the canonical component of RALEX. If the final RALEX step in that block satisfies

\begin{equation}
\label{eq:gain_sufficient}
\begin{split}
\mathbb{E} \left[ \widetilde{K}_{s_j+\tau_1}(\widetilde{X}_{s_j+\tau_1-1},a) \;\middle|\; \widetilde{\mathcal{G}}_{j-1},\mathcal{C}_j \right] 
&\ge \left(\frac{\varepsilon_0}{2}+\beta(a)\right)\pi(a) \\
&\text{for some } \beta(a)\ge 0,
\end{split}
\end{equation}
then the block-end coefficient can be taken to be
\(
\kappa_{\mathrm{gain}}(a) =
\left(1+ 2\beta(a)/\varepsilon_0\right)\vartheta_1\pi(a).
\)
In other words, the improvement factor is determined by the amount of extra endpoint mass that the last reward-aware step contributes above the canonical floor.

To interpret when this yields a genuine reduction of the \emph{global} learning term, define
\[
g(a):=
\begin{cases}
\Delta_{\min} & \text{if } a=a^\star,\\
\Delta(a) & \text{if } a\neq a^\star,
\end{cases}
\qquad
\Lambda(a):=\frac{1}{\pi(a)g(a)^2},
\]
Let $\Lambda_{\max}:=\max_{a\in A}\Lambda(a)$. Thus the leading learning term in the safe RALEX bound is proportional to
\(
\vartheta_1^{-1}\log(4nT/\delta)\,\Lambda_{\max}.
\)
For a factor $\zeta>0$, define the $\zeta$-near-bottleneck set
\[
\mathcal H_\zeta
:=
\left\{
a\in A:\;
\Lambda(a)\ge \frac{\Lambda_{\max}}{1+\zeta}
\right\}.
\]
These are the arms whose safe contribution lies within a factor $1+\zeta$ of the worst one. We want to control this set as improving only the exact maximizers need not reduce the global maximum, because a near-bottleneck arm left unimproved can become the new dominant term after the gain is applied.

\begin{corollary}[Uniform gain on all near-bottleneck arms]
\label{cor:ralex_gain}
Assume the conditions of Theorem~\ref{thm:ralex_gain}, fix $\zeta>0$, and suppose that $\mathcal H_\zeta\subseteq B$ and
\(
\kappa_{\mathrm{gain}}(a)\ge (1+\zeta)\vartheta_1\pi(a)
\text{ for every }a\in \mathcal H_\zeta.
\)
Then
\[
\Psi_{\mathrm{RA}}^{\mathrm{gain}}(\delta)
\le
\frac{\log(4nT/\delta)}{(1+\zeta)\vartheta_1}\,\Lambda_{\max}.
\]
Consequently,
\begin{equation}
\label{eq:gain_bound}
T_{\mathrm{nom}}
=
O\!\left(
t_{\mathrm{gain}}
+
\frac{\tau_1\log(4nT/\delta)}{(1+\zeta)\vartheta_1}\,\Lambda_{\max}
\right).
\end{equation}
Relative to the safety theorem for RALEX, the leading learning term decreases by the factor $(1+\zeta)^{-1}$.
\end{corollary}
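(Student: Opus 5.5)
The argument is a term-by-term comparison inside the maximum defining $\Psi_{\mathrm{RA}}^{\mathrm{gain}}(\delta)$ in \eqref{eq:psi_ra_gain}, followed by substitution into the expression for $T_{\mathrm{nom}}$ from Theorem~\ref{thm:ralex_gain}. With the notation $g(a)$ and $\Lambda(a)=1/(\pi(a)g(a)^2)$, every entry of the maximum has the form $\log(4nT/\delta)/(\kappa(a)g(a)^2)$. The $a^\star$ entry uses $g(a^\star)=\Delta_{\min}$, and each $a\neq a^\star$ entry uses $g(a)=\Delta(a)$. Hence
\[
\Psi_{\mathrm{RA}}^{\mathrm{gain}}(\delta)=\log(4nT/\delta)\max_{a\in A}\frac{\pi(a)}{\kappa(a)}\,\Lambda(a),
\]
and it suffices to show $\frac{\pi(a)}{\kappa(a)}\Lambda(a)\le \Lambda_{\max}/((1+\zeta)\vartheta_1)$ for every $a\in A$.

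I split into two cases according to whether $a$ is a near-bottleneck arm.

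\emph{Case $a\in\mathcal H_\zeta$.} The hypothesis $\mathcal H_\zeta\subseteq B$ gives $\kappa(a)=\kappa_{\mathrm{gain}}(a)\ge(1+\zeta)\vartheta_1\pi(a)$. Together with $\Lambda(a)\le\Lambda_{\max}$, this yields the required bound directly.

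\emph{Case $a\notin\mathcal H_\zeta$.} By definition of the set, $\Lambda(a)<\Lambda_{\max}/(1+\zeta)$. We only need the safe lower bound $\kappa(a)\ge\vartheta_1\pi(a)$. This holds whether or not $a\in B$: on $B$ it is the standing assumption $\kappa_{\mathrm{gain}}(a)\ge\vartheta_1\pi(a)$ of Theorem~\ref{thm:ralex_gain}, and off $B$ it is an equality by definition. Hence $\frac{\pi(a)}{\kappa(a)}\Lambda(a)\le \Lambda(a)/\vartheta_1< \Lambda_{\max}/((1+\zeta)\vartheta_1)$.

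Taking the maximum over $a$ gives the claimed bound on $\Psi_{\mathrm{RA}}^{\mathrm{gain}}(\delta)$. To get \eqref{eq:gain_bound}, substitute this bound into $T_{\mathrm{nom}}=t_{\mathrm{gain}}+\tau_1\lceil C\Psi_{\mathrm{RA}}^{\mathrm{gain}}(\delta)\rceil$. The ceiling adds at most $\tau_1$, which is absorbed into the $O(\cdot)$ term, since $\Lambda_{\max}\ge 1$ and $\log(4nT/\delta)\ge 1$ while $\vartheta_1\le 1/2$ and $\zeta$ is fixed. The high-probability guarantees ($\hat a^\star=a^\star$, $\sigma\le T_{\mathrm{nom}}$, $R_{\mathrm{learn}}\le T_{\mathrm{nom}}$) then follow unchanged from Theorem~\ref{thm:ralex_gain}.

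For the final comparison, the safe version of the same block-end analysis, which uses $\kappa\equiv\vartheta_1\pi$, has leading term $\tau_1\vartheta_1^{-1}\log(4nT/\delta)\Lambda_{\max}$. Our bound is this term multiplied by $(1+\zeta)^{-1}$.

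No step is a real obstacle; the proof is pure bookkeeping. The one point needing care is the case $a\notin\mathcal H_\zeta$ but $a\in B$. There $\kappa_{\mathrm{gain}}(a)$ need not exceed $(1+\zeta)\vartheta_1\pi(a)$, so the argument must rely on the assumed floor $\kappa_{\mathrm{gain}}\ge\vartheta_1\pi$. This is exactly why the corollary requires the gain on all of $\mathcal H_\zeta$ and not only on the maximizers of $\Lambda$. A smaller point is the interpretation of "relative to the safety theorem": the comparison should be made with the $\tau_1,\vartheta_1$ form of the safe bound, not the $\tau_0,\vartheta_0$ form of Theorem~\ref{thm:ralex}.
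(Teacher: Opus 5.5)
Your proof is correct and follows the same route as the paper: bound each entry of the maximum defining $\Psi_{\mathrm{RA}}^{\mathrm{gain}}(\delta)$, then substitute into $T_{\mathrm{nom}}$. Your treatment of arms outside $\mathcal H_\zeta$, using $\Lambda(a)<\Lambda_{\max}/(1+\zeta)$ together with the floor $\kappa(a)\ge\vartheta_1\pi(a)$, is more careful than the paper's proof, which skips that case by asserting that $\mathcal H_\zeta$ contains $a^\star$ (true for uniform $\pi$ but not in general) and records only an $O(\cdot)$ bound rather than the exact stated inequality.
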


This is the strongest multiplicative improvement that Theorem~\ref{thm:ralex_gain} certifies without additional structure on the reward-aware proposal: the gain must cover every arm that could otherwise remain, or become, the dominant term.

\section{Numerical Simulations}
\label{sec:experiments}
We empirically ground our theoretical guarantees with a series of simulations designed to evaluate the practical performance of the proposed algorithm family. Our experiments, summarized in Figure~\ref{fig:quantitative_analysis}, focus on isolating the spatial behavior of different exploration strategies and quantifying their efficiency in identifying optimal regions. Specifically, we evaluate (a) mean cumulative regret on the Hard instance and (b) the distribution of adaptive stopping times across both Easy and Hard instances.

% \begin{wrapfigure}{r}{0.55\textwidth}
%     \centering
%     \vspace{-15pt}
%     \includegraphics[width=\linewidth]{fig-rlc.pdf} 
%     \caption{Quantitative performance.}
%     \label{fig:quantitative_analysis}
%     \vspace{-10pt}
% \end{wrapfigure}

\begin{figure}[h]
    \centering
    \includegraphics[width=\columnwidth]{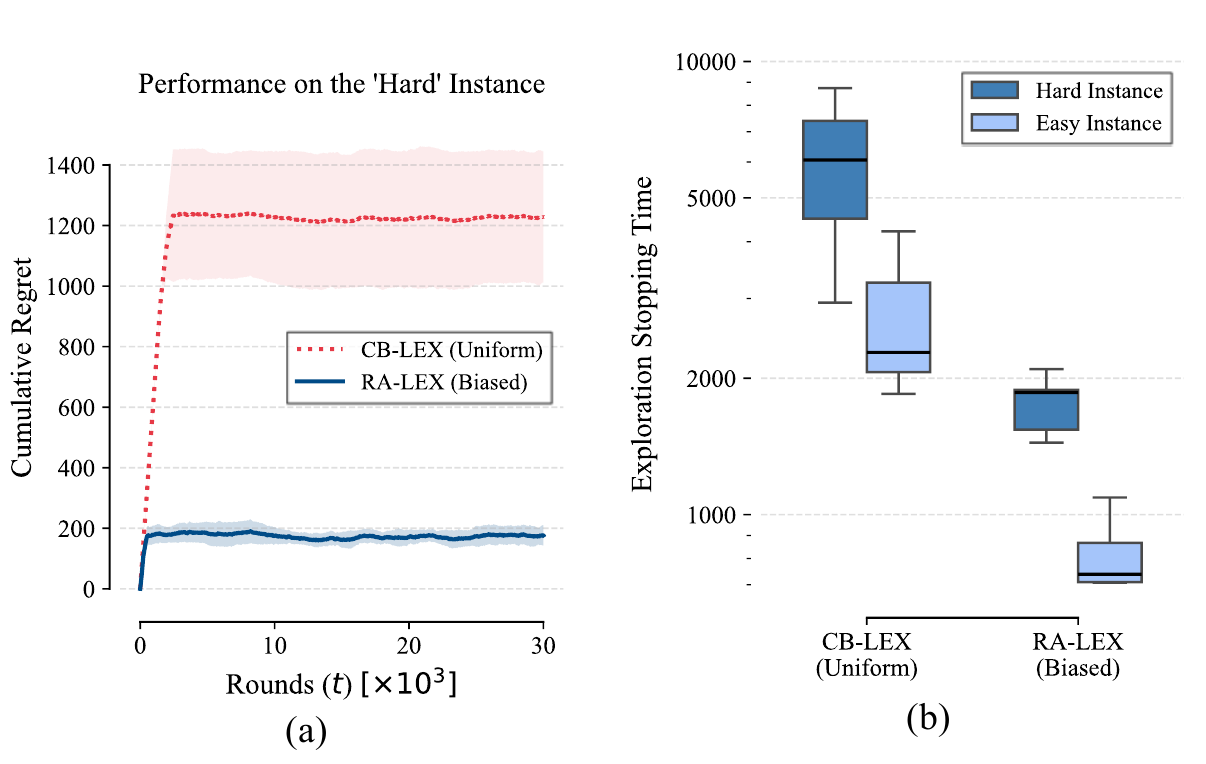} 
    \caption{Quantitative performance.}
    \label{fig:quantitative_analysis}
\end{figure}
\paragraph{Environment and Reward Model.}
The environment is modeled as a dynamic graph $G_t=(A,E_t)$ defined over a discrete set of $n=|A|=205$ nodes, simulated over a total horizon of $T = 70,000$ rounds. The network topology evolves via degree-preserving edge switches, maintaining a fixed degree sequence across all snapshots and thus a common stationary distribution $\pi$ as required by Definition~\ref{def:mixing-sequence}. Upon visiting a node $a \in A$, the agent observes a stochastic reward $r_t(a) \in [0,1]$ characterized by an underlying mean $\mu(a)$ perturbed by bounded, zero-mean noise.

\paragraph{Experimental Design.}
We evaluate performance on two distinct reward instances. The \textit{hard instance} features a small suboptimality gap, with a single optimal node $\mu(a^\star)=0.95$ and several competing arms in the range $[0.45, 0.65]$, requiring significant evidence to distinguish the true optimum. Conversely, the \textit{easy instance} features a large gap where suboptimal arms have low rewards in the range $[0.1, 0.2]$, allowing for rapid identification. For the dynamic graph, we set the edge appearance probability to 0.1 and the disappearance probability to 0.01. For the learning algorithms, we set the reward-bias to $\lambda=10$ for \textsc{RA-LEX}. Results are averaged over five random seeds, with shaded regions in regret curves representing one standard deviation.

\subsection{Results and Analysis}

% \begin{wrapfigure}{l}{0.5\textwidth}
%     \vspace{-15pt}
%     \centering
%     \includegraphics[width=\linewidth]{fig_heatmap_comparison_RLC.pdf} 
%     \caption{Exploration footprints. \textsc{CB-LEX} sweeps uniformly, while \textsc{RA-LEX} acts as a ``searchlight'' on the optimal region.}
%     \label{fig:heatmaps}
%     \vspace{-10pt}
% \end{wrapfigure}

\begin{figure}[h]
    \centering
    \includegraphics[width=\columnwidth]{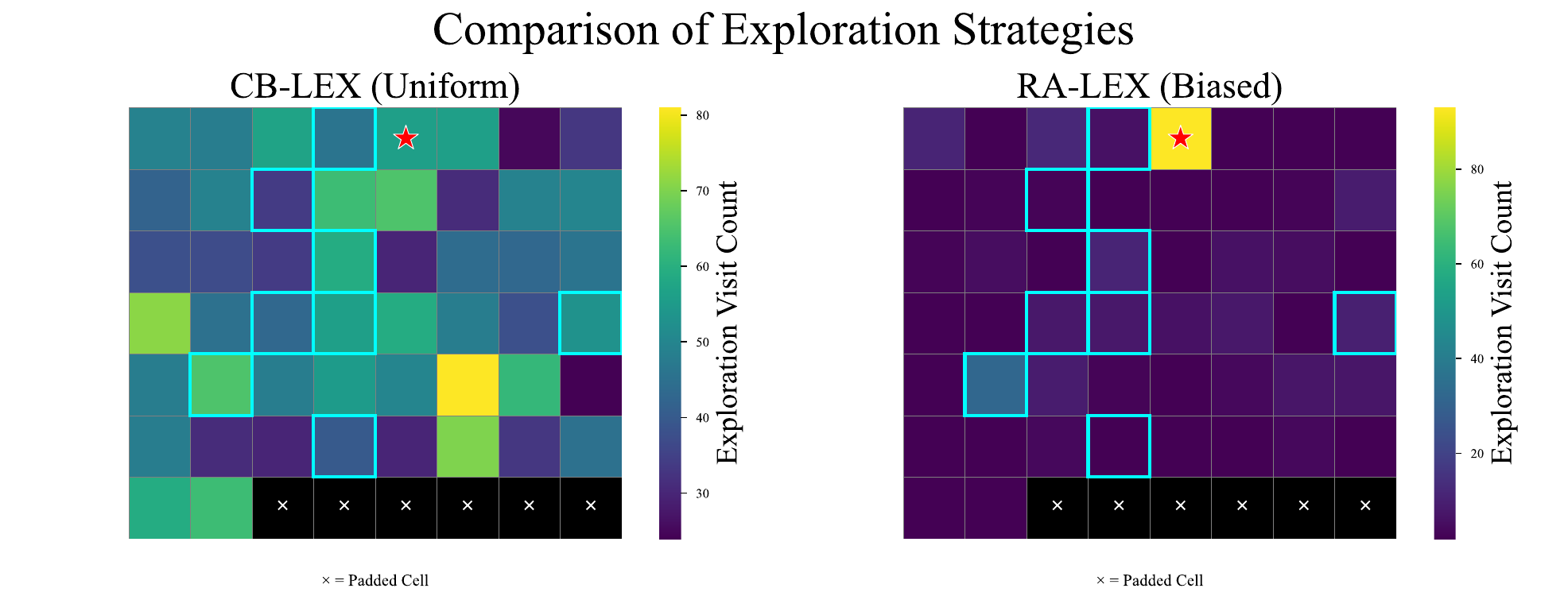}
    \caption{Exploration footprints. \textsc{CB-LEX} sweeps uniformly, while \textsc{RA-LEX} acts as a ``searchlight'' on the optimal region.}
    \label{fig:heatmaps}
\end{figure}

\textbf{Visualizing Exploration Strategies:} Figure~\ref{fig:heatmaps} provides the visual intuition behind the performance differences by comparing exploration-phase visitation patterns. The heatmap for \textsc{CB-LEX} is relatively uniform, showing a ``blind'' search of the entire node set. In stark contrast, the \textsc{RA-LEX} heatmap is highly concentrated, resembling a ``searchlight'' focused on the true high-reward zones. This visually confirms that its biased random walk successfully prioritizes promising regions, explaining its superior sample efficiency.

\textbf{Efficiency of Reward-Aware Exploration:} As shown in Figure~\ref{fig:quantitative_analysis}a, both algorithms achieve sublinear regret, indicated by the eventual flattening of their regret curves. However, the reward-aware \textsc{RA-LEX} achieves significantly lower final regret compared to the uniform exploration of \textsc{CB-LEX}, corroborating the benefit of biasing the exploration walk toward high-reward regions.

\textbf{Validation of the Adaptive Mechanism:} Figure~\ref{fig:quantitative_analysis}b confirms that the adaptive stopping rule correctly responds to statistical difficulty, with both algorithms stopping significantly earlier on the Easy instance. On the Hard instance, efficiency gains are pronounced: \textsc{RA-LEX} stops at a median of 1,850 rounds with a remarkably tight interquartile range (IQR) of 1,600 to 1,900. In contrast, \textsc{CB-LEX} requires a median of 6,000 rounds and exhibits a much wider variance (IQR of 4,500 to 7,500). This confirms that reward-aware biasing accelerates best-arm identification by nearly 70\% and makes the required exploration duration significantly more predictable.

\textbf{Robustness:} Further sensitivity analysis identifies a ``sweet spot'' for the bias parameter $\lambda$ that successfully balances exploiting high-reward regions with maintaining sufficient stochasticity to escape local optima.

% \input{Conclusion/c1}

% \printbibliography[heading=centered, title={\textsc{References}}]
% \end{refsection}

% % \printbibliography

% \clearpage

% \begin{refsection}
% \input{new_app}
% \printbibliography[heading=centered, title={\textsc{Appendix References}}]
% \end{refsection}

 \section{Conclusion}
\label{sec:conclusion}

% We studied stochastic bandits with local movement on dynamic graphs. The central point is that after exploration stops, the learner still has to move through the remaining graph sequence to reach the arm it wants to play. 

% Because exploration can end at a random time, conditions stated only over the whole horizon are not enough. One must control what happens on the suffix that starts when exploration stops. We addressed this by putting the structural assumption on the equal-neighbor walk itself. Under a condition that keeps this walk stable over time and guarantees enough useful rounds in every time window, we proved sublinear expected regret for three explore-then-commit algorithms. LEX gives the baseline argument, CB-LEX removes the need to tune the exploration length in advance, and RALEX shows that reward-aware movement can be added without losing worst-case safety, while also admitting a separate gain theorem when the bias helps.

% There are two natural next steps. One is to weaken the structural condition while keeping the stopping-time argument intact. The other is to extend the analysis to broader local kernels, such as Metropolis-type walks, which may enlarge the positive regime.
We investigated stochastic multi-armed bandits under local movement constraints on dynamic graphs, a setting where the structural necessity of post-exploration navigation decouples identification from exploitation. Because exploration concludes at an uncertain stopping time, whole-horizon connectivity is an insufficient metric for learnability; instead, structural properties must be controlled over the subsequent random suffix. This requirement led to our process-agnostic approach of placing assumptions directly on the graph's intrinsic walk.By imposing a mixing condition on the canonical walk, we established sublinear expected regret for a family of explore-then-commit algorithms. Our framework includes a fixed-budget baseline (LEX), an adaptive variant (CB-LEX) that eliminates oracle dependence, and a reward-aware strategy (RALEX) that maintains worst-case safety while providing provable performance gains.Future research will focus on weakening these structural requirements and extending the analysis to broader local kernels, such as Metropolis-type walks, to further expand the positive regime for bandit learning on dynamic networks

% One combined bibliography for the main paper and appendix
\printbibliography[heading=centered, title={\textsc{References}}]

\clearpage

\section{Related work and positioning}
\label{sec:related-work}

This paper lies at the intersection of stochastic bandits, graph-constrained decision problems, and random walks on dynamic graphs. The closest works come from several different literatures, and part of the contribution here is to separate them cleanly.

In the classical form of the stochastic bandits (\cite{auer2002finite,lattimore-szepesvari2020,slivkins2019introduction}), the learner repeatedly interacts with a fixed set of actions each of which yields an unknown real-valued payoff, or \textit{reward}. The objective of the learner is to maximize cumulative rewards by iteratively choosing actions that balance the discovery of new information (\textit{exploration}) with the pursuit of known high-payoff options (\textit{exploitation}). This model has been successfully applied across a wide range of applications, including clinical trials \cite{gittins1}, wireless communication \cite{cogradio}, high-dimensional decision and co-ordination problems (\cite{Kleinberg-lipschitz, Kleinberg-metric,chakraborty2026multi}) recommendation systems (\cite{Li_2010, bouneffouf:hal-00753401}), financial optimization \cite{brochu}, and economically motivated incentivized explorations (\cite{fraz, wang, liu20,chakraborty2024incentivized,chakraborty2025incentivized,sellke2021price}).  

Our confidence-based stopping rules borrow directly from that tradition. What changes in the present setting is that the learner cannot query an arbitrary arm at each round. The action it can take next is constrained by its current location in a changing graph, and after identifying the best arm it still has to reach it. This is why the proofs naturally separate into coverage, estimation, and navigation rather than following a single-phase regret analysis.

A first nearby literature studies graph-structured feedback, where playing one arm reveals losses or rewards of neighboring arms in a feedback graph (\cite{mannor-shamir2011,alon2017feedback,carpentier-valko2016,wen2024contextualgraphfeedback,valko2016bandits}). Those models use graphs to encode side observations. Our problem is different. The graph here determines feasibility of motion rather than observability, and the learner still receives only bandit feedback from the arm it actually visits. This makes local reachability and hitting times part of the statistical problem.

A second nearby literature studies bandits on a fixed graph with local movement constraints. The static graph-bandit model of Zhang, Johansson, and Li \cite{zhang-johansson-li2023graphbandit} and the multi-agent extension of Paschalidis, Zhang, and Li \cite{paschalidis-zhang-li2024multigraphbandit} assume a known graph and analyze regret on that fixed structure. Those papers are the right comparison point for sanity checks such as the complete graph, and they are one reason we explicitly position the present paper as a structural learnability paper rather than as a near-optimal regret paper. Our setting differs in two ways. The graph changes over time and is only locally observed, and the theorem is process-agnostic rather than specialized to a fixed known topology.

The dynamic random-walk literature is the main mathematical backdrop for the structural condition. Avin, Kouck\'y, and Lotker showed that the simple random walk on evolving graphs can behave very differently from the static case~\cite{avin-koucky-lotker2008}. Olshevsky and Tsitsiklis identified fixed-degree structure as a key positive regime for equal-neighbor dynamics, while even mild degree fluctuations can lead to exponentially slower convergence~\cite{olshevsky-tsitsiklis2013}. Sauerwald and Zanetti developed general mixing and hitting results for dynamic random walks with a common stationary distribution~\cite{sauerwald-zanetti2019}, and Shimizu and Shiraga sharpened this picture for reversible common-stationary walks, including strong bounds for lazy Metropolis on arbitrary dynamic graphs~\cite{shimizu-shiraga2022}. Cai, Sauerwald, and Zanetti studied randomly evolving graphs from the random-walk viewpoint~\cite{cai-sauerwald-zanetti2020}. Our structural condition is designed to align with this body of work. The main message is not that dynamic graphs are always learnable under the canonical walk. It is that once the canonical walk has a common stationary distribution and recurring absolute-gap contraction in every time window, a clean bandit theorem becomes possible.

This also explains our design choice of keeping the canonical walk instead of moving immediately to a Metropolis-type kernel. Metropolis walks can enlarge the positive regime because they enforce a common stationary distribution by construction~\cite{shimizu-shiraga2022}. We do not use them in this first paper because the question we want to answer is more basic: when does the graph-induced canonical walk already suffice? That question leads naturally to a characterization theorem. A theorem for Metropolis on arbitrary dynamic graphs would be broader, but it would answer a different structural question.

The most closely related dynamic-graph bandit papers are recent and model-specific. One line formulates bandit learning on dynamic graphs through block-density and temporal-stability conditions and studies LEX-, CB-LEX-, and RALEX-type algorithms on that basis~\cite{chakraborty-et-al2025dynamicgraphbandits}. Another line analyzes i.i.d.\ Erd\H{o}s--R\'enyi and edge-Markovian graph processes and proves process-specific upper and lower bounds~\cite{fmab2026}. The present paper is not a superset of those results. It is orthogonal to them. Relative to the block-density formulation, the main conceptual change here is that the structural condition is stated on the walk itself, not on a whole-horizon count of graph blocks. This makes the condition shift-invariant and therefore robust to the stopping times created by confidence-based exploration and by the navigation suffix. Relative to the process-specific analyses, the present paper trades model coverage for a walk-centric process-agnostic theorem in the common-stationary regime. The same shift also explains the algorithmic changes: the confidence-based rule is corrected from a top-two comparison to a leader-versus-all-competitors certificate, and the reward-aware analysis is split into a worst-case safety theorem and a separate gain theorem.

Finally, the problem is related in spirit to bandits with switching or movement costs (\cite{dekel-ding-koren-peres2014,koren-livni-mansour2017metric,koren-livni-mansour2017adaptive}). In those models all arms remain globally available and movement enters as an additional penalty. Here movement is part of feasibility itself. The learner may know the best arm and still be unable to play it for many rounds because the graph does not currently provide a route. That distinction is exactly why the navigation term appears as a separate structural object in our regret decomposition.

Our paper can be understood as a structural characterization for locally constrained stochastic bandits on dynamic graphs in the common-stationary regime. Its novelty is not a stronger regret guarantee on every graph family. Its is the walk-centric, shift-invariant sufficient condition, the corrected confidence-based stopping rule, the clean decomposition into learning and navigation, and the extension of that program from a canonical baseline to a reward-aware exploration rule that is safe in the worst case and provably advantageous under an additional gain condition.

\section{Why weaker conditions fail}
\label{app:weaker-conditions-fail}

This appendix records two negative examples that motivate Definition~\ref{def:mixing-sequence}. The first shows that connectivity of the horizon-wide union graph does not prevent linear regret. The second shows that even a positive fraction of individually good time blocks is not enough if those blocks are allowed to occur only at the end of the horizon.

\subsection{Horizon-wide connectivity is insufficient}
\label{app:horizon-long}

\begin{proposition}[Horizon-wide connectivity does not prevent linear regret]
\label{prop:horizon-long}
There exist $n\ge 2$, $T\ge 2$, a reward instance with a unique optimal arm $a^\star$ and $\Delta_{\min}=1$, and a graph sequence $\{G_t\}_{t=1}^T$ such that the union graph
\[
G_{[1,T]}:=\bigcup_{t=1}^T G_t
\]
is the complete graph $K_n$, but every local policy started from $a_0\sim\mathrm{Unif}(A)$ satisfies
\[
\mathbb E[R(T)]\ge \Bigl(1-\frac1n\Bigr)(T-1).
\]
\end{proposition}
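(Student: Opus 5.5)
We will give an explicit construction in which all edges of $K_n$ appear together, but only in the last round. Take any $n\ge 2$ and $T\ge 2$. Let $a^\star=1$ have deterministic reward $\mu(1)=1$, and let every other arm have deterministic reward $0$, so that $\Delta(a)=1$ for all $a\neq a^\star$ and $\Delta_{\min}=1$. The graph sequence is $G_t=(A,\emptyset)$ for $t=1,\dots,T-1$ and $G_T=K_n$. Then the union $G_{[1,T]}$ is $K_n$, as the proposition requires.

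The key step is a reachability argument. For $t\le T-1$ the neighborhood $N_t(a_{t-1})$ is empty, so $L_t(a_{t-1})=\{a_{t-1}\}$. Any local policy, randomized or not, is therefore forced to play $a_t=a_0$ for all $t\le T-1$. This holds for every realization of the policy's internal randomness, because the local action set is a singleton. The graph process is deterministic, hence trivially oblivious.

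Next we condition on the starting node, using $a_0\sim\mathrm{Unif}(A)$ independent of everything else. On the event $\{a_0\neq a^\star\}$, which has probability $1-1/n$, each of the first $T-1$ rounds incurs pseudo-regret $\mu(a^\star)-\mu(a_0)=1$. Round $T$ incurs nonnegative regret. Using $\mathbb E[R(T)]=\sum_t\mathbb E[\mu(a^\star)-\mu(a_t)]$ and dropping the nonnegative contributions from the event $\{a_0=a^\star\}$ and from round $T$, we get $\mathbb E[R(T)]\ge (1-1/n)(T-1)$.

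There is no real obstacle. The only points needing care are that (i) the bound must hold uniformly over all policies, which the singleton action sets guarantee, and (ii) the rewards must lie in $[0,1]$ with a unique optimum, which the Dirac distributions at $0$ and $1$ satisfy. If one prefers the later rounds not to be completely empty, any graph sequence that isolates the starting node until round $T$ gives the same argument; the empty graph is simply the cleanest choice.
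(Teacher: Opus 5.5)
Your proof is correct and follows essentially the paper's argument: isolate $a^\star$ from the starting node during rounds $1,\dots,T-1$, reveal $K_n$ only at round $T$, and charge regret $1$ per round on the event $\{a_0\neq a^\star\}$, which has probability $1-1/n$. The only difference is cosmetic. The paper keeps $A\setminus\{a^\star\}$ as a clique during the first $T-1$ rounds, so the learner can move among suboptimal arms, whereas your empty graphs hold it at $a_0$; both give the same bound, and your explicit Dirac rewards make the $\Delta_{\min}=1$ instance concrete.
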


\begin{proof}
Let $A_1:=\{a^\star\}$ and $A_2:=A\setminus\{a^\star\}$. For rounds $t=1,\dots,T-1$, let $G_t$ be the disjoint union of the isolated vertex $a^\star$ and the complete graph on $A_2$. At the final round let $G_T=K_n$.

The union graph over the whole horizon is $K_n$. If the learner starts at any vertex in $A_2$, which happens with probability $1-1/n$, then it cannot reach $a^\star$ during the first $T-1$ rounds because $a^\star$ is isolated throughout that period. Since $\Delta_{\min}=1$, the learner incurs regret at least $1$ on each of those rounds. Therefore
\[
\mathbb E[R(T)]
\ge
\Bigl(1-\frac1n\Bigr)(T-1).
\]
\end{proof}

\subsection{A whole-horizon fraction of good blocks is insufficient}
\label{app:global-fraction-insufficient}

\begin{proposition}[A whole-horizon density of good blocks does not imply learnability]
\label{prop:global-density-insufficient}
Fix any block length $\alpha\in\mathbb N$ and any fraction $\nu\in(0,1)$. There exist arbitrarily large horizons $T$, a reward instance with a unique optimal arm $a^\star$ and $\Delta_{\min}=1$, and a graph sequence $\{G_t\}_{t=1}^T$ with the following properties. When the horizon is partitioned into $\alpha$-blocks, at least a $\nu$-fraction of those blocks consist entirely of complete graphs. Consecutive graphs differ by at most one edge. Nevertheless, every local policy started from $a_0\sim \mathrm{Unif}(A)$ suffers expected regret $\Omega(T)$.
\end{proposition}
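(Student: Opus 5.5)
The construction will mirror that of Proposition~\ref{prop:horizon-long}, with the isolation lasting for most of the horizon and the good blocks packed into a final stretch. Fix $n\ge 3$ and arms $a^\star$ with $\mu(a^\star)=1$ and all other arms with $\mu=0$, so that $\Delta_{\min}=1$. Let $A_2:=A\setminus\{a^\star\}$. Take a horizon $T=N\alpha$ with $N$ a large multiple of a fixed integer. The horizon is split into two phases.

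In the \emph{bad phase}, covering the first $\lfloor(1-\nu')N\rfloor$ blocks, every snapshot is $K_{A_2}\sqcup\{a^\star\}$, the complete graph on $A_2$ together with an isolated $a^\star$. Here $\nu'\in[\nu,1)$ is chosen so that $\lceil\nu' N\rceil\ge\nu N$ and $(1-\nu')N=\Omega(N)$; for instance $\nu'=(1+\nu)/2$.

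In the \emph{good phase}, covering the remaining blocks, every snapshot is $K_n$. Then at least a $\nu$-fraction of the $\alpha$-blocks consist entirely of complete graphs.

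The constraint that consecutive graphs differ by at most one edge is met by inserting a transition segment between the two phases. This segment adds the $n-1$ edges $\{a^\star,v\}$ one per round. It costs only $n-1=O(1)$ rounds. Place it at the start of the good phase, and enlarge that phase by $\lceil (n-1)/\alpha\rceil$ blocks, which are not counted as good. Within each phase the snapshots are constant, so they differ by zero edges. Because $n$ and $\alpha$ are fixed while $N\to\infty$, the fraction of complete blocks can still be kept at least $\nu$ by taking $\nu'$ slightly larger.

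For the lower bound, the graph process is deterministic and hence oblivious. If $a_0\in A_2$, which happens with probability $1-1/n$, then throughout the bad phase $a^\star\notin L_t(a_{t-1})$, since $a^\star$ has no edges. So no local policy can play $a^\star$ during the bad phase, and each of its $(1-\nu')N\alpha$ rounds incurs regret $1$. Therefore
\[
\mathbb E[R(T)]\ge\Bigl(1-\frac1n\Bigr)(1-\nu')\bigl(T-O(\alpha+n)\bigr)=\Omega(T).
\]
Letting $N\to\infty$ produces arbitrarily large horizons.

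The only delicate step is the bookkeeping that reconciles the one-edge-change constraint with the exact $\nu$-fraction of complete blocks. I expect this to be routine once $\nu'$ is taken strictly between $\nu$ and $1$ and $N$ is large enough to absorb the $O(n/\alpha)$ transition blocks.
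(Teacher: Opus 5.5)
Your proposal is correct and uses the same construction as the paper. For a linear-length prefix, $a^\star$ is isolated next to a clique on the other arms; a transition segment then adds the $n-1$ edges at $a^\star$ one per round; complete graphs follow. The same starting-outside-$a^\star$ argument, which applies with probability $1-\frac1n$, yields $\Omega(T)$ regret. The only difference is bookkeeping: the paper fixes $m_{\mathrm{good}}=\lceil \nu m\rceil$ complete blocks and reserves $L=\lceil (n-1)/\alpha\rceil+1$ transition blocks, whereas you absorb the transition in the slack $\nu'=(1+\nu)/2>\nu$, which works equally well once $N$ is large.
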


\begin{proof}
Let
\[
H:=K_{A\setminus\{a^\star\}}\cup \{a^\star\},
\]
the graph obtained by isolating $a^\star$ and keeping the remaining $n-1$ vertices as a clique. Choose a horizon $T$ such that
\[
m:=\lfloor T/\alpha\rfloor
\]
is arbitrarily large. Let
% \[
% L:=\left\lceil \frac{n-1}{\alpha}\right\rceil+1,
% \qquad
% m_{\mathrm{good}}:=\lceil \nu m\rceil,
% \qquad
% m_{\mathrm{bad}}:=m-m_{\mathrm{good}}-L.
% \]
% For all sufficiently large $m$, $m_{\mathrm{bad}}$ is positive.

\begin{gather*}
L \triangleq \lceil (n-1)/\alpha \rceil + 1, \quad m_{\mathrm{good}} \triangleq \lceil \nu m\rceil, \\
m_{\mathrm{bad}} \triangleq m - m_{\mathrm{good}} - L.
\end{gather*}
For all sufficiently large $m$, $m_{\mathrm{bad}}$ is positive.

During the first $m_{\mathrm{bad}}$ blocks, set every graph equal to $H$. During the next $L\alpha$ rounds, add the missing edges incident to $a^\star$ one at a time until the graph becomes complete, and after that keep it complete. During the final $m_{\mathrm{good}}$ blocks, set every graph equal to $K_n$. By construction, consecutive graphs differ by at most one edge, and at least a $\nu$-fraction of the $\alpha$-blocks consist entirely of complete graphs.

If the learner starts outside $a^\star$, which happens with probability $1-1/n$, then it cannot reach $a^\star$ during the first $m_{\mathrm{bad}}\alpha$ rounds because $a^\star$ is isolated throughout that whole prefix. Since $\Delta_{\min}=1$, the regret on each of those rounds is at least $1$. Therefore
\[
\mathbb E[R(T)]
\ge
\Bigl(1-\frac1n\Bigr)m_{\mathrm{bad}}\alpha.
\]
Because $L$ depends only on $n$ and $\alpha$,
\[
m_{\mathrm{bad}}
=
m-\lceil \nu m\rceil-L
\ge
(1-\nu)m-(L+1).
\]
Hence for all sufficiently large $m$,
\[
m_{\mathrm{bad}}\alpha
\ge
\frac{1-\nu}{2}\,m\alpha
=
\Omega(T).
\]
Thus a condition that only counts how many good blocks appear over the full horizon, even when consecutive graphs change only one edge at a time, still does not imply learnability.
\end{proof}

\section{The common-stationary sliding-window condition}
\label{app:mixing-condition}

Definition~\ref{def:mixing-sequence} is deliberately stated on the induced walk rather than on a union graph. The dynamic random-walk literature shows that once the stationary distribution is allowed to drift over time, even sequences of connected bounded-degree expanders can behave very differently from static graphs~\cite{sauerwald-zanetti2019,olshevsky-tsitsiklis2013,shimizu-shiraga2022}. The present paper therefore isolates a regime in which the canonical walk remains analyzable without changing the exploration kernel itself. A broader route would be to replace the canonical walk by a Metropolis-type kernel, but that would answer a different structural question.

The following proposition records the main graph-side subclass covered by Definition~\ref{def:mixing-sequence}.

\begin{proposition}[A fixed degree sequence gives a common stationary distribution]
\label{prop:fixed-degree-stationary}
Suppose there exists a function $d:A\to\{0,\dots,n-1\}$ such that
\[
\deg_t(a)=d(a)
\qquad\text{for every }a\in A\text{ and every }t\ge 1.
\]
Then every kernel $U_t$ in \eqref{eq:walk_matrix} is reversible with respect to
\[
\pi(a)=\frac{1+d(a)}{\sum_{b\in A}(1+d(b))}.
\]
In particular, if every snapshot is $d$-regular, then $\pi$ is uniform.
\end{proposition}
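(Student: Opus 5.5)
The plan is to check detailed balance directly for each kernel $U_t$ under the proposed $\pi$. Write $Z \triangleq \sum_{b\in A}(1+d(b))$ and $\pi(a) = (1+d(a))/Z$. This is a probability distribution: every entry is positive, because $1+d(a)\ge 1$, and the entries sum to one by the choice of $Z$.

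Fix $t$ and a pair $u,v\in A$. We verify $\pi(u)U_t(u,v)=\pi(v)U_t(v,u)$ by cases.

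If $u=v$, the identity is trivial.

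If $u\neq v$ and $(u,v)\in E_t$, then since $G_t$ is undirected we also have $u\in N_t(v)$. By \eqref{eq:walk_matrix} and the hypothesis $\deg_t(u)=d(u)$,
\[
\pi(u)U_t(u,v)=\frac{1+d(u)}{Z}\cdot\frac{1}{1+d(u)}=\frac1Z .
\]
By symmetry the right-hand side $\pi(v)U_t(v,u)$ also equals $1/Z$.

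If $u\neq v$ and $(u,v)\notin E_t$, then again by undirectedness $(v,u)\notin E_t$. Both kernel entries vanish, so both sides are $0$.

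Detailed balance therefore holds for every pair. Summing it over $u$ gives $\sum_u \pi(u)U_t(u,v)=\pi(v)\sum_u U_t(v,u)=\pi(v)$, because each row of $U_t$ sums to one: row $v$ has exactly $1+\deg_t(v)$ nonzero entries, each equal to $(1+\deg_t(v))^{-1}$. So $\pi$ is stationary and $U_t$ is reversible with respect to it. Since $\pi$ does not depend on $t$, it is common to all snapshots.

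In the $d$-regular case, $d(a)\equiv d$, so $\pi(a)=(1+d)/(n(1+d))=1/n$, which is uniform.

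There is no real obstacle here. The one point that needs care is that the unnormalized weight must be $1+d(a)$ rather than $d(a)$, because the canonical walk includes the self-loop. With that weight the factor $1+\deg_t(u)$ in the kernel cancels, and the argument uses only time-invariance of the degrees, not of the edge sets.
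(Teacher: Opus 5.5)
Your proof is correct and follows the paper's argument: the paper also verifies detailed balance directly, noting that $\pi(u)U_t(u,v)=1/Z=\pi(v)U_t(v,u)$ whenever $v\in N_t(u)\cup\{u\}$ and that both sides vanish otherwise. Your explicit checks that $\pi$ is normalized and that the rows of $U_t$ sum to one (so detailed balance yields stationarity) are small additions that the paper leaves implicit.
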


\begin{proof}
Fix $t$ and let
\[
Z:=\sum_{b\in A}(1+d(b)).
\]
For any $u,v\in A$ such that $v\in N_t(u)\cup\{u\}$, we have $u\in N_t(v)\cup\{v\}$ because the graph is undirected. Using the fixed degree sequence,
% \[
% \pi(u)U_t(u,v)
% =
% \frac{1+d(u)}{Z}\cdot \frac{1}{1+d(u)}
% =
% \frac{1}{Z}
% =
% \frac{1+d(v)}{Z}\cdot \frac{1}{1+d(v)}
% =
% \pi(v)U_t(v,u).
% \]

\begin{align*}
\pi(u)U_t(u,v) &= \frac{1+d(u)}{Z}\cdot \frac{1}{1+d(u)} = \frac{1}{Z} \\
&= \frac{1+d(v)}{Z}\cdot \frac{1}{1+d(v)} = \pi(v)U_t(v,u).
\end{align*}

If $v\notin N_t(u)\cup\{u\}$, then both sides are zero. Hence detailed balance holds for every pair $(u,v)$, so $\pi$ is stationary and $U_t$ is reversible with respect to $\pi$.
\end{proof}

The proposition should be read as a sufficient graph-side criterion, not as the only possible realization of Definition~\ref{def:mixing-sequence}. The theorem statements below are written in kernel language because the proofs use only reversibility, a common stationary distribution, and repeated one-step contraction.

\section{Technical lemmas}
\label{app:technical-lemmas}

We now prove the three ingredients used in the analysis of LEX: a uniform mixing lemma, a visitation lemma on a thinned trajectory, and a navigation lemma.

For a function $f:A\to\mathbb R$, define
\[
\langle f,g\rangle_\pi := \sum_{a\in A}\pi(a)f(a)g(a),
\qquad
\|f\|_{2,\pi}:=\sqrt{\langle f,f\rangle_\pi}.
\]
Let
\[
L_2^0(\pi):=\Bigl\{f:A\to\mathbb R:\sum_{a\in A}\pi(a)f(a)=0\Bigr\}.
\]
When a kernel $P$ acts on functions, we use the convention
\[
(Pf)(v):=\sum_{u\in A} P(v,u)f(u).
\]
Under reversibility this is the self-adjoint operator associated with the Markov chain on $L_2(\pi)$. We also write
\[
[x]_+ := \max\{x,0\}
\]
for the positive part of a real number.

\begin{lemma}[One-step contraction on a good round]
\label{lem:one-step-contraction}
Let $P$ be a reversible Markov kernel with stationary distribution $\pi$. If $\operatorname{gap}_{\mathrm{abs}}(P)\ge \gamma$, then
\[
\|Pf\|_{2,\pi}\le (1-\gamma)\|f\|_{2,\pi}
\qquad\text{for every }f\in L_2^0(\pi).
\]
\end{lemma}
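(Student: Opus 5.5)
The plan is to use the spectral theorem for reversible kernels. Since $P$ is reversible with respect to $\pi$, the operator $f\mapsto Pf$ is self-adjoint on $L_2(\pi)$ with respect to $\langle\cdot,\cdot\rangle_\pi$. To check this, I will expand $\langle Pf,g\rangle_\pi=\sum_{v,u}\pi(v)P(v,u)f(u)g(v)$ and apply detailed balance $\pi(v)P(v,u)=\pi(u)P(u,v)$, which gives $\langle f,Pg\rangle_\pi$. It follows that $L_2(\pi)$ has a $\langle\cdot,\cdot\rangle_\pi$-orthonormal eigenbasis $\phi_1,\dots,\phi_n$ with real eigenvalues $\lambda_1\ge\cdots\ge\lambda_n$. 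Because $P$ is stochastic, the constant function $\mathbf 1$ satisfies $P\mathbf 1=\mathbf 1$, so we may take $\phi_1=\mathbf 1$ and $\lambda_1=1$. These are exactly the eigenvalues $\lambda_i(P)$ in the definition of $\operatorname{gap}_{\mathrm{abs}}$, since the matrix of the operator $f\mapsto Pf$ in the standard basis is $P$ itself.

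Next, I will identify $L_2^0(\pi)$ as the orthogonal complement of $\mathbf 1$. The condition $\sum_a\pi(a)f(a)=0$ is the same as $\langle f,\mathbf 1\rangle_\pi=0$, so $L_2^0(\pi)=\operatorname{span}\{\phi_2,\dots,\phi_n\}$, and this subspace is invariant under $P$. For $f\in L_2^0(\pi)$, write $f=\sum_{i\ge2}c_i\phi_i$. Orthonormality then gives
\[
\|Pf\|_{2,\pi}^2=\sum_{i\ge 2}\lambda_i^2c_i^2\le \max_{i\ge2}\lambda_i^2\,\|f\|_{2,\pi}^2 .
\]
Since $\max_{i\ge2}|\lambda_i|=\max\{|\lambda_2|,|\lambda_n|\}=1-\operatorname{gap}_{\mathrm{abs}}(P)\le 1-\gamma$, taking square roots yields the claim.

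The one point that needs care is the eigenvalue $1$ when it has multiplicity greater than one, i.e.\ when $P$ is reducible. In that case the choice $\phi_1=\mathbf 1$ still works, because any orthonormal basis of the $1$-eigenspace can be chosen to contain $\mathbf 1$. The extra copies of $1$ appear among $\lambda_2,\dots$, which forces $\lambda_2=1$ and $\operatorname{gap}_{\mathrm{abs}}(P)=0<\gamma$. The hypothesis is then false, so the statement holds vacuously. Beyond this case, the argument is routine linear algebra, and I expect no real obstacle.
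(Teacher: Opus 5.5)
Your proposal is correct and follows the same route as the paper. Both arguments use self-adjointness of $P$ on $L_2(\pi)$ from reversibility, take an orthonormal eigenbasis with $\psi_1\equiv 1$, expand $f\in L_2^0(\pi)$ in the remaining eigenvectors, and apply the bound $\max_{i\ge 2}|\lambda_i(P)|\le 1-\gamma$. Your explicit detailed-balance check and your remark on a repeated eigenvalue $1$ (which the general argument already covers, since extra copies land among $\lambda_2,\dots,\lambda_n$) fill in details the paper leaves implicit.
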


\begin{proof}
Because $P$ is reversible, it is self-adjoint on $L_2(\pi)$. Let $\psi_1,\dots,\psi_n$ be an orthonormal eigenbasis with eigenvalues $\lambda_1(P),\dots,\lambda_n(P)$ and $\psi_1\equiv 1$. Every $f\in L_2^0(\pi)$ has an expansion
\[
f=\sum_{i=2}^n c_i\psi_i.
\]
Therefore
\[
Pf=\sum_{i=2}^n c_i\lambda_i(P)\psi_i
\]
and
\[
\|Pf\|_{2,\pi}^2
=
\sum_{i=2}^n c_i^2\lambda_i(P)^2
\le
\max_{2\le i\le n}|\lambda_i(P)|^2 \sum_{i=2}^n c_i^2.
\]
Since $\operatorname{gap}_{\mathrm{abs}}(P)\ge \gamma$, we have $\max_{2\le i\le n}|\lambda_i(P)|\le 1-\gamma$, so
\[
\|Pf\|_{2,\pi}^2 \le (1-\gamma)^2\|f\|_{2,\pi}^2.
\]
Taking square roots completes the proof.
\end{proof}

\begin{lemma}[Uniform mixing from any start time]
\label{lem:uniform-mixing}
Assume that $\{G_t\}_{t\ge 1}$ is $(W,\rho,\gamma,\pi)$-mixing for the canonical walk. Let
\[
P_{s,m}:=U_{s+m}\cdots U_{s+1}
\]
be the product of kernels over $m$ consecutive rounds started at time $s$. Then for every $s\ge 0$, every state $x\in A$, and every $m\ge W$,
\[
\bigl\|\delta_x P_{s,m}-\pi\bigr\|_{\mathrm{TV}}
\le
\frac{1}{2\sqrt{\pi_*}}\,e^{-\rho\gamma(m-W)}.
\]
Consequently, for every $\varepsilon\in(0,1)$,
\[
\tau_{\mathrm{mix}}(\varepsilon)
=
W+\left\lceil \frac{1}{\rho\gamma}\log\frac{1}{2\varepsilon\sqrt{\pi_*}}\right\rceil
\]
satisfies
\[
\bigl\|\delta_x P_{s,\tau_{\mathrm{mix}}(\varepsilon)}-\pi\bigr\|_{\mathrm{TV}}
\le \varepsilon
\qquad
\text{for all }s\ge 0\text{ and }x\in A.
\]
\end{lemma}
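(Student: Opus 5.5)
We work in $L_2(\pi)$ via the density of the walk with respect to $\pi$. Fix $s\ge 0$ and $x\in A$, and let $\mu_m:=\delta_x P_{s,m}$ be the time-$(s+m)$ law. Write $h_m:=\mu_m/\pi-1$, so that $h_m\in L_2^0(\pi)$. Every $U_t$ is reversible with respect to the common $\pi$ (Definition~\ref{def:mixing-sequence}(i)). Reversibility gives $\mu_{m}/\pi = U_{s+m}(\mu_{m-1}/\pi)$, where the kernel acts on functions as in the convention above. Since constants are fixed by $U_{s+m}$, this yields $h_m=U_{s+m}h_{m-1}$. The order of the factors is harmless, because each factor acts on the mean-zero subspace, which it preserves.

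\textbf{Per-round bounds.} Two estimates apply at each round. On a good round ($\operatorname{gap}_{\mathrm{abs}}(U_t)\ge\gamma$), Lemma~\ref{lem:one-step-contraction} gives $\|h_m\|_{2,\pi}\le(1-\gamma)\|h_{m-1}\|_{2,\pi}$. On any other round, $U_t$ is a self-adjoint Markov operator, so its spectrum lies in $[-1,1]$ and it is an $L_2(\pi)$-contraction: $\|h_m\|_{2,\pi}\le\|h_{m-1}\|_{2,\pi}$.

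\textbf{Counting good rounds.} Suppose the window $(s,s+m]$ contains $g$ good rounds. Chaining the per-round bounds gives
\[
\|h_m\|_{2,\pi}\le(1-\gamma)^g\|h_0\|_{2,\pi}\le e^{-\gamma g}\|h_0\|_{2,\pi}.
\]
The initial term is $\|h_0\|_{2,\pi}^2=1/\pi(x)-1\le 1/\pi_*$. To lower-bound $g$, split the $m$ rounds into $\lfloor m/W\rfloor$ disjoint windows of length $W$. Each window contains at least $\lceil\rho W\rceil\ge\rho W$ good indices by Definition~\ref{def:mixing-sequence}(ii). Hence $g\ge \rho W\lfloor m/W\rfloor\ge \rho(m-W)$, using $W\lfloor m/W\rfloor> m-W$. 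This step is the only place the sliding-window hypothesis is used. Because the hypothesis is shift-invariant, the bound holds for every start time $s$.

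\textbf{Conversion to total variation.} By Cauchy--Schwarz,
\[
\|\mu_m-\pi\|_{\mathrm{TV}}=\tfrac12\sum_a\pi(a)|h_m(a)|\le\tfrac12\|h_m\|_{2,\pi}.
\]
Combining the three parts gives $\|\mu_m-\pi\|_{\mathrm{TV}}\le\frac{1}{2\sqrt{\pi_*}}e^{-\rho\gamma(m-W)}$, as claimed.

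\textbf{Mixing time.} We need $\frac{1}{2\sqrt{\pi_*}}e^{-\rho\gamma(m-W)}\le\varepsilon$. Solving for $m$, this holds whenever $m-W\ge\frac{1}{\rho\gamma}\log\frac{1}{2\varepsilon\sqrt{\pi_*}}$. The stated $\tau_{\mathrm{mix}}(\varepsilon)$ satisfies this inequality, and it is at least $W$, so the first part applies. If the logarithm is negative, the bound at $m=W$ is already below $\varepsilon$; this is why the positive part $[\cdot]_+$ appears in \eqref{eq:tau-mix-main}.

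\textbf{Main obstacle.} The delicate step is the density recursion. One must confirm that the paper's convention $(Pf)(v)=\sum_u P(v,u)f(u)$, together with reversibility, gives $h_m=U_{s+m}h_{m-1}$ exactly, rather than involving the adjoint of $U_{s+m}$. For reversible kernels the adjoint in $L_2(\pi)$ is the kernel itself, so the two coincide. This works only because every $U_t$ shares the same $\pi$: otherwise the mean-zero subspace, and hence the norm $\|\cdot\|_{2,\pi}$, would change from round to round.
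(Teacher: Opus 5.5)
Your proposal is correct and follows the paper's proof essentially step for step. Both arguments use the same five steps: the density recursion $h_m=U_{s+m}h_{m-1}$ via detailed balance; contraction by $(1-\gamma)$ on good rounds and by $1$ otherwise; the count $\lfloor m/W\rfloor\lceil\rho W\rceil\ge\rho(m-W)$ of good rounds; the initial bound $1/\pi(x)-1\le 1/\pi_*$; and Cauchy--Schwarz to pass to total variation. Your remark on the sign of the logarithm points to a small edge case the paper's proof passes over: without the positive part, $\tau_{\mathrm{mix}}(\varepsilon)\ge W$ can fail when $2\varepsilon\sqrt{\pi_*}>1$.
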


\begin{proof}
Fix $s\ge 0$ and an initial distribution $p^{(0)}$ at time $s$. For $j\ge 0$, let $p^{(j)}:=p^{(0)}U_{s+1}\cdots U_{s+j}$ be the distribution after $j$ steps, written as a row vector. Define the centered likelihood ratio
\[
f^{(j)}(v):=\frac{p^{(j)}(v)}{\pi(v)}-1.
\]
Since $\pi$ is stationary for every $U_t$ and detailed balance holds, we have for each $v\in A$,
\begin{align*}
f^{(j+1)}(v)
&=
\frac{1}{\pi(v)}\sum_{u\in A} p^{(j)}(u)U_{s+j+1}(u,v)-1 \\
&=
\frac{1}{\pi(v)}\sum_{u\in A} \pi(u)\bigl(1+f^{(j)}(u)\bigr)U_{s+j+1}(u,v)-1 \\
&=
\sum_{u\in A}\bigl(1+f^{(j)}(u)\bigr)\frac{\pi(u)U_{s+j+1}(u,v)}{\pi(v)}-1 \\
&=
\sum_{u\in A}\bigl(1+f^{(j)}(u)\bigr)U_{s+j+1}(v,u)-1 \\
&=
\sum_{u\in A} f^{(j)}(u)U_{s+j+1}(v,u)
=
(U_{s+j+1}f^{(j)})(v).
\end{align*}
Thus
\[
f^{(m)}=P_{s,m}f^{(0)}
\]
as functions.

Let $N_g(s,m)$ denote the number of indices in $\{s+1,\dots,s+m\}$ whose kernels satisfy $\operatorname{gap}_{\mathrm{abs}}(U_t)\ge \gamma$. Every interval of length $W$ contains at least $\lceil \rho W\rceil$ such indices. Partition the interval $\{s+1,\dots,s+m\}$ into $\lfloor m/W\rfloor$ disjoint blocks of length $W$ and one remainder block. This gives
\[
N_g(s,m)\ge \Bigl\lfloor \frac{m}{W}\Bigr\rfloor \lceil \rho W\rceil
\ge \rho(m-W).
\]
At every non-good round, the operator norm of $U_t$ on $L_2^0(\pi)$ is at most $1$, and at every good round Lemma~\ref{lem:one-step-contraction} gives a factor $(1-\gamma)$. Since $f^{(0)}\in L_2^0(\pi)$, repeated application yields
% \[
% \|f^{(m)}\|_{2,\pi}
% \le
% (1-\gamma)^{N_g(s,m)}\|f^{(0)}\|_{2,\pi}
% \le
% e^{-\gamma N_g(s,m)}\|f^{(0)}\|_{2,\pi}
% \le
% e^{-\rho\gamma(m-W)}\|f^{(0)}\|_{2,\pi}.
% \]
\begin{align*}
\|f^{(m)}\|_{2,\pi} &\le (1-\gamma)^{N_g(s,m)}\|f^{(0)}\|_{2,\pi} \\
&\le e^{-\gamma N_g(s,m)}\|f^{(0)}\|_{2,\pi} \\
&\le e^{-\rho\gamma(m-W)}\|f^{(0)}\|_{2,\pi}.
\end{align*}

Now specialize to the point mass $p^{(0)}=\delta_x$. Then
\[
f^{(0)}(u)=\frac{\mathbf 1\{u=x\}}{\pi(x)}-1.
\]
A direct calculation gives
\[
\|f^{(0)}\|_{2,\pi}^2
=
\sum_{u\in A}\pi(u)\left(\frac{\mathbf 1\{u=x\}}{\pi(x)}-1\right)^2
=
\frac{1}{\pi(x)}-1
\le
\frac{1}{\pi_*}.
\]
Finally,
\[
\|\delta_xP_{s,m}-\pi\|_{\mathrm{TV}}
=
\frac12\sum_{u\in A}\pi(u)|f^{(m)}(u)|
\le
\frac12\|f^{(m)}\|_{2,\pi}
\]
by Cauchy--Schwarz. Combining the previous displays gives
\[
\|\delta_xP_{s,m}-\pi\|_{\mathrm{TV}}
\le
\frac{1}{2\sqrt{\pi_*}}e^{-\rho\gamma(m-W)}.
\]
The claimed formula for $\tau_{\mathrm{mix}}(\varepsilon)$ is obtained by solving
\[
\frac{1}{2\sqrt{\pi_*}}e^{-\rho\gamma(m-W)}\le \varepsilon.
\]
\end{proof}

The next lemma is a one-sided Chernoff bound for adapted Bernoulli variables. We use it with indicators of the form $\mathbf 1\{X_{j\tau}=a\}$ on a thinned trajectory.

\begin{lemma}[A one-sided Chernoff bound for adapted Bernoulli variables]
\label{lem:adapted-chernoff}
Let $(\mathcal F_j)_{j\ge 0}$ be a filtration, and let $Y_1,\dots,Y_m$ be $\{0,1\}$-valued random variables such that $Y_j$ is $\mathcal F_j$-measurable and
\[
\mathbb E[Y_j\mid \mathcal F_{j-1}] \ge p
\qquad\text{almost surely for every }j.
\]
Then
\[
\Pr\!\left(\sum_{j=1}^m Y_j \le \frac{mp}{2}\right)
\le
\exp\!\left(-\frac{1-\log 2}{2}\,mp\right).
\]
\end{lemma}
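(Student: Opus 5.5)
We use the standard exponential-supermartingale argument, applied to the lower tail. Write $S_m \triangleq \sum_{j=1}^m Y_j$ and fix a parameter $\theta>0$ to be chosen later. The plan is to control the moment generating function $\mathbb E[e^{-\theta S_m}]$ by conditioning one step at a time and then to apply Markov's inequality to $e^{-\theta S_m}$.

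The key one-step estimate goes as follows. Since $Y_j\in\{0,1\}$, we have $e^{-\theta Y_j} = 1-(1-e^{-\theta})Y_j$. Taking conditional expectations and using $\mathbb E[Y_j\mid\mathcal F_{j-1}]\ge p$ gives
\[
\mathbb E[e^{-\theta Y_j}\mid \mathcal F_{j-1}] \le 1-(1-e^{-\theta})p \le \exp\bigl(-(1-e^{-\theta})p\bigr).
\]
The first inequality uses that the coefficient $-(1-e^{-\theta})$ is negative, and the second uses $1+x\le e^x$. Because $S_{j-1}$ is $\mathcal F_{j-1}$-measurable, the tower property yields $\mathbb E[e^{-\theta S_j}]\le e^{-(1-e^{-\theta})p}\,\mathbb E[e^{-\theta S_{j-1}}]$. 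Iterating this bound gives $\mathbb E[e^{-\theta S_m}]\le \exp(-(1-e^{-\theta})mp)$. Only the conditional lower bound on the mean is used, so adaptedness is the sole structural requirement.

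Markov's inequality then gives
\[
\Pr(S_m\le mp/2)=\Pr\bigl(e^{-\theta S_m}\ge e^{-\theta mp/2}\bigr)\le \exp\Bigl(-mp\bigl[1-e^{-\theta}-\tfrac{\theta}{2}\bigr]\Bigr).
\]
We choose $\theta=\log 2$, which is the optimizer of the bracket since it solves $e^{-\theta}=1/2$. The bracket then equals $1-\tfrac12-\tfrac{\log 2}{2}=\tfrac{1-\log 2}{2}$, which is exactly the exponent claimed in the statement.

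There is no genuine obstacle. The only point needing care is the conditional step: one must use $e^{-\theta y}$ being affine on $\{0,1\}$ together with the sign of its coefficient, so that the inequality $\mathbb E[Y_j\mid\mathcal F_{j-1}]\ge p$ points in the right direction. A standard bound for independent variables would not apply directly here.
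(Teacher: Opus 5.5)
Your proof is correct and is essentially the paper's argument: the paper fixes $\theta=\log 2$ from the outset, using $2^{-Y_j}=1-Y_j/2$ to get $\mathbb E[2^{-Y_j}\mid\mathcal F_{j-1}]\le e^{-p/2}$, then iterates by the tower property and applies Markov's inequality. The only difference is that you keep $\theta$ general and then optimize, which also shows that $\log 2$ is the best choice of $\theta$ for this threshold.
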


\begin{proof}
Let $S_m:=\sum_{j=1}^m Y_j$. Since $Y_j\in\{0,1\}$,
\[
2^{-Y_j}=1-\frac{Y_j}{2}.
\]
Therefore
\[
\mathbb E[2^{-Y_j}\mid \mathcal F_{j-1}]
=
1-\frac{1}{2}\mathbb E[Y_j\mid \mathcal F_{j-1}]
\le
1-\frac{p}{2}
\le
e^{-p/2}.
\]
Iterating conditional expectation gives
\[
\mathbb E[2^{-S_m}]
=
\mathbb E\!\left[\prod_{j=1}^m 2^{-Y_j}\right]
\le
e^{-mp/2}.
\]
If $S_m\le mp/2$, then $2^{-S_m}\ge 2^{-mp/2}$. Markov's inequality therefore yields
% \[
% \Pr\!\left(S_m\le \frac{mp}{2}\right)
% \le
% 2^{mp/2}\,\mathbb E[2^{-S_m}]
% \le
% 2^{mp/2}e^{-mp/2}
% =
% \exp\!\left(-\frac{1-\log 2}{2}\,mp\right).
% \]
\begin{align*}
\Pr\left(S_m\le \frac{mp}{2}\right) &\le 2^{mp/2}\,\mathbb{E}[2^{-S_m}] \\
&\le 2^{mp/2}e^{-mp/2} = \exp\left(-\frac{1-\log 2}{2}\,mp\right).
\end{align*}
\end{proof}

We now use Lemma~\ref{lem:uniform-mixing} to obtain a visitation guarantee on a thinned subsequence of the trajectory.

\begin{lemma}[Uniform visitation on a thinned trajectory]
\label{lem:uniform-visitation}
Assume that $\{G_t\}_{t\ge 1}$ is $(W,\rho,\gamma,\pi)$-mixing for the canonical walk. Let
\[
\tau_0
:=
W+\left\lceil \frac{2}{\rho\gamma}\log\frac{1}{\pi_*}\right\rceil
\]
and let $X_t$ be the trajectory of the canonical walk. For any integer $m\ge 1$ and any arm $a\in A$, define
\[
N_m(a):=\sum_{j=1}^m \mathbf 1\{X_{j\tau_0}=a\}.
\]
There exists a universal constant $C_{\mathrm{vis}}>0$ such that if
\[
m \ge \frac{C_{\mathrm{vis}}}{\pi_*}\log\frac{n}{\delta},
\]
then with probability at least $1-\delta$,
\[
N_m(a)\ge \frac{m\pi(a)}{4}
\qquad\text{for every }a\in A.
\]
Consequently,
\[
\varphi_{m\tau_0}(a)\ge \frac{m\pi(a)}{4}
\qquad\text{for every }a\in A
\]
with probability at least $1-\delta$.
\end{lemma}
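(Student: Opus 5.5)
The plan is to combine Lemma~\ref{lem:uniform-mixing} (uniform mixing from any start time) with Lemma~\ref{lem:adapted-chernoff} (adapted Chernoff) along the thinned times $j\tau_0$, and then take a union bound over arms.

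\textbf{Step 1: one block of length $\tau_0$ is close to $\pi$.} We compare $\tau_0$ with $\tau_{\mathrm{mix}}(\pi_*/2)$. By Lemma~\ref{lem:uniform-mixing}, after $m'\ge W$ steps the total variation distance is at most $\frac{1}{2\sqrt{\pi_*}}e^{-\rho\gamma(m'-W)}$. With $m'=\tau_0$, the exponent is at least $2\log(1/\pi_*)$, so the bound is at most
\[
\tfrac12\pi_*^{-1/2}\pi_*^{2}=\tfrac12\pi_*^{3/2}\le \pi_*/2 .
\]
Hence, for every start time $s$ and start state $x$,
\[
\|\delta_xP_{s,\tau_0}-\pi\|_{\mathrm{TV}}\le\pi_*/2 .
\]
In particular, $\Pr(X_{s+\tau_0}=a\mid X_s=x)\ge \pi(a)-\pi_*/2\ge\pi(a)/2$ for every arm $a$.

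\textbf{Step 2: conditional visit probabilities along the thinned chain.} Let $\mathcal F_j$ be the $\sigma$-field generated by the walk up to time $j\tau_0$, together with the (oblivious) graph sequence; we can condition on the realized graphs, which by assumption satisfy Definition~\ref{def:mixing-sequence}. The canonical walk is Markov given the graphs. Applying Step~1 with $s=(j-1)\tau_0$ and $x=X_{(j-1)\tau_0}$ gives
\[
\mathbb E\bigl[\mathbf 1\{X_{j\tau_0}=a\}\mid\mathcal F_{j-1}\bigr]\ge \pi(a)/2 .
\]
This holds uniformly over the history, so the indicators $Y_j=\mathbf 1\{X_{j\tau_0}=a\}$ satisfy the hypothesis of Lemma~\ref{lem:adapted-chernoff} with $p=\pi(a)/2$. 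That lemma gives
\[
\Pr\bigl(N_m(a)\le m\pi(a)/4\bigr)\le \exp(-c\,m\pi(a)),\qquad c=\tfrac{1-\log 2}{4}.
\]

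\textbf{Step 3: union bound over arms.} Using $\pi(a)\ge\pi_*$, the total failure probability over all $n$ arms is at most $n\exp(-c\,m\pi_*)$. This is at most $\delta$ once $m\ge \frac{1}{c\pi_*}\log\frac{n}{\delta}$, so we may take $C_{\mathrm{vis}}=1/c$. On the complementary event, $N_m(a)>m\pi(a)/4$ for every $a$. Since the thinned visits are among the first $m\tau_0$ rounds and $\varphi_{m\tau_0}(a)$ counts all visits to $a$ in those rounds, we get $\varphi_{m\tau_0}(a)\ge N_m(a)$. For LEX the trajectory is exactly the canonical walk, so this gives the stated consequence.

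\textbf{Main obstacle.} The delicate step is justifying the conditional bound in Step~2 uniformly over the past. This relies on two points: the mixing estimate in Lemma~\ref{lem:uniform-mixing} holds for every start time and every start state, and the graph process is oblivious, so conditioning on the past walk does not change the future kernels. A smaller detail is the inequality $\tfrac12\pi_*^{3/2}\le\pi_*/2$, which needs $\pi_*\le1$ (immediate). One should also check that $\tau_0\ge W$, so that Lemma~\ref{lem:uniform-mixing} applies; this holds by the definition of $\tau_0$.
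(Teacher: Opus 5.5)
Your proposal is correct and matches the paper's proof essentially step for step. The uniform mixing lemma gives $\Pr(X_{j\tau_0}=a\mid\mathcal F_{j-1})\ge\pi(a)/2$, and the adapted Chernoff bound with $p=\pi(a)/2$ gives failure probability $\exp(-\frac{1-\log 2}{4}m\pi(a))$. A union bound then yields $C_{\mathrm{vis}}=4/(1-\log 2)$, and $\varphi_{m\tau_0}(a)\ge N_m(a)$ gives the consequence. Your explicit check that $\frac{1}{2\sqrt{\pi_*}}e^{-\rho\gamma(\tau_0-W)}\le\frac12\pi_*^{3/2}\le\pi_*/2$ is a small but welcome detail. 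The paper leaves it implicit when it invokes the mixing lemma with $\varepsilon=\pi_*/2$, relying on $\tau_0\ge\tau_{\mathrm{mix}}(\pi_*/2)$.
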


\begin{proof}
Fix an arm $a\in A$. Let
\[
\mathcal F_j:=\sigma(X_0,X_1,\dots,X_{j\tau_0})
\]
for $j\ge 0$, so that $\mathbf 1\{X_{j\tau_0}=a\}$ is $\mathcal F_j$-measurable. By Lemma~\ref{lem:uniform-mixing} with $\varepsilon=\pi_*/2$, the distribution of $X_{j\tau_0}$ conditional on $\mathcal F_{j-1}$ is within total variation distance $\pi_*/2$ of $\pi$. Therefore
\[
\Pr(X_{j\tau_0}=a\mid \mathcal F_{j-1})
\ge
\pi(a)-\frac{\pi_*}{2}
\ge
\frac{\pi(a)}{2}
\]
almost surely for every $j\ge 1$.

Applying Lemma~\ref{lem:adapted-chernoff} to
\[
Y_j:=\mathbf 1\{X_{j\tau_0}=a\}
\]
with $p=\pi(a)/2$ gives
\[
\Pr\!\left(N_m(a)\le \frac{m\pi(a)}{4}\right)
\le
\exp\!\left(
-\frac{1-\log 2}{4}\,m\pi(a)
\right).
\]
Since $\pi(a)\ge \pi_*$, the right-hand side is at most $\delta/n$ whenever
\[
m \ge \frac{4}{1-\log 2}\cdot \frac{1}{\pi_*}\log\frac{n}{\delta}.
\]
Thus the claim holds with any constant
\[
C_{\mathrm{vis}} \ge \frac{4}{1-\log 2}.
\]
Finally, $N_m(a)\le \varphi_{m\tau_0}(a)$ because every thinned visit is an actual visit. The same lower bound therefore holds for $\varphi_{m\tau_0}(a)$.
\end{proof}

The same mixing argument also gives a navigation bound.

\begin{lemma}[Navigation time]
\label{lem:navigation-time}
Assume that $\{G_t\}_{t\ge 1}$ is $(W,\rho,\gamma,\pi)$-mixing for the canonical walk, and let $\tau_0$ be as in Lemma~\ref{lem:uniform-visitation}. Consider the navigation phase of Algorithm~\ref{alg:template}, started at an arbitrary time and arbitrary current node, and suppose the target arm is $b\in A$. Then
\[
\mathbb E[\tau_{\mathrm{hit}}(b)] \le \frac{2\tau_0}{\pi(b)}.
\]
Consequently, if $b=a^\star$, then
\[
\mathbb E[R_{\mathrm{nav}}] \le \Delta_{\max}\,\mathbb E[\tau_{\mathrm{hit}}(a^\star)] \le \frac{2\Delta_{\max}\tau_0}{\pi(a^\star)}.
\]
\end{lemma}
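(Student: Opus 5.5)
We will prove the hitting-time bound by splitting the navigation phase into consecutive blocks of length $\tau_0$. On each block we lower bound the probability of hitting $b$ by comparing with a pure canonical walk, and then use a geometric-trials argument. Let the navigation start at time $s$ from node $x$. Write $Z_t$ for the navigation trajectory and $X_t$ for a pure canonical walk started from $(s,x)$. We couple them as follows: whenever $b\notin L_t(Z_{t-1})$, the navigation step samples from $U_t(Z_{t-1},\cdot)$, exactly as $X$ does. When $b\in L_t(Z_{t-1})$, the navigation jumps to $b$, and $X$ reaches $b$ with probability $U_t(Z_{t-1},b)>0$. In both cases the navigation trajectory has hit $b$ no later than $X$. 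Formally, we run $Z$ and $X$ identically until the first time $b$ is a current neighbor; at that time $Z$ hits $b$. Hence the event $\{X_{s+\tau_0}=b\}$ is contained in $\{\tau_{\mathrm{hit}}(b)\le\tau_0\}$, because if $X$ reaches $b$ by time $s+\tau_0$ then $b$ was in the local neighborhood at or before that step, so $Z$ has hit $b$.

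Next we bound the endpoint probability using Lemma~\ref{lem:uniform-mixing}. Since $\tau_0\ge\tau_{\mathrm{mix}}(\pi_*/2)$ (as checked in Lemma~\ref{lem:uniform-visitation}, $\frac{2}{\rho\gamma}\log\frac1{\pi_*}\ge\frac1{\rho\gamma}\log\frac1{\pi_*^{3/2}}$), we get, uniformly over the start time $s$ and the state $x$,
\[
\Pr(X_{s+\tau_0}=b)\ge \pi(b)-\tfrac{\pi_*}{2}\ge \tfrac{\pi(b)}{2}.
\]
Therefore, from any time and state, the probability of not hitting $b$ within the next $\tau_0$ rounds is at most $1-\pi(b)/2$.

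We then iterate over blocks. Let $\mathcal F_k$ be the history up to time $s+k\tau_0$. On the event that $b$ has not yet been hit, the navigation state at $s+k\tau_0$ is some node $y$ and the algorithm is still in navigation mode. The graph process is oblivious and the bound above is uniform in $(s,y)$, so
\[
\Pr(\tau_{\mathrm{hit}}>(k+1)\tau_0\mid\mathcal F_k)\le 1-\pi(b)/2
\]
on $\{\tau_{\mathrm{hit}}>k\tau_0\}$. Hence $\Pr(\tau_{\mathrm{hit}}>k\tau_0)\le(1-\pi(b)/2)^k$, and
\[
\mathbb E[\tau_{\mathrm{hit}}]\le\tau_0\sum_{k\ge0}(1-\pi(b)/2)^k=2\tau_0/\pi(b).
\]
For the regret claim, once the navigation is at $a^\star$ the self-loop keeps it there, since $a^\star\in L_t(a^\star)$ always. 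So regret accrues only before hitting, at most $\Delta_{\max}$ per round. Taking expectations, and conditioning on the navigation start time and state (which works because the hitting bound is uniform), gives $\mathbb E[R_{\mathrm{nav}}]\le 2\Delta_{\max}\tau_0/\pi(a^\star)$.

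The main obstacle is making the domination step rigorous. The navigation kernel is not the canonical kernel, since it puts all of its mass on $b$ once $b$ is adjacent. The cleanest route is to note that the two kernels agree on all states from which $b$ is not adjacent. We construct $X$ and $Z$ on a common probability space driven by the same uniform random choices until the first round at which $b\in L_t(\cdot)$. That round is a stopping time at which $Z$ has necessarily hit $b$, while $X$ can only have reached $b$ at or after it. A second point to check is that the random start time of navigation does not break the uniformity. This is handled because the graph sequence is independent of the learner's randomness and Lemma~\ref{lem:uniform-mixing} holds for every $s$ and $x$.
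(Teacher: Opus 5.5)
Your proof is correct and is essentially the paper's argument. You couple the navigation process with a pure canonical walk that shares its transitions until $b$ first enters the local neighborhood, lower bound the block-end probability by $\pi(b)/2$ via Lemma~\ref{lem:uniform-mixing} with $\varepsilon=\pi_*/2$, and sum a geometric tail over blocks of length $\tau_0$. The only difference is cosmetic: you restart the comparison at every block using the Markov property of the navigation rule, whereas the paper couples once, shows that the navigation hitting time is almost surely dominated by that of the pure walk, and then runs the geometric block argument on the pure walk alone.
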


\begin{proof}
Fix the target arm $b$. Let $s$ be the time at which the navigation phase starts, and let $X_t$ denote the actual navigation process of Algorithm~\ref{alg:template} for $t\ge s$. Thus $X_s$ is the current node at the start of navigation, and for each $t>s$,
\[
X_t=
\begin{cases}
b & \text{if } b\in L_t(X_{t-1}),\\[4pt]
\text{a sample from }U_t(X_{t-1},\cdot) & \text{otherwise,}
\end{cases}
\]
after which the process stays at $b$ forever.

Define also a pure canonical walk $Z_t$ started from the same state and time:
\[
Z_s:=X_s,
\qquad
Z_t\sim U_t(Z_{t-1},\cdot)
\quad\text{for }t>s.
\]

We couple $X$ and $Z$ on the same probability space as follows. As long as the actual process has not yet hit $b$, if at round $t$ we have $b\notin L_t(X_{t-1})$, then necessarily $X_{t-1}=Z_{t-1}$, and we sample a common transition
\[
Y_t\sim U_t(X_{t-1},\cdot)=U_t(Z_{t-1},\cdot)
\]
and set
\[
X_t=Z_t=Y_t.
\]
If instead $b\in L_t(X_{t-1})$, then the actual navigation rule sets $X_t=b$, while $Z_t$ is sampled from
\[
U_t(Z_{t-1},\cdot)=U_t(X_{t-1},\cdot).
\]
Once $X$ has hit $b$, we keep $X$ at $b$ forever and continue $Z$ as a pure canonical walk.

Under this coupling, the two processes are identical up to the first round on which the actual process hits $b$. At that round the actual process is already at $b$, whereas the pure walk may or may not be there. Therefore
\[
\tau_{\mathrm{hit}}^X(b)\le \tau_{\mathrm{hit}}^Z(b)
\qquad\text{almost surely,}
\]
where $\tau_{\mathrm{hit}}^X(b)$ and $\tau_{\mathrm{hit}}^Z(b)$ denote the hitting times of $b$ for $X$ and $Z$, respectively. It is thus enough to bound the hitting time of the pure canonical walk $Z$.

Let
\[
\mathcal F_t^Z:=\sigma(Z_s,Z_{s+1},\dots,Z_t)
\]
be the natural filtration of $Z$. For each block index $k\ge 1$, define the block start time
\[
u_k:=s+(k-1)\tau_0.
\]
Conditional on $\mathcal F_{u_k}^Z$, the future of $Z$ is the canonical walk started from the point mass at $Z_{u_k}$ and time $u_k$. By Lemma~\ref{lem:uniform-mixing} and the definition of $\tau_0$, the distribution of $Z_{u_k+\tau_0}$ is within total variation distance at most $\pi_*/2$ of $\pi$. Hence
\[
\Pr\!\left(Z_{u_k+\tau_0}=b \mid \mathcal F_{u_k}^Z\right)
\ge
\pi(b)-\frac{\pi_*}{2}
\ge
\frac{\pi(b)}{2}.
\]

Now define
\[
M:=\min\{k\ge 1: Z_{u_k+\tau_0}=b\}.
\]
The previous display shows that at the start of every block, conditional on the entire past, the probability that the block ends at $b$ is at least $\pi(b)/2$. Therefore
\[
\Pr(M\ge m+1)\le \left(1-\frac{\pi(b)}{2}\right)^m
\qquad\text{for every }m\ge 0,
\]
so $M$ is stochastically dominated by a geometric random variable with success parameter $\pi(b)/2$. In particular,
\[
\mathbb E[M]\le \frac{2}{\pi(b)}.
\]

If $Z_{u_M+\tau_0}=b$, then the pure walk has certainly hit $b$ by time $u_M+\tau_0=s+M\tau_0$. Hence
\[
\tau_{\mathrm{hit}}^Z(b)\le M\tau_0
\qquad\text{almost surely,}
\]
and therefore
\[
\mathbb E[\tau_{\mathrm{hit}}^Z(b)]
\le
\tau_0\,\mathbb E[M]
\le
\frac{2\tau_0}{\pi(b)}.
\]
Using $\tau_{\mathrm{hit}}^X(b)\le \tau_{\mathrm{hit}}^Z(b)$ almost surely gives
\[
\mathbb E[\tau_{\mathrm{hit}}(b)]
=
\mathbb E[\tau_{\mathrm{hit}}^X(b)]
\le
\frac{2\tau_0}{\pi(b)}.
\]

If $b=a^\star$, then every round before the hit incurs expected regret at most $\Delta_{\max}$, so
\[
\mathbb E[R_{\mathrm{nav}}]
\le
\Delta_{\max}\,\mathbb E[\tau_{\mathrm{hit}}(a^\star)]
\le
\frac{2\Delta_{\max}\tau_0}{\pi(a^\star)}.
\]
\end{proof}

\section{Proofs of the main results for LEX}
\label{app:lex-proof}

We now combine the previous lemmas to prove the fixed-budget guarantee for LEX and the resulting expected regret bound.

\begin{proof}[Proof of Theorem~\ref{thm:lex_high_prob}]
Let
\[
T_{\exp}=m\tau_0\le T
\]
with
\[
m \ge C\,\Psi_{\mathrm{LEX}}(\delta),
\]
where $C$ is a sufficiently large universal constant to be chosen below. Since LEX uses the canonical walk during exploration, its trajectory during the first $T_{\exp}$ rounds is exactly the trajectory of $\{U_t\}$.

Define the visitation event
\[
\mathcal E_{\mathrm{vis}}
:=
\left\{
\varphi_{T_{\exp}}(a)\ge \frac{m\pi(a)}{4}\ \text{for every }a\in A
\right\}.
\]
By Lemma~\ref{lem:uniform-visitation}, there exists a universal constant $C_{\mathrm{vis}}$ such that
\[
\Pr(\mathcal E_{\mathrm{vis}})\ge 1-\frac{\delta}{2}
\]
whenever
\[
m \ge \frac{C_{\mathrm{vis}}}{\pi_*}\log\frac{2n}{\delta}.
\]
This condition is implied by the displayed lower bound on $m$ after increasing $C$ if necessary. Indeed, define
\[
g(a):=
\begin{cases}
\Delta_{\min} & \text{if } a=a^\star,\\
\Delta(a) & \text{if } a\neq a^\star.
\end{cases}
\]
Since $g(a)\le 1$ for every arm, we have
\[
\frac{1}{\pi(a)g(a)^2}\ge \frac{1}{\pi(a)}.
\]
Taking the maximum over all arms and using the definition of $\Psi_{\mathrm{LEX}}(\delta)$ gives
\[
\Psi_{\mathrm{LEX}}(\delta)\ge \frac{\log(2n/\delta)}{\pi_*}.
\]

We next control the empirical means. Because the exploration kernel of LEX does not depend on rewards, the random counts $\varphi_{T_{\exp}}(a)$ are determined entirely by the graph sequence and the walk randomness. Conditional on the event $\mathcal E_{\mathrm{vis}}$, arm $a$ has been sampled at least
\[
k_a := \frac{m\pi(a)}{4}
\]
times. Conditional on any fixed value of $\varphi_{T_{\exp}}(a)\ge k_a$, the samples from arm $a$ are independent draws from $\mathcal D(a)$, so Hoeffding's inequality gives
\[
\Pr\!\left(
|\hat\mu_{T_{\exp}}(a)-\mu(a)|\ge \eta_a
\ \middle|\
\varphi_{T_{\exp}}(a)\ge k_a
\right)
\le
2\exp\!\left(-2k_a\eta_a^2\right),
\]
where we choose
\[
\eta_a:=
\begin{cases}
\Delta_{\min}/2 & \text{if } a=a^\star,\\
\Delta(a)/2 & \text{if } a\neq a^\star.
\end{cases}
\]
With this choice,
\[
2\exp\!\left(-2k_a\eta_a^2\right)
=
2\exp\!\left(-\frac{m\pi(a)g(a)^2}{8}\right).
\]
Hence, if
\[
m \ge \frac{8}{\pi(a)g(a)^2}\log\frac{4n}{\delta},
\]
then the right-hand side is at most $\delta/(2n)$. By the lower bound on $m$, this holds simultaneously for every arm once $C$ is chosen large enough. Let
% \[
% \mathcal E_{\mathrm{conc}}
% :=
% \left\{
% |\hat\mu_{T_{\exp}}(a^\star)-\mu(a^\star)|<\frac{\Delta_{\min}}{2}
% \right\}
% \cap
% \left\{
% |\hat\mu_{T_{\exp}}(a)-\mu(a)|<\frac{\Delta(a)}{2}
% \ \text{for every }a\neq a^\star
% \right\}.
% \]
\begin{align*}
\mathcal{E}_{\mathrm{conc}} &\triangleq \left\{ |\hat{\mu}_{T_{\exp}}(a^\star) - \mu(a^\star)| < \frac{\Delta_{\min}}{2} \right\} \\
&\quad \cap \bigcap_{a \neq a^\star} \left\{ |\hat{\mu}_{T_{\exp}}(a) - \mu(a)| < \frac{\Delta(a)}{2} \right\}.
\end{align*}
A union bound over all $n$ arms yields
\[
\Pr(\mathcal E_{\mathrm{conc}}\mid \mathcal E_{\mathrm{vis}})\ge 1-\frac{\delta}{2}.
\]

On the event $\mathcal E_{\mathrm{vis}}\cap \mathcal E_{\mathrm{conc}}$, for every $a\neq a^\star$ we have
\[
\hat\mu_{T_{\exp}}(a^\star)
>
\mu(a^\star)-\frac{\Delta_{\min}}{2}
\ge
\mu(a^\star)-\frac{\Delta(a)}{2}
=
\mu(a)+\frac{\Delta(a)}{2}
>
\hat\mu_{T_{\exp}}(a).
\]
Hence $\hat a^\star=a^\star$ on this event. Using
\[
\Pr(\mathcal E_{\mathrm{vis}}\cap \mathcal E_{\mathrm{conc}})
=
\Pr(\mathcal E_{\mathrm{vis}})\Pr(\mathcal E_{\mathrm{conc}}\mid \mathcal E_{\mathrm{vis}})
\ge
\left(1-\frac{\delta}{2}\right)^2
\ge
1-\delta,
\]
we obtain
\[
\Pr(\hat a^\star=a^\star)\ge 1-\delta.
\]

Finally, the exploration regret is bounded deterministically by the exploration length:
\[
R_{\mathrm{learn}}(T)\le T_{\exp}.
\]
Substituting the chosen value of $T_{\exp}$ gives the displayed upper bound.
\end{proof}

\begin{proof}[Proof of Lemma~\ref{lem:navigation-regret}]
The lemma is exactly the special case $b=a^\star$ of Lemma~\ref{lem:navigation-time}.
\end{proof}

\begin{proof}[Proof of Corollary~\ref{cor:lex_expected}]
Choose $\delta=T^{-2}$ and define
\[
T_{\exp}^{\mathrm{nom}}:=\tau_0\left\lceil C\,\Psi_{\mathrm{LEX}}(T^{-2})\right\rceil.
\]
The algorithm uses the actual exploration budget
\[
T_{\exp}:=\min\{T,T_{\exp}^{\mathrm{nom}}\}.
\]

If $T<T_{\exp}^{\mathrm{nom}}$, then the learner never uses more than $T$ exploration rounds and the regret is bounded deterministically by
\[
R(T)\le T\le T_{\exp}^{\mathrm{nom}}.
\]
This already matches the claimed order of the learning term.

It remains to consider the case $T_{\exp}^{\mathrm{nom}}\le T$. In that case Theorem~\ref{thm:lex_high_prob} applies with budget $T_{\exp}=T_{\exp}^{\mathrm{nom}}$. Let
\[
\mathcal E:=\{\hat a^\star=a^\star\}.
\]
The theorem gives
\[
\Pr(\mathcal E)\ge 1-\frac{1}{T^2}.
\]
By Lemma~\ref{lem:navigation-regret}, on $\mathcal E$ the total regret is at most
\[
R(T)\le T_{\exp}^{\mathrm{nom}}+R_{\mathrm{nav}},
\]
and therefore
\[
\mathbb E[R(T)\mid \mathcal E]
\le
T_{\exp}^{\mathrm{nom}}+\frac{2\Delta_{\max}\tau_0}{\pi(a^\star)}.
\]
On the complement $\mathcal E^c$, the regret is always at most $T$ because rewards lie in $[0,1]$. Hence
\begin{align*}
\mathbb E[R(T)]
&=
\mathbb E[R(T)\mid \mathcal E]\Pr(\mathcal E)
+
\mathbb E[R(T)\mid \mathcal E^c]\Pr(\mathcal E^c) \\
&\le
T_{\exp}^{\mathrm{nom}}+\frac{2\Delta_{\max}\tau_0}{\pi(a^\star)} + T\cdot \frac{1}{T^2}.
\end{align*}
The final term is at most $1$. Since
\[
\Psi_{\mathrm{LEX}}(T^{-2})
=
\max\left\{
\frac{\log(2nT^2)}{\pi(a^\star)\Delta_{\min}^2},
\max_{a\neq a^\star}\frac{\log(2nT^2)}{\pi(a)\Delta(a)^2}
\right\},
\]
and $\log(2nT^2)=O(\log(2nT))$, this yields the stated bound. Since $\tau_0$ and $\pi$ are independent of $T$, the right-hand side is $O(\log T)$ plus a $T$-independent navigation term, and therefore it is $o(T)$.
\end{proof}

\section{Proofs of the main results for CB-LEX}
\label{app:cblex-proof}

This appendix proves the confidence-based result using the same structural lemmas that were already used for LEX. The new difficulty is that the exploration length is now a stopping time. The proof resolves this by comparing CB-LEX to a perpetual exploration walk that ignores the stopping rule and continues to follow the canonical kernel for the entire horizon. The actual algorithm and the perpetual walk are identical until the stopping time, so it is enough to prove that the stopping condition must have become true by a deterministic time on the perpetual walk.

\begin{proof}[Proof of Theorem~\ref{thm:cblex}]
Let
\[
m_\star
:=
\left\lceil C\,\Psi_{\mathrm{CB}}(\delta)\right\rceil
\qquad\text{and}\qquad
t_\star:=m_\star\tau_0.
\]
By assumption, $t_\star\le T$.

We first define the perpetual exploration process. Let $\widetilde X_0:=a_0$, and for every $t=1,\dots,T$ let
\[
\widetilde X_t \sim U_t(\widetilde X_{t-1},\cdot),
\qquad
\widetilde r_t\sim \mathcal D(\widetilde X_t).
\]
Thus $\{\widetilde X_t\}_{t=0}^T$ is the canonical walk run for the full horizon, regardless of whether the stopping rule has triggered. From this trajectory we define the corresponding counts, empirical means, and confidence bounds:
\[
\widetilde\varphi_t(a)
:=
\sum_{s=1}^t \mathbf 1\{\widetilde X_s=a\},
\]
\[
\widetilde{\hat\mu}_t(a)
:=
\begin{cases}
\dfrac{1}{\widetilde\varphi_t(a)}
\displaystyle\sum_{s\le t:\,\widetilde X_s=a}\widetilde r_s
& \text{if }\widetilde\varphi_t(a)\ge 1,\\[10pt]
0 & \text{if }\widetilde\varphi_t(a)=0,
\end{cases}
\]
\[
\widetilde w_t(a)
:=
\begin{cases}
1 & \text{if }\widetilde\varphi_t(a)=0,\\[6pt]
\sqrt{\dfrac{\log(4nT/\delta)}{2\widetilde\varphi_t(a)}}
& \text{if }\widetilde\varphi_t(a)\ge 1,
\end{cases}
\]
and
\[
\widetilde{\operatorname{LCB}}_t(a)
:=
\widetilde{\hat\mu}_t(a)-\widetilde w_t(a),
\qquad
\widetilde{\operatorname{UCB}}_t(a)
:=
\widetilde{\hat\mu}_t(a)+\widetilde w_t(a).
\]
Let $\widetilde b_t\in\arg\max_{a\in A}\widetilde{\hat\mu}_t(a)$ and let $\widetilde\sigma$ be the first time $t\le T$ such that
\[
\widetilde{\operatorname{LCB}}_t(\widetilde b_t)
\ge
\max_{a\neq \widetilde b_t}\widetilde{\operatorname{UCB}}_t(a),
\]
with the convention $\widetilde\sigma:=T$ if this event never occurs before the horizon.

The actual CB-LEX trajectory coincides with the perpetual exploration trajectory up to the actual stopping time $\sigma$, because both processes use the same kernel $U_t$ and the same observed rewards as long as exploration continues. Consequently, it is enough to prove that the perpetual walk satisfies $\widetilde\sigma\le t_\star$ and stops with output $a^\star$. Indeed, if the actual algorithm has not stopped before time $t_\star$, then up to time $t_\star$ it is identical to the perpetual walk, so the stopping certificate that holds for the perpetual walk by time $t_\star$ also holds for the actual algorithm. Therefore the actual stopping time also satisfies $\sigma\le t_\star$. We will therefore work with the perpetual walk from now on.

The first event is a uniform concentration bound for all empirical means up to time $T$:
\[
\mathcal E_{\mathrm{conf}}
:=
\left\{
\forall t\in\{1,\dots,T\},\ \forall a\in A:
\left|\widetilde{\hat\mu}_t(a)-\mu(a)\right|
\le \widetilde w_t(a)
\right\}.
\]
Fix an arm $a$ and a time $t$. If $\widetilde\varphi_t(a)=0$, then $\widetilde w_t(a)=1$ and
\[
\left|\widetilde{\hat\mu}_t(a)-\mu(a)\right|
=
|\mu(a)|
\le 1
=
\widetilde w_t(a),
\]
so the event holds automatically. Now condition on $\widetilde\varphi_t(a)=k\ge 1$. Conditional on this event, the $k$ rewards used in $\widetilde{\hat\mu}_t(a)$ are i.i.d.\ samples from $\mathcal D(a)$, and Hoeffding's inequality yields
% \[
% \Pr\!\left(
% \left|\widetilde{\hat\mu}_t(a)-\mu(a)\right|
% >
% \sqrt{\frac{\log(4nT/\delta)}{2k}}
% \ \middle|\
% \widetilde\varphi_t(a)=k
% \right)
% \le
% 2\exp\!\left(-2k\cdot \frac{\log(4nT/\delta)}{2k}\right)
% =
% \frac{\delta}{2nT}.
% \]
\begin{align*}
\Pr &\left( \left|\widetilde{\hat{\mu}}_t(a) - \mu(a)\right| > \sqrt{\frac{\log(4nT/\delta)}{2k}} \;\middle|\; \widetilde{\varphi}_t(a) = k \right) \\
&\quad \le 2\exp\left( -2k \cdot \frac{\log(4nT/\delta)}{2k} \right) = \frac{\delta}{2nT}.
\end{align*}
Since the bound does not depend on $k$, the same estimate holds unconditionally:
\[
\Pr\!\left(
\left|\widetilde{\hat\mu}_t(a)-\mu(a)\right|
>
\widetilde w_t(a)
\right)
\le
\frac{\delta}{2nT}.
\]
A union bound over all $a\in A$ and all $t\in\{1,\dots,T\}$ gives
\[
\Pr(\mathcal E_{\mathrm{conf}})\ge 1-\frac{\delta}{2}.
\]

The second event is a visitation bound at the deterministic time $t_\star$:
\[
\mathcal E_{\mathrm{vis}}
:=
\left\{
\widetilde\varphi_{t_\star}(a)\ge \frac{m_\star\pi(a)}{4}
\ \text{for every }a\in A
\right\}.
\]
The perpetual exploration trajectory is exactly the canonical walk, so Lemma~\ref{lem:uniform-visitation} applies. We claim that, after increasing the universal constant $C$ if necessary,
\[
\Pr(\mathcal E_{\mathrm{vis}})\ge 1-\frac{\delta}{2}.
\]
Indeed, Lemma~\ref{lem:uniform-visitation} guarantees this as soon as
\[
m_\star \ge \frac{C_{\mathrm{vis}}}{\pi_*}\log\frac{2n}{\delta},
\]
where $C_{\mathrm{vis}}$ is the universal constant from that lemma. Since every gap is at most $1$, we have
\[
\Psi_{\mathrm{CB}}(\delta)
\ge
\max\left\{
\frac{\log(4nT/\delta)}{\pi(a^\star)},
\max_{a\neq a^\star}\frac{\log(4nT/\delta)}{\pi(a)}
\right\}
\ge
\frac{\log(2n/\delta)}{\pi_*}.
\]
Thus the displayed lower bound on $m_\star$ follows from the definition of $m_\star$ once $C$ is chosen large enough.

We now show that on the event $\mathcal E_{\mathrm{conf}}\cap \mathcal E_{\mathrm{vis}}$, the perpetual stopping rule must hold at time $t_\star$. Fix any arm $a\neq a^\star$. On $\mathcal E_{\mathrm{vis}}$,
\[
\widetilde\varphi_{t_\star}(a)\ge \frac{m_\star\pi(a)}{4},
\qquad
\widetilde\varphi_{t_\star}(a^\star)\ge \frac{m_\star\pi(a^\star)}{4}.
\]
Hence
\[
\widetilde w_{t_\star}(a)
\le
\sqrt{\frac{\log(4nT/\delta)}{2(m_\star\pi(a)/4)}}
=
\sqrt{\frac{2\log(4nT/\delta)}{m_\star\pi(a)}},
\]
and similarly
\[
\widetilde w_{t_\star}(a^\star)
\le
\sqrt{\frac{2\log(4nT/\delta)}{m_\star\pi(a^\star)}}.
\]
Since
\[
m_\star \ge C\,\frac{\log(4nT/\delta)}{\pi(a)\Delta(a)^2}
\]
for every $a\neq a^\star$, choosing $C\ge 32$ gives
\[
\widetilde w_{t_\star}(a)\le \frac{\Delta(a)}{4}.
\]
Likewise, since
\[
m_\star \ge C\,\frac{\log(4nT/\delta)}{\pi(a^\star)\Delta_{\min}^2},
\]
the same choice $C\ge 32$ yields
\[
\widetilde w_{t_\star}(a^\star)\le \frac{\Delta_{\min}}{4}\le \frac{\Delta(a)}{4}.
\]

Now use the confidence event. For the optimal arm,
\[
\widetilde{\operatorname{LCB}}_{t_\star}(a^\star)
=
\widetilde{\hat\mu}_{t_\star}(a^\star)-\widetilde w_{t_\star}(a^\star)
\ge
\mu(a^\star)-2\widetilde w_{t_\star}(a^\star).
\]
For the suboptimal arm $a$,
\[
\widetilde{\operatorname{UCB}}_{t_\star}(a)
=
\widetilde{\hat\mu}_{t_\star}(a)+\widetilde w_{t_\star}(a)
\le
\mu(a)+2\widetilde w_{t_\star}(a).
\]
Subtracting gives
\[
\widetilde{\operatorname{LCB}}_{t_\star}(a^\star)
-
\widetilde{\operatorname{UCB}}_{t_\star}(a)
\ge
\Delta(a)-2\widetilde w_{t_\star}(a^\star)-2\widetilde w_{t_\star}(a)
\ge
0.
\]
Since this holds for every $a\neq a^\star$, we have
\[
\widetilde{\operatorname{LCB}}_{t_\star}(a^\star)
\ge
\max_{a\neq a^\star}\widetilde{\operatorname{UCB}}_{t_\star}(a).
\]
Because every confidence width is strictly positive, we also have
\[
\widetilde{\operatorname{UCB}}_{t_\star}(a)
=
\widetilde{\hat\mu}_{t_\star}(a)+\widetilde w_{t_\star}(a)
>
\widetilde{\hat\mu}_{t_\star}(a)
\qquad\text{for every }a\in A.
\]
Therefore, for each $a\neq a^\star$,
\[
\widetilde{\hat\mu}_{t_\star}(a^\star)
\ge
\widetilde{\operatorname{LCB}}_{t_\star}(a^\star)
\ge
\widetilde{\operatorname{UCB}}_{t_\star}(a)
>
\widetilde{\hat\mu}_{t_\star}(a).
\]
Thus $a^\star$ is the unique empirical leader at time $t_\star$. In particular, if
\[
\widetilde b_{t_\star}\in \arg\max_{a\in A}\widetilde{\hat\mu}_{t_\star}(a),
\]
then necessarily $\widetilde b_{t_\star}=a^\star$, and the perpetual stopping condition holds at time $t_\star$. Hence
\[
\widetilde\sigma\le t_\star.
\]

We next show that every stopping arm is correct on $\mathcal E_{\mathrm{conf}}$. Suppose the perpetual process stops at some time $t$ with arm $\widetilde b_t$. Then
\[
\widetilde{\operatorname{LCB}}_t(\widetilde b_t)
\ge
\max_{a\neq \widetilde b_t}\widetilde{\operatorname{UCB}}_t(a).
\]
For every competitor $a\neq \widetilde b_t$, the confidence event implies
\[
\mu(\widetilde b_t)\ge \widetilde{\operatorname{LCB}}_t(\widetilde b_t)
\qquad\text{and}\qquad
\mu(a)\le \widetilde{\operatorname{UCB}}_t(a).
\]
Hence
\[
\mu(\widetilde b_t)\ge \mu(a)
\qquad\text{for every }a\neq \widetilde b_t.
\]
Because the optimal arm is unique, this forces $\widetilde b_t=a^\star$. In particular, on $\mathcal E_{\mathrm{conf}}\cap \mathcal E_{\mathrm{vis}}$ the perpetual process stops by time $t_\star$ and stops with the correct arm.

Finally we transfer this conclusion back to the actual algorithm. As observed at the beginning of the proof, the actual and perpetual trajectories are identical up to the actual stopping time. On the event $\mathcal E_{\mathrm{conf}}\cap \mathcal E_{\mathrm{vis}}$, the perpetual stopping time is at most $t_\star$, and at that time the stopping condition is satisfied with leader $a^\star$. Therefore the actual algorithm must also stop by time $t_\star$, and when it stops it must return $a^\star$. We have proved that
\[
\Pr\!\left(
\sigma\le t_\star
\text{ and }
\hat a^\star=a^\star
\right)
\ge
1-\delta.
\]

Since each exploration round incurs regret at most $1$,
\[
R_{\mathrm{learn}}(T)
=
\sum_{t=1}^{\sigma}\bigl(\mu(a^\star)-r_t\bigr)
\le \sigma
\le t_\star
=
T_{\mathrm{CB}}^{\mathrm{nom}}
\]
on the same event. This completes the proof.
\end{proof}

\begin{proof}[Proof of Corollary~\ref{cor:cblex_expected}]
Define
\[
T_{\mathrm{CB}}^{\mathrm{nom}}
:=
\tau_0\left\lceil C\,\Psi_{\mathrm{CB}}(T^{-2})\right\rceil.
\]

If $T<T_{\mathrm{CB}}^{\mathrm{nom}}$, then the regret is bounded deterministically by
\[
R(T)\le T\le T_{\mathrm{CB}}^{\mathrm{nom}},
\]
so the claimed estimate is immediate.

Assume now that $T_{\mathrm{CB}}^{\mathrm{nom}}\le T$. Apply Theorem~\ref{thm:cblex} with $\delta=T^{-2}$ and let
\[
\mathcal E
:=
\left\{
\sigma\le T_{\mathrm{CB}}^{\mathrm{nom}}
\text{ and }
\hat a^\star=a^\star
\right\}.
\]
Then
\[
\Pr(\mathcal E)\ge 1-\frac{1}{T^2}.
\]
On $\mathcal E$, the exploration phase lasts at most $T_{\mathrm{CB}}^{\mathrm{nom}}$ rounds and the committed arm is the true optimum. Lemma~\ref{lem:navigation-regret} therefore gives
\[
\mathbb E[R(T)\mid \mathcal E]
\le
T_{\mathrm{CB}}^{\mathrm{nom}}
+
\frac{2\Delta_{\max}\tau_0}{\pi(a^\star)}.
\]
On the complement $\mathcal E^c$, the regret is always at most $T$. Hence
\begin{align*}
\mathbb E[R(T)]
&=
\mathbb E[R(T)\mid \mathcal E]\Pr(\mathcal E)
+
\mathbb E[R(T)\mid \mathcal E^c]\Pr(\mathcal E^c)\\
&\le
T_{\mathrm{CB}}^{\mathrm{nom}}
+
\frac{2\Delta_{\max}\tau_0}{\pi(a^\star)}
+
T\cdot \frac{1}{T^2}.
\end{align*}
The last term is at most $1$, which yields
\[
\mathbb E[R(T)]
\le
T_{\mathrm{CB}}^{\mathrm{nom}}
+
\frac{2\Delta_{\max}\tau_0}{\pi(a^\star)}
+
1.
\]

To obtain the displayed asymptotic form, note that
\[
\Psi_{\mathrm{CB}}(T^{-2})
=
\max\left\{
\frac{\log(4nT^3)}{\pi(a^\star)\Delta_{\min}^2},
\max_{a\neq a^\star}\frac{\log(4nT^3)}{\pi(a)\Delta(a)^2}
\right\},
\]
and $\log(4nT^3)=O(\log(nT))$. Because $\tau_0$ and $\pi$ are independent of the horizon, the right-hand side is logarithmic in $T$ plus the $T$-independent navigation term, and is therefore $o(T)$.
\end{proof}

\section{Proofs of the main results for RALEX}
\label{app:ralex-proof}

This appendix extends the confidence-based analysis to the reward-aware exploration kernel. Two features of RALEX require new work. The first is statistical. Because the exploration kernel depends on the past rewards, the visit counts are themselves reward-dependent, so the conditioning argument used for CB-LEX is no longer valid. The second is structural. The perpetual RALEX walk no longer coincides with the canonical walk, and it need not preserve the common stationary distribution from Definition~\ref{def:mixing-sequence}. The proof therefore isolates two ingredients. It first proves a uniform confidence lemma that remains valid under adaptive sampling by representing the rewards through arm-wise reward tables. It then uses only the canonical floor from \eqref{eq:ralex-floor}. Inside each block of length $\tau_0$, there is a fixed probability that the process follows the canonical walk for the entire block, and this is enough to recover a thinned visitation bound.

\begin{lemma}[Uniform confidence under adaptive sampling]
\label{lem:adaptive-confidence}
Fix a horizon $T$ and a confidence level $\delta\in(0,1)$. For each arm $a\in A$, let
\[
Y_{a,1},Y_{a,2},\dots
\]
be i.i.d. random variables with law $\mathcal D(a)$, and assume that the collections corresponding to different arms are mutually independent and independent of the graph process and all algorithmic randomization. Consider any exploration rule that, at each round $t$, selects an arm $X_t$ using the sigma-field generated by the past observations, the current graph information available to the learner, and arbitrary auxiliary randomization independent of the reward tables. Let
\[
\varphi_t(a):=\sum_{s=1}^t \mathbf 1\{X_s=a\}
\]
be the number of times arm $a$ has been pulled by time $t$, and define
\[
\hat\mu_t(a)
:=
\begin{cases}
\dfrac{1}{\varphi_t(a)}\displaystyle\sum_{i=1}^{\varphi_t(a)} Y_{a,i} & \text{if }\varphi_t(a)\ge 1,\\[10pt]
0 & \text{if }\varphi_t(a)=0,
\end{cases}
\]
with widths
\[
w_t(a)
:=
\begin{cases}
1 & \text{if }\varphi_t(a)=0,\\[6pt]
\sqrt{\dfrac{\log(4nT/\delta)}{2\varphi_t(a)}} & \text{if }\varphi_t(a)\ge 1.
\end{cases}
\]
Then the event
\[
\mathcal E_{\mathrm{adapt}}
:=
\left\{
\forall t\in\{1,\dots,T\},\ \forall a\in A:
\left|\hat\mu_t(a)-\mu(a)\right|\le w_t(a)
\right\}
\]
satisfies
\[
\Pr(\mathcal E_{\mathrm{adapt}})\ge 1-\frac{\delta}{2}.
\]
\end{lemma}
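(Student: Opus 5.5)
The plan is to use the reward-table (stack-of-rewards) construction. Under it, the empirical mean of arm $a$ after $k$ pulls is a deterministic function of the first $k$ entries of the table $(Y_{a,i})_{i\ge 1}$, regardless of how the pulls were scheduled. The adaptivity of the sampling rule is then absorbed entirely into the random index $\varphi_t(a)$, and we can replace a statement about random times by one about fixed sample sizes. Concretely, for each arm $a$ and each integer $k\ge 1$ define the table average $\bar Y_{a,k}:=\frac1k\sum_{i=1}^k Y_{a,i}$ and the deterministic width $w(k):=\sqrt{\log(4nT/\delta)/(2k)}$. Since $\varphi_t(a)\le t\le T$, on the event $\{\varphi_t(a)=k\}$ we have $\hat\mu_t(a)=\bar Y_{a,k}$ and $w_t(a)=w(k)$, with $k\in\{1,\dots,T\}$.

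Define the table-level good event
\[
\mathcal E_{\mathrm{tab}}:=\bigcap_{a\in A}\bigcap_{k=1}^{T}\Bigl\{|\bar Y_{a,k}-\mu(a)|\le w(k)\Bigr\}.
\]
The key inclusion to verify is $\mathcal E_{\mathrm{tab}}\subseteq\mathcal E_{\mathrm{adapt}}$. Fix $t$ and $a$. If $\varphi_t(a)=0$, then $\hat\mu_t(a)=0$ and $w_t(a)=1$. Because $\mu(a)\in[0,1]$, this gives $|0-\mu(a)|\le 1$ deterministically, exactly as in the CB-LEX proof. If $\varphi_t(a)=k\ge 1$, then the inequality is the $(a,k)$ member of $\mathcal E_{\mathrm{tab}}$. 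This step uses no independence between the selection rule and the rewards; it only uses the fact that the $i$-th pull of $a$ reveals $Y_{a,i}$. That is exactly the modelling assumption built into the reward-table representation, under which the selection rule may depend arbitrarily on already revealed entries.

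It remains to bound $\Pr(\mathcal E_{\mathrm{tab}}^c)$. For fixed $a$ and fixed deterministic $k$, the variables $Y_{a,1},\dots,Y_{a,k}$ are i.i.d. in $[0,1]$ with mean $\mu(a)$. Hoeffding's inequality therefore gives
\[
\Pr\bigl(|\bar Y_{a,k}-\mu(a)|>w(k)\bigr)\le 2\exp\bigl(-2k\,w(k)^2\bigr)=\frac{\delta}{2nT}.
\]
A union bound over $n$ arms and $T$ values of $k$ yields $\Pr(\mathcal E_{\mathrm{tab}}^c)\le \delta/2$. By the inclusion, $\Pr(\mathcal E_{\mathrm{adapt}})\ge 1-\delta/2$.

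The main point needing care is the justification that the actual algorithm's rewards can be realized through these tables, that is, that $r_t=Y_{X_t,\varphi_t(X_t)}$ produces the same joint law as the original model. I would argue this by induction on $t$. Conditional on the history and on $X_t=a$, the entry $Y_{a,\varphi_{t-1}(a)+1}$ has not yet been revealed. Since the selection used only revealed entries, graph information, and independent randomization, this entry is independent of everything so far and has law $\mathcal D(a)$. This matches $r_t\sim\mathcal D(a)$ independently of the past.

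Note that the union bound is taken over the deterministic index $k$ rather than over the random time $t$. This is what makes the argument immune to adaptivity, and it is also why the bound is the same $T$-fold union as in the CB-LEX proof.
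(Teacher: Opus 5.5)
Your proposal is correct and follows the paper's proof essentially step for step: Hoeffding plus a union bound over the deterministic pairs $(a,k)$ with $k\le T$ on the reward tables, then the inclusion $\mathcal E_{\mathrm{table}}\subseteq\mathcal E_{\mathrm{adapt}}$ via $\hat\mu_t(a)=\bar Y_{a,\varphi_t(a)}$, with the $\varphi_t(a)=0$ case handled trivially. Your added inductive check that the table representation reproduces the model's joint law is something the paper simply asserts; it is only needed when applying the lemma to the actual RALEX trajectory, because the lemma as stated already defines $\hat\mu_t$ through the tables.
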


\begin{proof}
For every arm $a\in A$ and every integer $k\in\{1,\dots,T\}$, define
\[
\bar Y_{a,k}:=\frac{1}{k}\sum_{i=1}^k Y_{a,i}.
\]
Hoeffding's inequality gives
% \[
% \Pr\!\left(
% \left|\bar Y_{a,k}-\mu(a)\right|>
% \sqrt{\frac{\log(4nT/\delta)}{2k}}
% \right)
% \le
% 2\exp\!\left(-2k\cdot \frac{\log(4nT/\delta)}{2k}\right)
% =
% \frac{\delta}{2nT}.
% \]
\begin{align*}
\Pr &\left( \left|\bar{Y}_{a,k} - \mu(a)\right| > \sqrt{\frac{\log(4 n T / \delta)}{2 k}} \right) \\
&\quad \le 2 \exp \left(-2 k \cdot \frac{\log (4 n T / \delta)}{2 k}\right) = \frac{\delta}{2 n T}.
\end{align*}
Taking a union bound over all $a\in A$ and all $k\in\{1,\dots,T\}$ shows that the event
\[
\mathcal E_{\mathrm{table}}
:=
\left\{
\forall a\in A,\ \forall k\in\{1,\dots,T\}:
\left|\bar Y_{a,k}-\mu(a)\right|
\le
\sqrt{\frac{\log(4nT/\delta)}{2k}}
\right\}
\]
has probability at least $1-\delta/2$.

We claim that $\mathcal E_{\mathrm{table}}\subseteq \mathcal E_{\mathrm{adapt}}$. Fix $t\le T$ and $a\in A$. If $\varphi_t(a)=0$, then by definition $\hat\mu_t(a)=0$ and $w_t(a)=1$, so
\[
\left|\hat\mu_t(a)-\mu(a)\right|=
\mu(a)
\le 1=w_t(a).
\]
If instead $\varphi_t(a)=k\ge 1$, then the reward observed on the $i$-th pull of arm $a$ is $Y_{a,i}$, and therefore
\[
\hat\mu_t(a)=\frac{1}{k}\sum_{i=1}^k Y_{a,i}=\bar Y_{a,k}.
\]
On $\mathcal E_{\mathrm{table}}$ this implies
\[
\left|\hat\mu_t(a)-\mu(a)\right|
=
\left|\bar Y_{a,k}-\mu(a)\right|
\le
\sqrt{\frac{\log(4nT/\delta)}{2k}}
=
w_t(a).
\]
Since $t$ and $a$ were arbitrary, $\mathcal E_{\mathrm{table}}\subseteq \mathcal E_{\mathrm{adapt}}$, and hence
\[
\Pr(\mathcal E_{\mathrm{adapt}})
\ge
\Pr(\mathcal E_{\mathrm{table}})
\ge
1-\frac{\delta}{2}.
\]
\end{proof}

\begin{lemma}[Visitation under a canonical floor]
\label{lem:ralex-visitation}
Assume that the realized graph sequence is $(W,\rho,\gamma,\pi)$-mixing for the canonical walk. Let $\{K_t\}_{t\ge 1}$ be any adapted sequence of local kernels such that for every round $t$, every current node $u\in A$, and every $v\in A$,
\[
K_t(u,v)\ge \varepsilon_0 U_t(u,v)
\]
almost surely, where $\varepsilon_0\in(0,1]$. Let $X_t$ be the resulting trajectory, define
\[
\tau_0
:=
W+\left\lceil \frac{2}{\rho\gamma}\log\frac{1}{\pi_*}\right\rceil,
\qquad
\vartheta_0:=\frac{1}{2}\,\varepsilon_0^{\tau_0},
\]
and for every integer $m\ge 1$ and every arm $a\in A$ set
\[
N_m^{K}(a):=\sum_{j=1}^m \mathbf 1\{X_{j\tau_0}=a\}.
\]
There exists a universal constant $C_{\mathrm{vis}}^{\mathrm{RA}}>0$ such that if
\[
m \ge \frac{C_{\mathrm{vis}}^{\mathrm{RA}}}{\vartheta_0\pi_*}\log\frac{n}{\delta},
\]
then with probability at least $1-\delta$,
\[
N_m^{K}(a)\ge \frac{m\vartheta_0\pi(a)}{2}
\qquad\text{for every }a\in A.
\]
Consequently,
\[
\varphi_{m\tau_0}(a)\ge \frac{m\vartheta_0\pi(a)}{2}
\qquad\text{for every }a\in A
\]
with probability at least $1-\delta$.
\end{lemma}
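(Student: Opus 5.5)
The plan is to mimic the proof of Lemma~\ref{lem:uniform-visitation}, replacing the exact canonical-walk endpoint bound by a lower bound obtained from the floor inequality. Fix an arm $a$, set $\mathcal F_j$ to be the full history (past positions, rewards, graph information, auxiliary randomization) up to time $j\tau_0$, and let $Y_j:=\mathbf 1\{X_{j\tau_0}=a\}$. The goal is the conditional bound
\[
\Pr\bigl(X_{j\tau_0}=a \,\big|\, \mathcal F_{j-1}\bigr)\ \ge\ \varepsilon_0^{\tau_0}\bigl(\pi(a)-\tfrac{\pi_*}{2}\bigr)\ \ge\ \vartheta_0\,\pi(a)
\]
almost surely, after which Lemma~\ref{lem:adapted-chernoff} with $p=\vartheta_0\pi(a)$ yields $\Pr(N_m^K(a)\le m\vartheta_0\pi(a)/2)\le \exp(-\tfrac{1-\log 2}{2}m\vartheta_0\pi(a))$. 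Since $\pi(a)\ge\pi_*$, this is at most $\delta/n$ once $m\ge \frac{2}{1-\log 2}\cdot\frac{1}{\vartheta_0\pi_*}\log\frac n\delta$, and a union bound over $a\in A$ finishes, with $\varphi_{m\tau_0}(a)\ge N_m^K(a)$ as before.

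For the conditional bound I will use a decomposition of each step. Since $K_t(u,\cdot)\ge\varepsilon_0U_t(u,\cdot)$ and both are probability vectors on $L_t(u)$, write $K_t(u,\cdot)=\varepsilon_0U_t(u,\cdot)+(1-\varepsilon_0)Q_t(u,\cdot)$ with $Q_t$ a (history-dependent) probability kernel when $\varepsilon_0<1$; the case $\varepsilon_0=1$ is trivial. Realize each step by first flipping an independent coin $\xi_t\sim\mathrm{Bernoulli}(\varepsilon_0)$ and, if $\xi_t=1$, moving by $U_t$, otherwise by $Q_t$. This gives the correct law of the trajectory, so probabilities about $X$ may be computed in this enlarged construction. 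Let $\mathcal C_j$ be the event that $\xi_t=1$ for every $t$ in block $j$, i.e.\ $t\in\{(j-1)\tau_0+1,\dots,j\tau_0\}$. The coins are independent of $\mathcal F_{j-1}$, so $\Pr(\mathcal C_j\mid\mathcal F_{j-1})=\varepsilon_0^{\tau_0}$.

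The key point is that on $\mathcal C_j$ the block trajectory is driven only by the kernels $U_t$, and these depend only on the oblivious graph sequence, not on rewards or on $Q$. Hence, conditional on $\mathcal F_{j-1}$ and $\mathcal C_j$, the law of $X_{j\tau_0}$ is $\delta_{X_{(j-1)\tau_0}}P_{(j-1)\tau_0,\tau_0}$. I will handle the graph carefully by conditioning on the whole realized graph sequence, which is legitimate because it is oblivious and the mixing property holds for the realization. Lemma~\ref{lem:uniform-mixing} with $\varepsilon=\pi_*/2$ applies because $\tau_0\ge\tau_{\mathrm{mix}}(\pi_*/2)$: indeed $\frac{2}{\rho\gamma}\log\frac1{\pi_*}\ge\frac1{\rho\gamma}\log\frac{1}{\pi_*^{3/2}}$. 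It gives endpoint mass at $a$ of at least $\pi(a)-\pi_*/2\ge\pi(a)/2$. Therefore $\Pr(X_{j\tau_0}=a\mid\mathcal F_{j-1})\ge \Pr(\mathcal C_j\mid\mathcal F_{j-1})\cdot\pi(a)/2=\vartheta_0\pi(a)$.

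The main obstacle is making the coin decomposition rigorous under adaptivity. $Q_t$ depends on rewards observed inside the block, so I must verify that on $\mathcal C_j$ those rewards never influence the moves, which holds because $U_t$ ignores them. I also need the coins to be chosen independent of everything else, so that the conditional probability of $\mathcal C_j$ is exactly $\varepsilon_0^{\tau_0}$ even though the $\varepsilon_t$ used by RALEX may exceed $\varepsilon_0$. The decomposition uses only the floor $\varepsilon_0$, which avoids this issue. Finally, the filtration for Lemma~\ref{lem:adapted-chernoff} should be the enlarged one including the coins, with $Y_j$ measurable with respect to it.
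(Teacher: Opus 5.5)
Your proposal is correct and follows essentially the same route as the paper. The shared steps are: the residual-kernel decomposition with independent $\mathrm{Bernoulli}(\varepsilon_0)$ coins, the all-canonical block event of conditional probability $\varepsilon_0^{\tau_0}$ on which Lemma~\ref{lem:uniform-mixing} gives endpoint mass at least $\pi(a)/2$, and then Lemma~\ref{lem:adapted-chernoff} with $p=\vartheta_0\pi(a)$ followed by a union bound. Your explicit check that $\tau_0\ge\tau_{\mathrm{mix}}(\pi_*/2)$, and your remark that the filtration must include the coins, are details the paper leaves implicit.
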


\begin{proof}
If $\varepsilon_0=1$, then $K_t=U_t$ almost surely for every $t$, the process coincides with the canonical walk, and the conclusion follows from Lemma~\ref{lem:uniform-visitation} with $\vartheta_0=\tfrac12$. We therefore assume in the rest of the proof that $\varepsilon_0<1$.

Fix an arm $a\in A$. Let $\mathcal F_j$ be the full history of the process up to time $j\tau_0$, including states, rewards, and all internal randomization used to define the kernels. For convenience write
\[
\nu:=(j-1)\tau_0
\]
for the start of block $j$.

We first prove a one-block lower bound. Condition on $\mathcal F_{j-1}$, so that the current state $X_\nu$ is fixed. Because
\[
K_{\nu+r}(u,v)\ge \varepsilon_0 U_{\nu+r}(u,v)
\qquad\text{for every }u,v\in A
\]
almost surely, there exists, for each step $r=1,\dots,\tau_0$, a residual kernel $R_{\nu+r}$ such that
\[
K_{\nu+r}(u,\cdot)
=
\varepsilon_0 U_{\nu+r}(u,\cdot)
+
(1-\varepsilon_0)R_{\nu+r}(u,\cdot)
; \quad \forall u\in A.
\]
Now enlarge the probability space inside the block by introducing independent Bernoulli variables
\[
B_{\nu+1},\dots,B_{\nu+\tau_0}
\sim \mathrm{Bernoulli}(\varepsilon_0),
\]
independent of the past and independent of the reward draws inside the block. Generate the next state at time $\nu+r$ by sampling from $U_{\nu+r}$ when $B_{\nu+r}=1$ and from $R_{\nu+r}$ when $B_{\nu+r}=0$. This construction has the correct marginal law $K_{\nu+r}$ at each step.

Let
\[
\mathcal B_j:=\{B_{\nu+1}=\cdots=B_{\nu+\tau_0}=1\}.
\]
On $\mathcal B_j$, every transition in the block is sampled from the canonical kernel. Even though the kernels are data-dependent, the canonical component itself depends only on the current state and the current graph, so conditioning on $\mathcal B_j$ forces the state trajectory on this block to follow the canonical walk from $X_\nu$ for exactly $\tau_0$ steps. Since the canonical walk is $(W,\rho,\gamma,\pi)$-mixing, Lemma~\ref{lem:uniform-mixing} with $\varepsilon=\pi_*/2$ gives
\[
\Pr\!\left(X_{\nu+\tau_0}=a\mid \mathcal F_{j-1},\mathcal B_j\right)
\ge
\pi(a)-\frac{\pi_*}{2}
\ge
\frac{\pi(a)}{2}.
\]
Because $\Pr(\mathcal B_j\mid \mathcal F_{j-1})=\varepsilon_0^{\tau_0}$, we conclude that
\[
\Pr\!\left(X_{j\tau_0}=a\mid \mathcal F_{j-1}\right)
\ge
\varepsilon_0^{\tau_0}\cdot \frac{\pi(a)}{2}
=
\vartheta_0\pi(a)
\]
almost surely for every $j\ge 1$.

We now apply Lemma~\ref{lem:adapted-chernoff} to the adapted Bernoulli variables
\[
Y_j:=\mathbf 1\{X_{j\tau_0}=a\},
\qquad j=1,\dots,m,
\]
with success parameter $p=\vartheta_0\pi(a)$. This yields
\[
\Pr\!\left(N_m^{K}(a)\le \frac{m\vartheta_0\pi(a)}{2}\right)
\le
\exp\!\left(-\frac{1-\log 2}{2}\,m\vartheta_0\pi(a)\right).
\]
Since $\pi(a)\ge \pi_*$, the right-hand side is at most $\delta/n$ whenever
\[
m
\ge
\frac{2}{1-\log 2}\cdot \frac{1}{\vartheta_0\pi_*}\log\frac{n}{\delta}.
\]
Thus the conclusion holds for every arm simultaneously after a union bound over $a\in A$. The final claim follows from the deterministic inequality $N_m^{K}(a)\le \varphi_{m\tau_0}(a)$.
\end{proof}

\begin{proof}[Proof of Theorem~\ref{thm:ralex}]
Let
\[
m_\star
:=
\left\lceil C\,\Psi_{\mathrm{RA}}(\delta)\right\rceil
\qquad\text{and}\qquad
t_\star:=m_\star\tau_0.
\]
By assumption, $t_\star\le T$.

We first define the perpetual reward-aware exploration process. Let $\widetilde X_0:=a_0$, and for every $t=1,\dots,T$ construct $\widetilde X_t$ recursively as follows. From the perpetual history up to time $t-1$, compute the widths $\widetilde w_{t-1}(a)$, the optimistic scores
\[
\widetilde\xi_{t-1}(a):=\widetilde{\hat\mu}_{t-1}(a)+\widetilde w_{t-1}(a),
\]
the local proposal $\widetilde S_t$ from \eqref{eq:ralex-softmax}, the mixture weight $\widetilde\varepsilon_t$, and the kernel
\[
\widetilde K_t
:=
\widetilde\varepsilon_t U_t+(1-\widetilde\varepsilon_t)\widetilde S_t.
\]
Then sample
\[
\widetilde X_t\sim \widetilde K_t(\widetilde X_{t-1},\cdot),
\qquad
\widetilde r_t\sim \mathcal D(\widetilde X_t).
\]
This process ignores the stopping rule and explores for the full horizon. Let $\widetilde\varphi_t(a)$, $\widetilde{\hat\mu}_t(a)$, $\widetilde w_t(a)$, $\widetilde{\operatorname{LCB}}_t(a)$, $\widetilde{\operatorname{UCB}}_t(a)$, and $\widetilde b_t$ be the associated counts, empirical means, widths, confidence bounds, and empirical leaders. Let $\widetilde\sigma$ be the first time at which the stopping rule \eqref{eq:cblex-stop} holds for the perpetual process, with the convention $\widetilde\sigma:=T$ if it never triggers.

The actual RALEX trajectory coincides with the perpetual one up to the actual stopping time $\sigma$, because both processes use the same reward-aware kernel computed from the same history as long as exploration continues. Therefore, if we show that the perpetual process must stop by time $t_\star$ and must stop with the correct arm, then the actual algorithm must also satisfy
\[
\sigma\le t_\star
\qquad\text{and}\qquad
\hat a^\star=a^\star.
\]
Indeed, if the actual algorithm has not stopped before time $t_\star$, then up to time $t_\star$ it is identical to the perpetual process, so the stopping certificate that holds for the perpetual process by time $t_\star$ also holds for the actual algorithm. We may therefore work entirely with the perpetual process.

The first event is a uniform confidence event for the perpetual reward-aware exploration process:
\[
\widetilde{\mathcal E}_{\mathrm{conf}}
:=
\left\{
\forall t\in\{1,\dots,T\},\ \forall a\in A:
\left|\widetilde{\hat\mu}_t(a)-\mu(a)\right|\le \widetilde w_t(a)
\right\}.
\]
Unlike CB-LEX, this event cannot be obtained by conditioning on the sample counts, because the counts themselves depend on past rewards through the kernel $\widetilde K_t$. Lemma~\ref{lem:adaptive-confidence} applies directly to the perpetual process and therefore gives
\[
\Pr\!\left(\widetilde{\mathcal E}_{\mathrm{conf}}\right)\ge 1-\frac{\delta}{2}.
\]

The second event is a visitation bound at the deterministic time $t_\star$:
\[
\widetilde{\mathcal E}_{\mathrm{vis}}
:=
\left\{
\widetilde\varphi_{t_\star}(a)\ge \frac{m_\star\vartheta_0\pi(a)}{2}
\ \text{for every }a\in A
\right\}.
\]
Because $\widetilde K_t\ge \varepsilon_0 U_t$ pointwise for every $t$, Lemma~\ref{lem:ralex-visitation} applies to the perpetual trajectory. We claim that, after increasing the universal constant $C$ if necessary,
\[
\Pr\!\left(\widetilde{\mathcal E}_{\mathrm{vis}}\right)\ge 1-\frac{\delta}{2}.
\]
Indeed, Lemma~\ref{lem:ralex-visitation} guarantees this as soon as
\[
m_\star \ge \frac{C_{\mathrm{vis}}^{\mathrm{RA}}}{\vartheta_0\pi_*}\log\frac{2n}{\delta}.
\]
Since every gap is at most $1$, we have
% \[
% \Psi_{\mathrm{RA}}(\delta)
% \ge
% \max\left\{
% \frac{\log(4nT/\delta)}{\vartheta_0\pi(a^\star)},
% \max_{a\neq a^\star}\frac{\log(4nT/\delta)}{\vartheta_0\pi(a)}
% \right\}
% \ge
% \frac{\log(2n/\delta)}{\vartheta_0\pi_*},
% \]
\begin{align*}
\Psi_{\mathrm{RA}}(\delta) &\ge \max \left\{ \frac{\log(4nT/\delta)}{\vartheta_0\pi(a^\star)}, \max_{a \neq a^\star} \frac{\log(4nT/\delta)}{\vartheta_0\pi(a)} \right\} \\
&\ge \frac{\log(2n/\delta)}{\vartheta_0\pi_*}.
\end{align*}

so the displayed lower bound on $m_\star$ follows from the definition of $m_\star$ once $C$ is sufficiently large.

We now show that on $\widetilde{\mathcal E}_{\mathrm{conf}}\cap \widetilde{\mathcal E}_{\mathrm{vis}}$, the perpetual stopping rule must hold at time $t_\star$. Fix any arm $a\neq a^\star$. On $\widetilde{\mathcal E}_{\mathrm{vis}}$,
\[
\widetilde\varphi_{t_\star}(a)\ge \frac{m_\star\vartheta_0\pi(a)}{2},
\qquad
\widetilde\varphi_{t_\star}(a^\star)\ge \frac{m_\star\vartheta_0\pi(a^\star)}{2}.
\]
Hence
\[
\widetilde w_{t_\star}(a)
\le
\sqrt{\frac{\log(4nT/\delta)}{m_\star\vartheta_0\pi(a)}},
\qquad
\widetilde w_{t_\star}(a^\star)
\le
\sqrt{\frac{\log(4nT/\delta)}{m_\star\vartheta_0\pi(a^\star)}}.
\]
Since
\[
m_\star
\ge
C\,\frac{\log(4nT/\delta)}{\vartheta_0\pi(a)\Delta(a)^2}
\qquad\text{for every }a\neq a^\star,
\]
choosing $C\ge 16$ gives
\[
\widetilde w_{t_\star}(a)\le \frac{\Delta(a)}{4}.
\]
Likewise,
\[
m_\star
\ge
C\,\frac{\log(4nT/\delta)}{\vartheta_0\pi(a^\star)\Delta_{\min}^2}
\]
and the same choice $C\ge 16$ yields
\[
\widetilde w_{t_\star}(a^\star)\le \frac{\Delta_{\min}}{4}\le \frac{\Delta(a)}{4}.
\]

On the confidence event, these inequalities imply the same separation as in the proof of Theorem~\ref{thm:cblex}. For the optimal arm,
\[
\widetilde{\operatorname{LCB}}_{t_\star}(a^\star)
=
\widetilde{\hat\mu}_{t_\star}(a^\star)-\widetilde w_{t_\star}(a^\star)
\ge
\mu(a^\star)-2\widetilde w_{t_\star}(a^\star).
\]
For the suboptimal arm $a$,
\[
\widetilde{\operatorname{UCB}}_{t_\star}(a)
=
\widetilde{\hat\mu}_{t_\star}(a)+\widetilde w_{t_\star}(a)
\le
\mu(a)+2\widetilde w_{t_\star}(a).
\]
Subtracting gives
\[
\widetilde{\operatorname{LCB}}_{t_\star}(a^\star)
-
\widetilde{\operatorname{UCB}}_{t_\star}(a)
\ge
\Delta(a)-2\widetilde w_{t_\star}(a^\star)-2\widetilde w_{t_\star}(a)
\ge 0.
\]
Thus
\[
\widetilde{\operatorname{LCB}}_{t_\star}(a^\star)
\ge
\max_{a\neq a^\star}\widetilde{\operatorname{UCB}}_{t_\star}(a).
\]
Because every confidence width is strictly positive, we also have
\[
\widetilde{\operatorname{UCB}}_{t_\star}(a)
=
\widetilde{\hat\mu}_{t_\star}(a)+\widetilde w_{t_\star}(a)
>
\widetilde{\hat\mu}_{t_\star}(a)
\qquad\text{for every }a\in A.
\]
Hence for every $a\neq a^\star$,
\[
\widetilde{\hat\mu}_{t_\star}(a^\star)
\ge
\widetilde{\operatorname{LCB}}_{t_\star}(a^\star)
\ge
\widetilde{\operatorname{UCB}}_{t_\star}(a)
>
\widetilde{\hat\mu}_{t_\star}(a).
\]
Therefore $a^\star$ is the unique empirical leader at time $t_\star$, so the perpetual stopping condition holds at time $t_\star$ and $\widetilde\sigma\le t_\star$.

As in the proof of Theorem~\ref{thm:cblex}, the confidence event alone implies that every arm returned by the perpetual stopping rule is correct. Indeed, if the perpetual process stops at time $t$ with arm $\widetilde b_t$, then
\[
\widetilde{\operatorname{LCB}}_t(\widetilde b_t)
\ge
\max_{a\neq \widetilde b_t}\widetilde{\operatorname{UCB}}_t(a).
\]
On $\widetilde{\mathcal E}_{\mathrm{conf}}$, this implies
\[
\mu(\widetilde b_t)\ge \mu(a)
\qquad\text{for every }a\neq \widetilde b_t,
\]
and uniqueness of the optimum forces $\widetilde b_t=a^\star$. We conclude that on $\widetilde{\mathcal E}_{\mathrm{conf}}\cap \widetilde{\mathcal E}_{\mathrm{vis}}$,
\[
\widetilde\sigma\le t_\star
\qquad\text{and}\qquad
\widetilde b_{\widetilde\sigma}=a^\star.
\]
Transferring this back to the actual algorithm gives
\[
\Pr\!\left(
\sigma\le t_\star
\text{ and }
\hat a^\star=a^\star
\right)
\ge 1-\delta.
\]
Since each exploration round incurs regret at most $1$,
\[
R_{\mathrm{learn}}(T)
=
\sum_{t=1}^{\sigma}\bigl(\mu(a^\star)-r_t\bigr)
\le \sigma
\le t_\star
=
T_{\mathrm{RA}}^{\mathrm{nom}}
\]
on the same event. This completes the proof.
\end{proof}

\begin{proof}[Proof of Corollary~\ref{cor:ralex_expected}]
Define
\[
T_{\mathrm{RA}}^{\mathrm{nom}}
:=
\tau_0\left\lceil C\,\Psi_{\mathrm{RA}}(T^{-2})\right\rceil.
\]

If $T<T_{\mathrm{RA}}^{\mathrm{nom}}$, then the regret is bounded deterministically by
\[
R(T)\le T\le T_{\mathrm{RA}}^{\mathrm{nom}},
\]
so the claimed estimate is immediate.

Assume now that $T_{\mathrm{RA}}^{\mathrm{nom}}\le T$. Apply Theorem~\ref{thm:ralex} with $\delta=T^{-2}$ and let
\[
\mathcal E
:=
\left\{
\sigma\le T_{\mathrm{RA}}^{\mathrm{nom}}
\text{ and }
\hat a^\star=a^\star
\right\}.
\]
Then
\[
\Pr(\mathcal E)\ge 1-\frac{1}{T^2}.
\]
On $\mathcal E$, the exploration phase lasts at most $T_{\mathrm{RA}}^{\mathrm{nom}}$ rounds and the committed arm is the true optimum. Lemma~\ref{lem:navigation-regret} therefore gives
\[
\mathbb E[R(T)\mid \mathcal E]
\le
T_{\mathrm{RA}}^{\mathrm{nom}}
+
\frac{2\Delta_{\max}\tau_0}{\pi(a^\star)}.
\]
On the complement $\mathcal E^c$, the regret is at most $T$. Hence
\begin{align*}
\mathbb E[R(T)]
&=
\mathbb E[R(T)\mid \mathcal E]\Pr(\mathcal E)
+
\mathbb E[R(T)\mid \mathcal E^c]\Pr(\mathcal E^c)\\
&\le
T_{\mathrm{RA}}^{\mathrm{nom}}
+
\frac{2\Delta_{\max}\tau_0}{\pi(a^\star)}
+
T\cdot \frac{1}{T^2}.
\end{align*}
The last term is at most $1$, which yields
\[
\mathbb E[R(T)]
\le
T_{\mathrm{RA}}^{\mathrm{nom}}
+
\frac{2\Delta_{\max}\tau_0}{\pi(a^\star)}
+1.
\]
The displayed asymptotic form follows from the definition of $\Psi_{\mathrm{RA}}(T^{-2})$ and the fact that $\log(4nT^3)=O(\log(nT))$.
\end{proof}

\begin{theorem}[Formal learnability under common-stationary sliding-window mixing]
\label{thm:formal-existence}
Let the realized graph sequence be $(W,\rho,\gamma,\pi)$-mixing for the canonical walk. Then the process is learnable.
\end{theorem}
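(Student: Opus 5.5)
Learnability here means the existence of a local policy with $\lim_{T\to\infty}\mathbb E[R(T)]/T=0$. The plan is to exhibit CB-LEX, run with $\delta=T^{-2}$, as such a policy and read sublinearity off Corollary~\ref{cor:cblex_expected}. LEX would also work, but its budget needs gap knowledge, so we prefer CB-LEX. One point needs care: the policy must be well defined from the learner's information alone. CB-LEX uses only $T$, $n$, local neighborhoods and observed rewards. It never uses $W,\rho,\gamma,\pi$ or the gaps, so these enter only the analysis.

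First, fix the realized $(W,\rho,\gamma,\pi)$-mixing sequence. The quantities $\tau_0$, $\pi_*$, $\pi(a^\star)$, $\Delta_{\min}$ and $\Delta_{\max}$ are then constants independent of $T$. Second, define $T_{\mathrm{nom}}(T)=\tau_0\lceil C\Psi_{\mathrm{CB}}(T^{-2})\rceil$. We have $\log(4nT/\delta)=\log(4nT^3)$, so $T_{\mathrm{nom}}(T)=O(\log T)$ and hence $T_{\mathrm{nom}}(T)\le T$ for all sufficiently large $T$. For such $T$, Theorem~\ref{thm:cblex} gives $\sigma\le T_{\mathrm{nom}}$ and $\hat a^\star=a^\star$ with probability at least $1-T^{-2}$. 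Third, on this event we bound the navigation cost by Lemma~\ref{lem:navigation-regret} (through Lemma~\ref{lem:navigation-time}). Its proof applies from an arbitrary start time and start node, so it remains valid at the random time $\sigma$ because the condition is shift-invariant. This yields $\mathbb E[R(T)]\le T_{\mathrm{nom}}(T)+2\Delta_{\max}\tau_0/\pi(a^\star)+1$, exactly the bound of Corollary~\ref{cor:cblex_expected}.

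Dividing by $T$, the right-hand side is $O(\log T)/T+O(1)/T\to 0$, which proves learnability. Finitely many small $T$ do not matter for the limit.

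The main obstacle is making the navigation step rigorous at a random start time. Lemma~\ref{lem:navigation-time} is stated for a deterministic start, so I would condition on $\mathcal F_\sigma$. The graph process is oblivious and independent of the learner, and Lemma~\ref{lem:uniform-mixing} holds for every start time $s$ and every state $x$. Hence, conditionally on $\sigma=s$ and $X_s=x$, the block argument applies verbatim. Averaging over $(s,x)$ then gives the unconditional bound. A secondary point is the "almost surely" reading of the mixing hypothesis for random graph processes. I would handle it by conditioning on the realized graph sequence, which is independent of the rewards and of the walk randomness.
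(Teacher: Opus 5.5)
Your argument is correct, but it uses a different witness policy than the paper. The paper's proof is a one-line appeal to Corollary~\ref{cor:lex_expected}: LEX has $\mathbb E[R(T)]=O(\tau_0\Psi_{\mathrm{LEX}}(T^{-2}))+O(\tau_0/\pi(a^\star))=O(\log T)$, hence $o(T)$. Because LEX commits at the deterministic time $T_{\exp}$, its navigation phase starts at a fixed time, so Lemma~\ref{lem:navigation-time} applies with no stopping-time argument. The price is that $T_{\exp}$ is tuned using the gaps, $\pi$ and $(W,\rho,\gamma)$, so the paper only exhibits, for each instance, some instance-dependent policy.

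You instead use CB-LEX through Theorem~\ref{thm:cblex} and Corollary~\ref{cor:cblex_expected}. This requires the extra step you identify: applying the navigation bound from the random time $\sigma$. Your treatment of it is sound. You condition on the realized graph sequence and on $\mathcal F_\sigma$, and note that $\{\sigma\le T_{\mathrm{nom}},\ \hat a^\star=a^\star\}\in\mathcal F_\sigma$. Lemma~\ref{lem:uniform-mixing} is uniform over start times and states, and the post-$\sigma$ walk uses fresh randomness, so the block argument goes through.

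What your route buys is a single policy, depending only on $n$ and $T$, that is sublinear simultaneously for every reward instance and every mixing sequence. This is the stronger quantifier order, and the difference is not cosmetic. If instance-dependent policies are allowed, Lemma~\ref{lem:navigation-time} alone already gives learnability: the trivial policy that hard-codes $a^\star$ and navigates to it has $\mathbb E[R(T)]\le 2\Delta_{\max}\tau_0/\pi(a^\star)$.
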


\begin{proof}
Corollary~\ref{cor:lex_expected} shows that LEX satisfies
% \[
% \mathbb E[R(T)]
% =
% O\!\left(
% \tau_0
% \max\left\{
% \frac{\log(2nT)}{\pi(a^\star)\Delta_{\min}^2},
% \max_{a\neq a^\star}\frac{\log(2nT)}{\pi(a)\Delta(a)^2}
% \right\}
% \right)
% +
% O\!\left(\frac{\tau_0}{\pi(a^\star)}\right),
% \]
\begin{align*}
\mathbb{E}[R(T)] &= O\left( \tau_0 \max\left\{ \frac{\log(2nT)}{\pi(a^\star)\Delta_{\min}^2}, \max_{a\neq a^\star}\frac{\log(2nT)}{\pi(a)\Delta(a)^2} \right\} \right) \\
&\quad + O\left(\frac{\tau_0}{\pi(a^\star)}\right).
\end{align*}
which is $o(T)$. Hence there exists a local policy with sublinear expected regret, namely LEX.
\end{proof}

\section{When reward awareness yields a provable gain}
\label{app:ralex-gain-proof}

This appendix upgrades the RALEX analysis from a safety theorem to a genuine improvement theorem. The key point is that the online RALEX kernel is updated inside each block. Because of this within-block adaptation, the proof cannot be written only in terms of a proposal kernel evaluated against a fixed stationary distribution. The correct object is the conditional probability that a whole post-burn-in block ends at a given arm. The theorem in the main text is stated directly on this quantity.

We work with the perpetual RALEX exploration process introduced in Appendix~\ref{app:ralex-proof}. All objects carrying tildes below refer to that perpetual process. Fix a deterministic burn-in time $t_{\mathrm{gain}}\in\{0,\dots,T\}$ and define
\[
\tau_1
:=
1+\tau_{\mathrm{mix}}\!\left(\frac{\pi_*}{4}\right),
\qquad
\vartheta_1:=\frac{1}{2}\,\varepsilon_0^{\tau_1}.
\]
For each block index $j\ge 1$, write
\[
s_j:=t_{\mathrm{gain}}+(j-1)\tau_1,
\qquad
\widetilde{\mathcal G}_{j-1}:=\sigma\!\left(\widetilde{\mathcal F}_{s_j}\right).
\]

\begin{lemma}[Safe one-block endpoint bound]
\label{lem:ralex-safe-one-block}
Assume the conditions of Theorem~\ref{thm:ralex}. Then for every arm $a\in A$, every time $s\in\{0,\dots,T-\tau_1\}$, and every realization of $\widetilde{\mathcal F}_s$,
\[
\Pr\!\left(\widetilde X_{s+\tau_1}=a\mid \widetilde{\mathcal F}_s\right)
\ge
\vartheta_1\pi(a).
\]
\end{lemma}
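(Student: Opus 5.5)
The argument repeats the block-coupling step from the proof of Lemma~\ref{lem:ralex-visitation}, now over a block of length $\tau_1$ started at an arbitrary deterministic time $s$ instead of a multiple of $\tau_0$. We fix $s\in\{0,\dots,T-\tau_1\}$ and condition on $\widetilde{\mathcal F}_s$, so that $\widetilde X_s$ is determined. For each step $r=1,\dots,\tau_1$, the kernel $\widetilde K_{s+r}$ may depend on rewards observed inside the block, but it always satisfies the canonical-floor inequality \eqref{eq:ralex-floor}. Since $\widetilde\varepsilon_{s+r}\ge\varepsilon_0$, we can write
\[
\widetilde K_{s+r}(u,\cdot)=\varepsilon_0U_{s+r}(u,\cdot)+(1-\varepsilon_0)R_{s+r}(u,\cdot)
\]
with a history-dependent residual kernel $R_{s+r}$; when $\varepsilon_0=1$ the residual term is absent.

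Next we enlarge the probability space with i.i.d.\ $\mathrm{Bernoulli}(\varepsilon_0)$ coins $B_{s+1},\dots,B_{s+\tau_1}$, independent of $\widetilde{\mathcal F}_s$, of the reward tables, and of the walk randomness. At step $r$ we sample from $U_{s+r}$ when $B_{s+r}=1$ and from $R_{s+r}$ otherwise. This reproduces the law $\widetilde K_{s+r}$ at every step, so the endpoint distribution of the block is unchanged. Let $\mathcal B:=\{B_{s+1}=\cdots=B_{s+\tau_1}=1\}$, which has conditional probability $\varepsilon_0^{\tau_1}$ given $\widetilde{\mathcal F}_s$. On $\mathcal B$, each transition uses only the current state and the current (oblivious) graph. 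Hence, conditional on $\widetilde{\mathcal F}_s$ and $\mathcal B$, the state path over the block is exactly the canonical walk started at $\widetilde X_s$ at time $s$, and the in-block rewards have no effect on it.

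Lemma~\ref{lem:uniform-mixing} is uniform over start times and initial states. It therefore applies for every $m\ge\tau_{\mathrm{mix}}(\pi_*/4)$, and in particular for $m=\tau_1=1+\tau_{\mathrm{mix}}(\pi_*/4)$, since the total-variation bound $\frac{1}{2\sqrt{\pi_*}}e^{-\rho\gamma(m-W)}$ is monotone in $m$. This gives
\[
\Pr\bigl(\widetilde X_{s+\tau_1}=a\mid\widetilde{\mathcal F}_s,\mathcal B\bigr)\ge\pi(a)-\pi_*/4\ge\tfrac34\pi(a)\ge\tfrac12\pi(a).
\]
Multiplying by $\Pr(\mathcal B\mid\widetilde{\mathcal F}_s)=\varepsilon_0^{\tau_1}$ and dropping the nonnegative contribution of $\mathcal B^c$ yields
\[
\Pr\bigl(\widetilde X_{s+\tau_1}=a\mid\widetilde{\mathcal F}_s\bigr)\ge\tfrac12\varepsilon_0^{\tau_1}\pi(a)=\vartheta_1\pi(a).
\]

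The main obstacle is making the conditioning on $\mathcal B$ rigorous. The residual kernels and the weights $\widetilde\varepsilon_t$ depend on rewards drawn inside the block, so we must check that the event $\mathcal B$ is independent of everything used to build the kernels, and that on $\mathcal B$ the state path does not depend on those rewards at all. To handle this, we would generate the block from a common source of randomness: the coins, the reward tables, and uniform variables used for the $U$ and $R$ draws. On $\mathcal B$, the state path is a measurable function of $\widetilde X_s$, the graph, and the uniforms used for the $U$-draws alone, and these are independent of the coins. This is the same justification used in Lemma~\ref{lem:ralex-visitation}, and we would spell it out in this form.
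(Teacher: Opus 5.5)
Your proposal is correct and follows the paper's proof essentially step for step: the same canonical-floor decomposition into $\varepsilon_0 U$ plus a residual kernel, the same all-ones Bernoulli event of conditional probability $\varepsilon_0^{\tau_1}$, and the same application of Lemma~\ref{lem:uniform-mixing} giving $\pi(a)-\pi_*/4\ge\frac12\pi(a)$ before multiplying through. The remaining differences are matters of detail: you make explicit the common-randomness justification of conditioning on the coin event, and the monotonicity in $m$ needed because the block has $\tau_1=1+\tau_{\mathrm{mix}}(\pi_*/4)$ steps, both of which the paper leaves implicit, and you absorb the $\varepsilon_0=1$ case rather than treating it separately.
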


\begin{proof}
If $\varepsilon_0=1$, then the perpetual RALEX process is exactly the canonical walk, $\vartheta_1=\tfrac12$, and Lemma~\ref{lem:uniform-mixing} with $\varepsilon=\pi_*/4$ gives
\[
\Pr\!\left(\widetilde X_{s+\tau_1}=a\mid \widetilde{\mathcal F}_s\right)
\ge
\pi(a)-\frac{\pi_*}{4}
\ge
\frac{\pi(a)}{2}
=
\vartheta_1\pi(a).
\]
We therefore assume in the rest of the proof that $\varepsilon_0<1$.

Fix $s$ and condition on $\widetilde{\mathcal F}_s$. For each round $r=s+1,\dots,s+\tau_1$, decompose the perpetual RALEX kernel as
\[
\widetilde K_r(u,\cdot)
=
\varepsilon_0 U_r(u,\cdot)
+
(1-\varepsilon_0)\widetilde R_r(u,\cdot),
\]
where
\[
\widetilde R_r(u,\cdot)
:=
\frac{\widetilde K_r(u,\cdot)-\varepsilon_0 U_r(u,\cdot)}{1-\varepsilon_0}
\]
is the residual kernel. Introduce independent Bernoulli variables
\[
B_{s+1},\dots,B_{s+\tau_1}\sim \mathrm{Bernoulli}(\varepsilon_0),
\]
independent of the past and of all reward draws inside the block, and generate the trajectory on these rounds by using the canonical kernel $U_r$ whenever $B_r=1$ and the residual kernel $\widetilde R_r$ whenever $B_r=0$. This coupling has the correct marginal law $\widetilde K_r$ at every round.

Let
\[
\mathcal D_s:=\{B_{s+1}=\cdots=B_{s+\tau_1}=1\}.
\]
Then
\[
\Pr(\mathcal D_s\mid \widetilde{\mathcal F}_s)=\varepsilon_0^{\tau_1}.
\]
On $\mathcal D_s$, the process follows the canonical walk for all $\tau_1$ rounds of the block. Since
\[
\tau_1-1=\tau_{\mathrm{mix}}\!\left(\frac{\pi_*}{4}\right),
\]
Lemma~\ref{lem:uniform-mixing} gives
\[
\left\|\mathcal L\!\left(\widetilde X_{s+\tau_1}\mid \widetilde{\mathcal F}_s,\mathcal D_s\right)-\pi\right\|_{\mathrm{TV}}
\le
\frac{\pi_*}{4}.
\]
Therefore
\[
\Pr\!\left(\widetilde X_{s+\tau_1}=a\mid \widetilde{\mathcal F}_s,\mathcal D_s\right)
\ge
\pi(a)-\frac{\pi_*}{4}
\ge
\frac{3}{4}\pi(a)
\ge
\frac{1}{2}\pi(a).
\]
Multiplying by $\Pr(\mathcal D_s\mid \widetilde{\mathcal F}_s)=\varepsilon_0^{\tau_1}$ yields
\[
\Pr\!\left(\widetilde X_{s+\tau_1}=a\mid \widetilde{\mathcal F}_s\right)
\ge
\frac{1}{2}\,\varepsilon_0^{\tau_1}\pi(a)
=
\vartheta_1\pi(a).
\]
This proves the claim.
\end{proof}

The next lemma shows how a one-step endpoint advantage after a canonical warm-up propagates to a block-end gain. The statement is written directly in terms of the actual conditional endpoint probability. This avoids treating the last-step RALEX kernel as if it were fixed at block start.

\begin{lemma}[Warm-up endpoint gain implies a block-end gain]
\label{lem:ralex-warmup-gain}
Assume the conditions of Theorem~\ref{thm:ralex}. Fix an arm $a\in A$ and a time $s\in\{0,\dots,T-\tau_1\}$. Let
\[
\mathcal C_s:=\{B_{s+1}=\cdots=B_{s+\tau_1-1}=1\}
\]
be the event that the first $\tau_1-1$ rounds of the block use the canonical component in the coupling above. Assume that for some $\beta(a)\ge 0$,
\[
\mathbb E\!\left[
\widetilde K_{s+\tau_1}(\widetilde X_{s+\tau_1-1},a)
\ \middle|\
\widetilde{\mathcal F}_s,\mathcal C_s
\right]
\ge
\left(\frac{\varepsilon_0}{2}+\beta(a)\right)\pi(a)
\]
almost surely. Then
\[
\Pr\!\left(\widetilde X_{s+\tau_1}=a\mid \widetilde{\mathcal F}_s\right)
\ge
\Bigl(\vartheta_1+\varepsilon_0^{\tau_1-1}\beta(a)\Bigr)\pi(a).
\]
\end{lemma}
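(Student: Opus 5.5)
We restrict to the event $\mathcal C_s$ and discard its complement, since all probabilities are nonnegative. Using the Bernoulli coupling from the proof of Lemma~\ref{lem:ralex-safe-one-block}, we have $\Pr(\mathcal C_s\mid\widetilde{\mathcal F}_s)=\varepsilon_0^{\tau_1-1}$, because the variables $B_{s+1},\dots,B_{s+\tau_1-1}$ are independent of the past. Hence
\[
\Pr\!\left(\widetilde X_{s+\tau_1}=a\mid \widetilde{\mathcal F}_s\right)\ge \varepsilon_0^{\tau_1-1}\,\Pr\!\left(\widetilde X_{s+\tau_1}=a\mid \widetilde{\mathcal F}_s,\mathcal C_s\right).
\]
It therefore suffices to show that the conditional probability on the right is at least $(\varepsilon_0/2+\beta(a))\pi(a)$. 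Multiplying by $\varepsilon_0^{\tau_1-1}$ then gives $\tfrac12\varepsilon_0^{\tau_1}\pi(a)+\varepsilon_0^{\tau_1-1}\beta(a)\pi(a)=(\vartheta_1+\varepsilon_0^{\tau_1-1}\beta(a))\pi(a)$, which is the claim.

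For the last step of the block we condition further on $\widetilde{\mathcal F}_{s+\tau_1-1}$. This sigma-field includes the state $\widetilde X_{s+\tau_1-1}$, all rewards observed up to that time, and the Bernoulli variables of the block up to step $\tau_1-1$. The kernel $\widetilde K_{s+\tau_1}$ is computed from the history up to $s+\tau_1-1$ and the current graph, so it is $\widetilde{\mathcal F}_{s+\tau_1-1}$-measurable. Moreover, $\widetilde X_{s+\tau_1}$ is drawn from $\widetilde K_{s+\tau_1}(\widetilde X_{s+\tau_1-1},\cdot)$, and this holds after marginalizing over $B_{s+\tau_1}$, which is independent of $\widetilde{\mathcal F}_{s+\tau_1-1}$ and of $\mathcal C_s$. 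Since $\mathcal C_s\in\widetilde{\mathcal F}_{s+\tau_1-1}$, we get
\[
\Pr\!\left(\widetilde X_{s+\tau_1}=a\mid \widetilde{\mathcal F}_{s+\tau_1-1}\right)=\widetilde K_{s+\tau_1}(\widetilde X_{s+\tau_1-1},a)
\]
on $\mathcal C_s$. Applying the tower property conditionally on $(\widetilde{\mathcal F}_s,\mathcal C_s)$ yields
\[
\Pr\!\left(\widetilde X_{s+\tau_1}=a\mid \widetilde{\mathcal F}_s,\mathcal C_s\right)=\mathbb E\!\left[\widetilde K_{s+\tau_1}(\widetilde X_{s+\tau_1-1},a)\mid \widetilde{\mathcal F}_s,\mathcal C_s\right],
\]
and the hypothesis bounds this expectation from below by $(\varepsilon_0/2+\beta(a))\pi(a)$.

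The main obstacle is making this tower step rigorous, for two reasons. First, the kernel at the last step depends on rewards collected inside the block, so it is not fixed at time $s$. Second, conditioning on $\mathcal C_s$ must not distort the law of the final transition. I would handle both by building the enlarged probability space explicitly. The Bernoulli variables are drawn independently of the rewards and the graph, and the filtration $\widetilde{\mathcal F}_r$ is augmented with $B_{s+1},\dots,B_r$. I would then check two facts: that $\mathcal C_s\in\widetilde{\mathcal F}_{s+\tau_1-1}$, and that $B_{s+\tau_1}$ is independent of this augmented sigma-field. Both follow from the construction.

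Note that the hypothesis with $\beta(a)=0$ is consistent with the safe bound of Lemma~\ref{lem:ralex-safe-one-block}. On $\mathcal C_s$, the state $\widetilde X_{s+\tau_1-1}$ is $\pi_*/4$-close to $\pi$ in total variation by Lemma~\ref{lem:uniform-mixing}. The canonical floor $\widetilde K\ge\varepsilon_0U$ and the stationarity of $\pi$ for $U$ then give endpoint mass roughly $\varepsilon_0\pi(a)(1-1/4)\ge\varepsilon_0\pi(a)/2$. This consistency check is not needed for the proof itself.
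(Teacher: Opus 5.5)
Your proposal is correct and follows the paper's proof: you restrict to $\mathcal C_s$, use $\Pr(\mathcal C_s\mid\widetilde{\mathcal F}_s)=\varepsilon_0^{\tau_1-1}$, and identify the conditional endpoint probability on $\mathcal C_s$ with the conditional expectation of the last-step kernel $\widetilde K_{s+\tau_1}(\widetilde X_{s+\tau_1-1},a)$. The only difference is that you spell out the tower-property step (augmenting the filtration with the Bernoulli variables and using the independence of $B_{s+\tau_1}$), which the paper compresses into a single ``therefore''.
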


\begin{proof}
Fix $s$ and condition on $\widetilde{\mathcal F}_s$. By construction,
\[
\Pr(\mathcal C_s\mid \widetilde{\mathcal F}_s)=\varepsilon_0^{\tau_1-1}.
\]
On $\mathcal C_s$, the first $\tau_1-1$ rounds of the block are canonical, while the last round is taken with the actual RALEX kernel. Therefore
\begin{align*}
\Pr\!\left(\widetilde X_{s+\tau_1}=a\mid \widetilde{\mathcal F}_s\right)
&\ge
\Pr(\mathcal C_s\mid \widetilde{\mathcal F}_s)
\,\mathbb E\!\left[
\widetilde K_{s+\tau_1}(\widetilde X_{s+\tau_1-1},a)
\ \middle|\
\widetilde{\mathcal F}_s,\mathcal C_s
\right]\\
&\ge
\varepsilon_0^{\tau_1-1}
\left(\frac{\varepsilon_0}{2}+\beta(a)\right)\pi(a)\\
&=
\left(\frac{1}{2}\,\varepsilon_0^{\tau_1}+\varepsilon_0^{\tau_1-1}\beta(a)\right)\pi(a)\\
&=
\Bigl(\vartheta_1+\varepsilon_0^{\tau_1-1}\beta(a)\Bigr)\pi(a).
\end{align*}
This is the desired bound.
\end{proof}

\begin{proof}[Proof of Theorem~\ref{thm:ralex_gain}]
Let
\[
m_\star:=\left\lceil C\,\Psi_{\mathrm{RA}}^{\mathrm{gain}}(\delta)\right\rceil
\qquad\text{and}\qquad
t_\star:=t_{\mathrm{gain}}+m_\star\tau_1.
\]
By assumption, $t_\star\le T$.

As in the proof of Theorem~\ref{thm:ralex}, it is enough to work with the perpetual RALEX exploration process. If the actual algorithm has not stopped before time $t_\star$, then up to time $t_\star$ it is identical to the perpetual process. Therefore it is enough to prove that the perpetual process must satisfy
\[
\widetilde\sigma\le t_\star
\qquad\text{and}\qquad
\widetilde b_{\widetilde\sigma}=a^\star
\]
with probability at least $1-\delta$.

The first ingredient is the adaptive confidence event
\[
\widetilde{\mathcal E}_{\mathrm{conf}}
:=
\left\{
\forall t\in\{1,\dots,T\},\ \forall a\in A:
\left|\widetilde{\hat\mu}_t(a)-\mu(a)\right|\le \widetilde w_t(a)
\right\}.
\]
Lemma~\ref{lem:adaptive-confidence} gives
\[
\Pr\!\left(\widetilde{\mathcal E}_{\mathrm{conf}}\right)\ge 1-\frac{\delta}{2}.
\]

For each arm $a\in A$ and each block index $j\in\{1,\dots,m_\star\}$, define the block-end indicator
\[
Y_{j,a}:=\mathbf 1\{\widetilde X_{s_j+\tau_1}=a\}.
\]
These variables are adapted to the block filtration $\{\widetilde{\mathcal G}_j\}_{j\ge 0}$.

Fix an arm $a\in A$. If $a\in B$, then the theorem assumption gives
\[
\Pr\!\left(Y_{j,a}=1\mid \widetilde{\mathcal G}_{j-1}\right)
\ge
\kappa_{\mathrm{gain}}(a)
=
\kappa(a)
\qquad\text{for every }j\in\{1,\dots,m_\star\}.
\]
If $a\notin B$, Lemma~\ref{lem:ralex-safe-one-block} with $s=s_j$ gives
\[
\Pr\!\left(Y_{j,a}=1\mid \widetilde{\mathcal G}_{j-1}\right)
\ge
\vartheta_1\pi(a)
=
\kappa(a)
\qquad\text{for every }j\in\{1,\dots,m_\star\}.
\]
Thus, for every arm $a\in A$ and every block $j$, we have the unified bound
\[
\Pr\!\left(Y_{j,a}=1\mid \widetilde{\mathcal G}_{j-1}\right)
\ge
\kappa(a).
\]
Applying Lemma~\ref{lem:adapted-chernoff} to the adapted Bernoulli sequence $\{Y_{j,a}\}_{j=1}^{m_\star}$ yields
\[
\Pr\!\left(
\sum_{j=1}^{m_\star}Y_{j,a}\le \frac{m_\star\kappa(a)}{2}
\right)
\le
\exp\!\left(-\frac{1-\log 2}{2}\,m_\star\kappa(a)\right).
\]
Because $\Delta(a)\le 1$ for every arm and because of the definition of $\Psi_{\mathrm{RA}}^{\mathrm{gain}}(\delta)$, choosing the universal constant $C$ large enough makes the right-hand side at most $\delta/(2n)$. A union bound over all arms gives an event $\widetilde{\mathcal E}_{\mathrm{vis}}^{\mathrm{gain}}$ of probability at least $1-\delta/2$ on which
\[
\sum_{j=1}^{m_\star}Y_{j,a}\ge \frac{m_\star\kappa(a)}{2}
\qquad\text{for every }a\in A.
\]
Since the endpoint count is dominated by the total number of visits,
\[
\widetilde\varphi_{t_\star}(a)\ge \sum_{j=1}^{m_\star}Y_{j,a}
\qquad\text{for every }a\in A,
\]
so on $\widetilde{\mathcal E}_{\mathrm{vis}}^{\mathrm{gain}}$,
\[
\widetilde\varphi_{t_\star}(a)\ge \frac{m_\star\kappa(a)}{2}
\qquad\text{for every }a\in A.
\]

We now prove that the perpetual stopping rule must hold at time $t_\star$ on the event
\[
\widetilde{\mathcal E}_{\mathrm{conf}}\cap \widetilde{\mathcal E}_{\mathrm{vis}}^{\mathrm{gain}}.
\]
Fix any suboptimal arm $a\neq a^\star$. The visit lower bound gives
\[
\widetilde w_{t_\star}(a)
\le
\sqrt{\frac{\log(4nT/\delta)}{m_\star\kappa(a)}}.
\]
Because
\[
m_\star\ge C\,\frac{\log(4nT/\delta)}{\kappa(a)\Delta(a)^2},
\]
choosing $C\ge 16$ yields
\[
\widetilde w_{t_\star}(a)\le \frac{\Delta(a)}{4}.
\]
For the optimal arm,
\[
\widetilde w_{t_\star}(a^\star)
\le
\sqrt{\frac{\log(4nT/\delta)}{m_\star\kappa(a^\star)}}
\le
\frac{\Delta_{\min}}{4}
\le
\frac{\Delta(a)}{4},
\]
again once $C\ge 16$.

On $\widetilde{\mathcal E}_{\mathrm{conf}}$, we therefore have
\[
\widetilde{\operatorname{LCB}}_{t_\star}(a^\star)
=
\widetilde{\hat\mu}_{t_\star}(a^\star)-\widetilde w_{t_\star}(a^\star)
\ge
\mu(a^\star)-2\widetilde w_{t_\star}(a^\star)
\]
and
\[
\widetilde{\operatorname{UCB}}_{t_\star}(a)
=
\widetilde{\hat\mu}_{t_\star}(a)+\widetilde w_{t_\star}(a)
\le
\mu(a)+2\widetilde w_{t_\star}(a).
\]
Subtracting gives
\[
\widetilde{\operatorname{LCB}}_{t_\star}(a^\star)-\widetilde{\operatorname{UCB}}_{t_\star}(a)
\ge
\Delta(a)-2\widetilde w_{t_\star}(a^\star)-2\widetilde w_{t_\star}(a)
\ge 0.
\]
Thus
\[
\widetilde{\operatorname{LCB}}_{t_\star}(a^\star)
\ge
\max_{a\neq a^\star}\widetilde{\operatorname{UCB}}_{t_\star}(a).
\]
Every confidence width is strictly positive at time $t_\star$, so for each $a\neq a^\star$,
\[
\widetilde{\hat\mu}_{t_\star}(a^\star)
\ge
\widetilde{\operatorname{LCB}}_{t_\star}(a^\star)
\ge
\widetilde{\operatorname{UCB}}_{t_\star}(a)
>
\widetilde{\hat\mu}_{t_\star}(a).
\]
Therefore $a^\star$ is the unique empirical leader at time $t_\star$, and the perpetual stopping rule must have triggered by time $t_\star$ with the correct output.

Finally,
\[
\Pr\!\left(
\widetilde{\mathcal E}_{\mathrm{conf}}\cap \widetilde{\mathcal E}_{\mathrm{vis}}^{\mathrm{gain}}
\right)
\ge 1-\delta.
\]
Hence
\[
\Pr\!\left(
\widetilde\sigma\le t_\star
\text{ and }
\widetilde b_{\widetilde\sigma}=a^\star
\right)
\ge 1-\delta.
\]
Since the actual RALEX process agrees with the perpetual one up to the actual stopping time, the same conclusion holds for the actual process:
\[
\Pr\!\left(
\sigma\le t_\star
\text{ and }
\hat a^\star=a^\star
\right)
\ge 1-\delta.
\]
This proves the theorem. The final bound
\[
R_{\mathrm{learn}}(T)\le T_{\mathrm{gain}}^{\mathrm{nom}}
\]
is immediate from $\Delta_{\max}\le 1$.
\end{proof}

\begin{proof}[Proof of Corollary~\ref{cor:ralex_gain}]
If $\mathcal H\subseteq B$ and
\[
\kappa_{\mathrm{gain}}(a)\ge (1+\zeta)\vartheta_1\pi(a)
\qquad\text{for every }a\in \mathcal H,
\]
then for each arm in the bottleneck set,
\[
\kappa(a)\ge (1+\zeta)\vartheta_1\pi(a).
\]
Since the bottleneck set contains the optimal arm and every suboptimal arm that maximizes the leading complexity term, substituting this bound into the definition of $\Psi_{\mathrm{RA}}^{\mathrm{gain}}(\delta)$ yields
\[
\Psi_{\mathrm{RA}}^{\mathrm{gain}}(\delta)
=
O\!\left(
\frac{\log(4nT/\delta)}{(1+\zeta)\vartheta_1}
\max\left\{
\frac{1}{\pi(a^\star)\Delta_{\min}^2},
\max_{a\neq a^\star}\frac{1}{\pi(a)\Delta(a)^2}
\right\}
\right).
\]
Multiplying by $\tau_1$ and adding the burn-in term $t_{\mathrm{gain}}$ gives the stated bound on $T_{\mathrm{gain}}^{\mathrm{nom}}$. The explicit improvement factor is exactly $(1+\zeta)^{-1}$.
\end{proof}

\end{refsection}

\end{document}